\documentclass{article} 
\usepackage{iclr2027_conference, times}

\makeatletter
\def\@notice{}

\def\ps@headings{%
  \def\@oddhead{}%
  \def\@evenhead{}%
  \def\@oddfoot{\hfil\thepage\hfil}
  \def\@evenfoot{\hfil\thepage\hfil}%
}
\makeatother

\usepackage{amsmath,amsfonts,bm}

\def\eqref#1{equation~\ref{#1}}

\def\1{\bm{1}}

\DeclareMathAlphabet{\mathsfit}{\encodingdefault}{\sfdefault}{m}{sl}
\SetMathAlphabet{\mathsfit}{bold}{\encodingdefault}{\sfdefault}{bx}{n}

\usepackage{url}
\usepackage[table]{xcolor}
\usepackage{dsfont}
\usepackage{mathtools}
\usepackage{amsmath}
\usepackage{amsthm}
\usepackage{amsthm}
\usepackage{centernot}
\usepackage{lipsum}
\usepackage{amssymb}
\usepackage{booktabs}
\usepackage{multirow}
\usepackage{tabularx}
\usepackage{graphicx}
\usepackage{xcolor}
\theoremstyle{plain}

\newtheorem{prop}{Proposition}
\newtheorem{cor}{Corollary}

\title{Is invariance all you need for algorithmic fairness? Removing Demographic Information Can Create New Bias}

\author{
\textbf{Aditya Parikh}$^{1,\dagger}$,\quad \textbf{Eike Petersen}$^{2,\dagger}$,\quad \textbf{Stella Frank}$^{1}$, \quad
\textbf{Enzo Ferrante}$^{3}$,\\
\textbf{Melanie Ganz}$^{4}$,\quad \textbf{Aasa Feragen}$^{1}$ \\
\\
{\small $^{1}$Technical University of Denmark, Kgs.~Lyngby, Denmark} \\
{\small $^{2}$Fraunhofer Institute for Digital Medicine MEVIS, Bremen, Germany} \\
{\small $^{3}$CONICET -- Universidad de Buenos Aires, ICC, Buenos Aires, Argentina} \\
{\small $^{4}$University of Copenhagen, Copenhagen, Denmark} \\
\\
{\small $^{\dagger}$Equal contribution. \qquad Corresponding author: \texttt{adipa@dtu.dk}}
}

\iclrfinalcopy 
\begin{document}

\maketitle

\begin{abstract}

Encoded demographic information in internal model representations is a commonly assumed risk factor for algorithmic bias, with demographic representation invariance often being touted as the ideal state. However, while demographic shortcut learning is a genuine threat, some degree of encoding is necessary when demographics correlate with target labels. Here, we show, mathematically and empirically, that enforcing demographic invariance can actually \emph{hamper} bias mitigation and even create new biases. We distinguish marginal from class-conditional representation invariance, and show that they imply the standard group fairness notions of demographic parity and equalized odds, respectively. We evaluate the effects on predictive performance and fairness of enforcing both invariance types, both theoretically and empirically across five tabular and two chest X-ray imaging datasets. Our findings support our mathematical argument that demographic representation invariance is neither \textit{desirable} nor \textit{sufficient} for fairness. \footnote{An earlier version is available as a non-peer reviewed preprint~\citep{petersen2023demographically}. This submission supersedes it with consent of all authors; all experiments are new.}

Full code, configs, and random seeds are made available anonymously.\footnote{Available at: https://anonymous.4open.science/r/invariance-project-931C/README.md}

\end{abstract}

\section{Introduction}
\label{sec:intro}

\textit{Demographic information} can be recovered from latent representations produced by medical imaging models, even when these models are neither explicitly trained to predict demographic attributes, nor trained using them~\citep{glocker2023algorithmic,yang2024limits,duffy2022confounders,gichoya2022ai,yi2021radiology,lotter2024acquisition,quarta2026hidden}. This raises the valid concern that a patient's group membership, once encoded in the representation, may be exploited by the model, e.g.~as shortcuts used instead of task-relevant features~\citep{stanley2025and, lin2024shortcut,brown2023detecting,yang2024limits}, or to reproduce historical biases encoded in systematically biased labels~\citep{akintande2025medicine,parikh2026towards}.

In this paper, we argue that this intuitive concern is a misunderstanding. We show, mathematically and empirically, that \textit{demographic decodability}: the ability to recover group membership from representation, by itself, does not explain unfairness or performance disparities between groups~\citep{jones2023role,stanley2025exploring,jones2025rethinking}. It is entirely possible for a representation to only encode task-specific features, yet still have high demographic decodability, simply because the task-specific features correlate with group membership, e.g., when disease symptoms or prevalence differ between groups. Removing all demographic information therefore risks removing task-specific features associated with group membership and thus harming performance. In the context of different true prevalences, enforcing demographic invariance results in systematic differences in error rates between the groups, harming equalized odds fairness.

These mathematical arguments stand in contrast to previous studies reporting positive empirical effects of applying alignment methods such as adversarial learning~\citep{edwards2015censoring,zhang2018mitigating} or distribution matching~\citep{louizos2015variational} to minimize internal encoding of demographic information~\citep{zemel2013learning,creager2019flexibly,zhao2022inherent,wang2024advancing,rashid2026stride,sadri2025mutual,sarridis2024flac}. The theoretical guarantees and fundamental limitations of these methods have, however, received little attention~\citep{zhao2022fundamental}. Our experiments support~\citet{cerrato202410yearsfairrepresentations}~showing that removing demographic information is surprisingly hard, and that most well-known methods fail at accurately removing demographic information from internal representations. This suggests that the regularizing effect of demographic decoding found in previous work is likely caused by failure to actually remove demographic information as intended.

We show that marginal representation invariance (i.e., the representation having the same distribution in every demographic group), which is the intuitive and common target of such studies, implies \emph{demographic parity}, also known as the independence fairness criterion. This is important, as demographic parity is well known to be incompatible with other fairness criteria often deemed more appropriate in real-life applications~\citep{barocas2023fairness}. We distinguish marginal representation invariance, in practice the most commonly considered form of invariance, from \emph{class-conditional} representation invariance (i.e., the representation is required to have the same distribution only among individuals with the same label), which we show implies the well-known \emph{equalized odds}, or \emph{separation}, fairness criterion~\citep{barocas2023fairness}.

Our experiments compare a range of different methods for enforcing representation invariance, and we use the one that actually \emph{does} work. We clearly demonstrate how marginal invariance leads to failure in satisfying equalized odds fairness. Our experiments \emph{also} show that while class-conditional invariance enforces equalized odds well, it also comes with the known risk of leveling performance down~\citep{mittelstadt2023unfairness} for the best groups in order to obtain fairness.


\begin{figure}[t]
  \centering
  \includegraphics[width=0.8\linewidth]{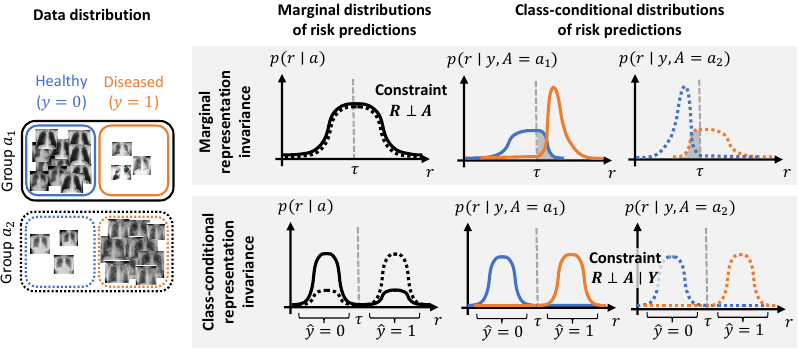}
  \caption{\textbf{Two forms of representation invariance and the effects of enforcing those} in a case study predicting disease for two groups ($a_1$ and $a_2$) with different prevalence. \textbf{Top:} $Z\perp A$ gives identical marginal risk distributions across groups. Under unequal prevalence, the class-conditional risk distributions must differ, giving different error rates across groups. \textbf{Bottom:} $Z\perp A\mid Y$ gives equal error rates, but still has drawbacks: it is incompatible with calibration by groups and requires equalizing away potentially important differences between unknown disease subtypes.} 
  \label{fig:schematic}
\end{figure}

\section{Problem Setting}
\label{sec:problem_settings}

We denote random variables by uppercase letters $(X, Y, Z)$ and their realizations by lowercase letters $(x, y,z)$. Consider a supervised prediction task with input $X\in\mathcal{X}$, outcome $R \in \mathcal{R}$ sometimes thresholded to a categorical $Y\in\mathcal{Y}$, and sensitive attribute $A\in\mathcal{A}=\{a_1,\ldots,a_m\}$ with $P(A=a)>0$ for all $a$. The input $X$ can be tabular features, an image, or any other data type. The outcome can be binary ($Y \in \{0, 1\}$), multiclass ($Y \in \{0, 1 \dots, K-1\}$), a regressed score $R \in \mathcal{R}$, or even a more complex output such as a segmentation mask. The attributes $A$ identify groups such as sex, ethnicity, or age group, and may be multi-valued or intersectional.

A deterministic feature extractor $g\colon\mathcal{X}\to\mathcal{Z}$ produces a representation $Z=g(X)$, which is mapped to predictions $R \in \mathcal{R}$ by a second deterministic component (e.g., a classifier head) $h \colon \mathcal{Z}\to\mathcal{R}$, producing a risk/score vector $R = h(Z)$. If required, $R$ may be discretized to obtain binary or categorical outcome labels $\hat{Y}$. Neither $g$, $h$, nor $\tau$ take $A$ as input.


We consider a model to \textit{encode}\footnote{We use the term \textit{encoding} for consistency with the literature. Above-chance prediction of $A$ from $Z$ does not necessarily mean the model explicitly represents demographic identity or uses it as a shortcut.} group membership $A$ in its latent representation $Z$ if group membership is \textit{decodable} from $Z$ above chance (i.e.\ $Z\not\perp A$). We use \textit{decodability} as a measurement (or a proxy) of invariance throughout. Notice that decodability depends on the evaluation dataset $\mathcal{D}$ and is not a property of the model alone.

\section{What does Representation Invariance Guarantee?}
\label{sec:invariance}

While the propositions in this section are not new individually, the way we connect them with two forms of representation invariance (illustrated in Fig.~\ref{fig:schematic}) and derive their consequences for fairness and performance is new. This connection is important to avoid common misunderstandings that may increase bias and decrease performance. We cite prior work where proofs are available. Sec.~\ref{sec:experiments} then introduces an experimental setup that allows us to empirically test these propositions in Sec.~\ref{sec:results}.












\subsection{Marginal Representation Invariance}
\label{sec:marginal_invariance}

Marginal representation invariance asks that the distribution of $Z$ be identical across groups:
\begin{equation}
\label{eq:mrg_inv}
    p(z \mid a) = p(z) \quad \forall a \qquad\Longleftrightarrow\qquad Z \perp A
\end{equation}
Under this condition, the learned representation $Z$ is statistically independent of $A$, denoted by $Z \perp A$: the model does not \textit{encode} group membership on this dataset.

\begin{prop} \label{prop:marginv}
Marginal representation invariance implies demographic parity/independence.
\end{prop}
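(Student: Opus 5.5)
The argument is a data-processing one: the prediction is a deterministic function of $Z$ alone, and deterministic functions of a random variable independent of $A$ remain independent of $A$. Demographic parity (independence) means $R \perp A$, or $\hat{Y} \perp A$ once scores are discretized. So it suffices to push the independence $Z \perp A$ from \eqref{eq:mrg_inv} through the maps $h$ and, if present, the thresholding $\tau$. By the problem setting, none of these maps takes $A$ as input.

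\textbf{Step 1: the score $R$.} Take any measurable set $B \subseteq \mathcal{R}$ and any group $a$. Since $R = h(Z)$ with $h$ deterministic and measurable,
\begin{equation*}
P(R \in B \mid A = a) = P\bigl(Z \in h^{-1}(B) \mid A = a\bigr) = P\bigl(Z \in h^{-1}(B)\bigr) = P(R \in B).
\end{equation*}
The middle equality is exactly marginal invariance $p(z \mid a) = p(z)$, applied to the event $h^{-1}(B)$. The conditioning is well defined because $P(A=a)>0$ for every $a$. Hence $R \perp A$, which is demographic parity at the level of scores. This holds for binary, multiclass, regression, or structured outputs alike, because the event $B$ is arbitrary.

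\textbf{Step 2: discretized labels.} The same computation with the composite map $\tau \circ h$ gives $\hat{Y} \perp A$. In particular, for binary outcomes this yields $P(\hat{Y}=1 \mid A=a) = P(\hat{Y}=1)$ for all $a$, the usual form of demographic parity.

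The only step needing care is a measure-theoretic one. We must ensure the preimage $h^{-1}(B)$ is a measurable event in $\mathcal{Z}$, and that the claim does not depend on $Z$ having a density $p(z)$. I would handle this by stating independence in terms of probabilities of events rather than densities, and by assuming $g$, $h$, $\tau$ are measurable, which is implicit in the setup. Beyond this the argument is immediate.

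I would also remark that the implication is only one-directional: demographic parity does not imply $Z \perp A$.
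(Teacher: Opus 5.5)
Your proposal is correct and follows the same route as the paper. The paper's proof also argues that $R$ and $\hat Y$ depend only on $Z$ and therefore inherit $Z \perp A$, giving $p(r\mid a)=p(r)$ and $p(\hat y\mid a)=p(\hat y)$; your preimage computation $P(R\in B\mid A=a)=P(Z\in h^{-1}(B)\mid A=a)$ simply makes that inheritance step explicit and avoids relying on densities.
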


\begin{proof}
If the risk prediction $R$ and the decision $\hat Y=\mathds{1}[R\ge\tau]$ depend only on \(Z\), then both predictions inherit this independence of $A$ from $Z$:
\begin{equation}
\label{eq:marginal_invariance_dp}
    Z\perp A
    \;\Longleftrightarrow\;
    p(z\mid a)=p(z) \ \forall a
    \;\Longrightarrow\;
    p(r\mid a)=p(r),
    \quad
    p(\hat y\mid a)=p(\hat y),
    \quad \forall a.
\end{equation}
That is, if $Z \perp A$, then we get \({R}\perp A\) and \(\hat{Y}\perp A\): the distributions of predicted risk scores and predicted decisions are identical across sensitive groups, which implies the fairness criterion known as \textit{demographic parity}, or \textit{independence}. 
\end{proof}

For a binary attribute $A \in \{0,1 \}$ this gives $P(\hat Y=1\mid A=0)=P(\hat Y=1\mid A=1)$, whose violation is measured by the demographic parity gap
\begin{equation}
\label{eq:dp_gap}
    \Delta_{\mathrm{DP}}
    =
    \left|
        P(\hat Y=1\mid A=0)
        -
        P(\hat Y=1\mid A=1)
    \right|.
\end{equation}
Therefore, exact marginal representation invariance implies $\Delta_{\mathrm{DP}} = 0$ if the classification head and the decision threshold do not use $A$~\citep{zemel2013learning, zhao2022fundamental,madras2018learningadversariallyfairtransferable}. Its finite-sample estimate may differ from zero $(\Delta_{\mathrm{DP}} \approx 0)$ due to sampling variability.

\subsubsection*{Unequal prevalence across groups causes problems for marginal invariance} Outcomes $Y$ often vary across demographic groups: Income differs across age groups or sex; crime rates differ across populations due to historical and institutional factors; disease prevalence depends on e.g.~sex and age, with breast cancer, which affects some but very few men, as an extreme example. 

\begin{prop}
Under unequal group prevalences, marginal invariance creates an upper bound on predictive performance.
\end{prop}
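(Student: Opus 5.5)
We will make the statement precise for binary $Y$ and binary $A$ with prevalences $\pi_a = P(Y=1\mid A=a)$, and assume $\pi_0 \neq \pi_1$. The plan is to combine Proposition~\ref{prop:marginv} with a simple counting argument on the confusion matrix of each group.

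First, we invoke Proposition~\ref{prop:marginv}. Marginal invariance $Z\perp A$ gives $\hat Y \perp A$, so the positive prediction rate $q = P(\hat Y=1\mid A=a)$ is the same for every group. Since $\hat Y$ is a function of $Z$ and $Z\perp A$, the same independence holds for the score $R$, so the argument below applies at every threshold $\tau$.

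Second, we bound the errors within each group. In group $a$ the predicted positive mass is $q$ and the true positive mass is $\pi_a$. The mismatch $|q-\pi_a|$ must then be absorbed by errors:
\begin{equation}
P(\hat Y\neq Y\mid A=a) \;\ge\; |P(\hat Y=1\mid A=a) - P(Y=1\mid A=a)| \;=\; |q-\pi_a|.
\end{equation}
This follows directly from $P(\hat Y=1)-P(Y=1) = P(\hat Y=1,Y=0)-P(\hat Y=0,Y=1)$. Averaging over groups with weights $w_a = P(A=a)$ gives
\begin{equation}
\text{Acc} \;\le\; 1-\min_{q\in[0,1]} \sum_a w_a |q-\pi_a| \;=\; 1-\min\{w_0,w_1\}\,|\pi_0-\pi_1|
\end{equation}
in the binary case. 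The minimum of this piecewise-linear function is attained at the weighted median of the $\pi_a$. The bound is strictly below $1$ whenever $\pi_0\neq\pi_1$. The same construction gives $\max_a P(\hat Y\neq Y\mid A=a)\ge |\pi_0-\pi_1|/2$, which yields an equalized-odds style consequence.

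Third, we make the bound threshold-free. Because $R\perp A$, every point of the ROC curve uses the same $q$ in both groups, so the inequality holds uniformly in $\tau$. If a balanced-accuracy or TPR/FPR version is wanted, we would note that $\mathrm{TPR}_a\pi_a + \mathrm{FPR}_a(1-\pi_a) = q$ for all $a$. With unequal $\pi_a$, this forces either $\mathrm{TPR}_a$ or $\mathrm{FPR}_a$ to differ across groups unless $\mathrm{TPR}_a = \mathrm{FPR}_a$, that is, unless the classifier is uninformative. We can also add a remark that the bound is tight: an oracle that is correct except for the forced mismatch achieves it.

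The main obstacle is the formulation rather than the computation: "predictive performance" must be fixed to a specific metric. Accuracy gives the cleanest statement via the weighted-median bound. For general losses or AUC, the natural route would be to view $Z$ as a channel that can carry no information about the $A$-component of $Y$, and bound $I(Z;Y)$ or the Bayes risk accordingly. That version is messier, so we would state it only for accuracy and per-group error rates, and remark on the extensions.
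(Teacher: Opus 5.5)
Your proposal is correct and follows the same idea as the paper. Marginal invariance gives demographic parity via Proposition~\ref{prop:marginv}, so a single positive rate $q$ cannot match both prevalences when $\pi_0\neq\pi_1$, which forces errors in at least one group; the paper states this only qualitatively and cites $e_0+e_1\ge|\pi_0-\pi_1|$, whereas your per-group inequality $e_a\ge|q-\pi_a|$ derives that bound (by the triangle inequality) and sharpens it to the weighted accuracy bound $1-\min\{w_0,w_1\}\,|\pi_0-\pi_1|$.
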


\begin{proof}
While marginal invariance requires 
    $p(\hat y\mid A=0)=p(\hat y\mid A=1)$, an accurate classifier should approximately reproduce the group-specific outcome distributions 
    $p(\hat y\mid a)\approx p(y\mid a)$. However,  when 
    $p(Y\mid A=0)\neq p(Y\mid A=1)$,
these requirements, i.e., demographic parity and perfect (desired) prediction, cannot both hold. Marginal representation invariance must thus introduce prediction errors in at least one group.
\end{proof}

For a binary outcome, the group-conditional representation distribution is
\begin{equation}
\label{eq:marginal_mixture}
\begin{split}
    p(z\mid a)
    &=
    \sum_{y\in\{0,1\}}
    p(z\mid y,a)\,p(y\mid a)\\
    &=
    \pi_a\,p(z\mid Y=1,A=a)
    +(1-\pi_a)\,p(z\mid Y=0,A=a),
\end{split}
\end{equation}
where
$\pi_a=P(Y=1\mid A=a)$
denotes the positive-label prevalence in group \(a\). If the within-class distributions are shared across groups, $p(z\mid y,a)=p(z\mid y)$, and $Z\not\perp Y$: the difference in group-wise $p(z \mid a)$ originates in $\pi_a$.

For a binary prediction task, let $e_a=P(\hat Y\neq Y\mid A=a)$ denote the prediction error for group $a$. Any predictor satisfying demographic parity obeys
$e_0+e_1\geq|\pi_0-\pi_1|$~\citep{mcnamara2019costs,zhao2022fundamental}. For multi-class tasks, an iterative division into binary one-versus-rest prediction problems can be used to derive a similar bound. The bound constrains the combined error across groups; the entire cost may fall on one group.




\subsection{Class-Conditional Representation Invariance}
\label{sec:class_conditional_invariance}

\begin{figure}[t]
  \centering
  \includegraphics[width=0.9\linewidth, trim=17 5pt 17 5pt, clip]{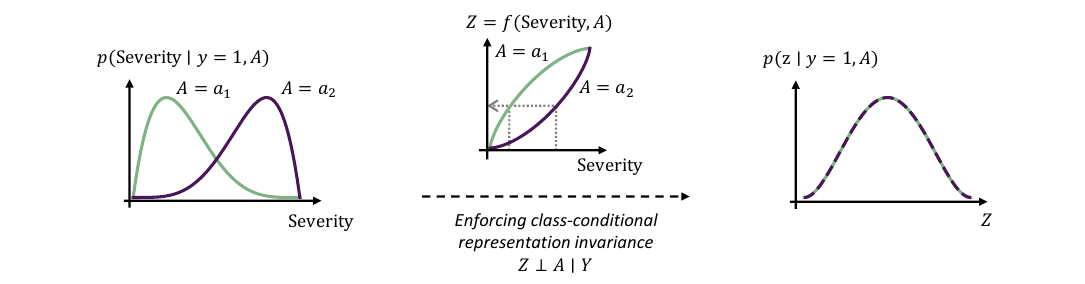}
  \caption{\textbf{Effects of enforcing Class-conditional representation invariance in the presence of intra-class variation.} When the within-class distribution of a clinically relevant factor (here disease severity among $Y{=}1$ patients) differs between groups, enforcing $Z\perp A\mid Y$ requires mapping patients with differing disease severities to the same latent representation, depending on which group a patient belongs to (center). The aligned representation then hides a real difference in disease presentation, and the same $z$ no longer means the same severity in both groups (see also~\cite{lohaus2022two}).}
  \label{fig:intra-class}
\end{figure}

Class-conditional representation invariance instead aligns the representation distributions separately within each outcome class:
\begin{equation}
\label{eq:conditional_invariance}
    p(z\mid y,a)=p(z\mid y)
    \quad \forall a,y
    \quad\Longleftrightarrow\quad
    Z\perp A\mid Y.
\end{equation}
This condition allows the marginal representation distributions \(p(z\mid a)\) to differ when the groups have different outcome prevalences. Thus, class-conditional representation invariance does not imply marginal representation invariance ($Z\perp A\mid Y
    \;\not\Longrightarrow\;
    Z\perp A$).

\begin{prop} \label{prop:condinv}
Class-conditional representation invariance implies the separation/equalized odds fairness criterion.
\end{prop}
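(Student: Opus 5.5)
The argument mirrors the proof of Proposition~\ref{prop:marginv}, now carried out within each outcome class. Since $R=h(Z)$ and $\hat Y=\mathds{1}[R\ge\tau]$ are deterministic functions of $Z$ that never take $A$ as input, any conditional independence $Z$ enjoys is inherited by $R$ and $\hat Y$.

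\textbf{Step 1: conditional push-forward.} Fix a class $y$ and a group $a$ with $P(Y=y,A=a)>0$. For any measurable set $B\subseteq\mathcal{R}$,
\begin{equation*}
P(R\in B\mid Y=y,A=a)=P\bigl(Z\in h^{-1}(B)\mid Y=y,A=a\bigr).
\end{equation*}
By equation~\ref{eq:conditional_invariance}, the right-hand side equals $P\bigl(Z\in h^{-1}(B)\mid Y=y\bigr)$, which is $P(R\in B\mid Y=y)$. Hence $R\perp A\mid Y$.

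\textbf{Step 2: thresholded decisions.} Applying the same argument with $h$ replaced by $z\mapsto\mathds{1}[h(z)\ge\tau]$ gives $\hat Y\perp A\mid Y$. For binary $Y$, this means
\begin{equation*}
P(\hat Y=1\mid Y=y,A=a)=P(\hat Y=1\mid Y=y)\quad\text{for all } a \text{ and } y\in\{0,1\},
\end{equation*}
so group-wise true positive rates ($y=1$) and false positive rates ($y=0$) coincide. This is equalized odds, or separation. The conclusion also holds for multiclass $Y$ and for more general outputs $R$, because the argument never used the form of $\mathcal{R}$.

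\textbf{Step 3: the step needing care.} The only subtlety is measure-theoretic bookkeeping. Conditional distributions are defined only where $P(Y=y,A=a)>0$, and $h$ and the thresholding map must be measurable so that preimages are events. I will state that these assumptions are implicit in the setting. I will also note the direction of the implication: separation does not imply $Z\perp A\mid Y$, since the head $h$ may discard group-dependent structure. Finally, as in the marginal case, the finite-sample equalized-odds gaps will be only approximately zero.
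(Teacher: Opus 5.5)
Your proposal is correct and takes essentially the same approach as the paper. The paper's proof simply notes that $R$ and $\hat Y$, being $A$-blind deterministic functions of $Z$ with a shared threshold $\tau$, inherit $Z\perp A\mid Y$, which yields $R\perp A\mid Y$ and $\hat Y\perp A\mid Y$; your push-forward computation via $h^{-1}(B)$, together with the positivity and measurability bookkeeping, makes that inheritance step explicit.
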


\begin{proof}
If the risk prediction $R$ and thresholded prediction $\hat{Y}$ depend only on $Z$, then they inherit the conditional independence of $Z$ implying $R\perp A\mid Y
    ~\text{and}~
    \hat Y\perp A\mid Y.$
This property is known as \emph{separation}~\citep{hardt2016equality,madras2018learningadversariallyfairtransferable}; separation on the classification level ($\hat{Y}$) in the case of binary classification is also known as \emph{equalized odds}.
\end{proof}

We note that when \(Y=1\) we get equal $\mathrm{TPRs}$ (\emph{equal opportunity} fairness criterion), whereas with the additional
\(Y=0\) we also get equal $\mathrm{FPRs}$.
Therefore, class-conditional representation invariance implies $\Delta_{\mathrm{EOpp}}=0
    ~\text{and}~
    \Delta_{\mathrm{EO}}=0$.

\subsection{Invariance-fairness incompatibilities}

Combining Props.~\ref{prop:marginv} and \ref{prop:condinv} with known fairness incompatibility theorems, we obtain the following invariance-fairness incomparability results for classification:

\begin{cor}
Assume that $R$ and $Y$ are not independent of $A$. In this case: (i) Assume that $Y$ is binary; then marginal representation invariance is incompatible with the separation fairness criterion. If group-wise prevalences differ, representation invariance is incompatible with separation even for multiple classes. (ii) Marginal representation invariance is incompatible with the sufficiency fairness criterion, as well as with calibration by groups. (iii) Class-conditional representation invariance is incompatible with the independence fairness criterion. (iv) Assume that all events in the joint distribution of $(A, R, Y)$ have positive probability. Then, class-conditional representation invariance is incompatible with the sufficiency fairness criterion, as well as with calibration by groups.
\end{cor}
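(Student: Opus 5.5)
The plan is to reduce each part of the corollary to one of Props.~\ref{prop:marginv} and \ref{prop:condinv}, and then invoke the classical pairwise incompatibility theorems for independence, separation and sufficiency \citep{barocas2023fairness}. Marginal invariance $Z\perp A$ gives $R\perp A$ and $\hat Y\perp A$ by Prop.~\ref{prop:marginv}. Class-conditional invariance $Z\perp A\mid Y$ gives $R\perp A\mid Y$ and $\hat Y\perp A\mid Y$ by Prop.~\ref{prop:condinv}. It therefore suffices to show that, under the standing assumption $R\not\perp A$ and $Y\not\perp A$, the induced criterion for $R$ (or $\hat Y$) cannot coexist with the second criterion named in each part.

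For (i), I will use the standard argument that independence and separation together force either $Y\perp A$ or $R\perp Y$. If $R\perp A$ and $R\perp A\mid Y$, then
\[
p(r\mid a)=\sum_y p(r\mid y)\,p(y\mid a)
\]
does not depend on $a$. For binary $Y$ this reads $p(r\mid 0)+\pi_a\bigl(p(r\mid 1)-p(r\mid 0)\bigr)=\mathrm{const}$. Since $\pi_a$ varies with $a$ (because $Y\not\perp A$), we get $p(r\mid 1)=p(r\mid 0)$, so $R\perp Y$ and hence $R\perp A$ and $R\perp Y$ jointly. In the non-trivial case this is excluded; if it cannot be excluded from the stated hypotheses alone, I will make the reading explicit that a predictor useless for $Y$ is not considered. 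For multiple classes the same mixture identity holds, but independence alone is not enough: the vector $(p(r\mid y))_y$ could differ while still yielding equal mixtures. I would therefore use the stated prevalence difference together with a one-vs-rest reduction, as suggested after Prop.~2. Concretely, I will pick a class $k$ whose prevalence differs between groups and apply the binary argument to $\mathds{1}[Y=k]$ and the induced score. This is where I expect the main subtlety, because separation with respect to $\mathds{1}[Y=k]$ is not implied directly by separation with respect to $Y$. I would instead argue via the error bound $e_0+e_1\ge|\pi_0-\pi_1|$ under demographic parity, which is incompatible with the equal per-class error structure of a separating predictor unless it is degenerate.

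For (ii), I combine $R\perp A$ from Prop.~\ref{prop:marginv} with the classical result that independence and sufficiency ($Y\perp A\mid R$) together imply $Y\perp A$:
\[
p(y\mid a)=\sum_r p(y\mid r,a)\,p(r\mid a)=\sum_r p(y\mid r)\,p(r).
\]
This contradicts $Y\not\perp A$. Calibration by groups, $P(Y=1\mid R=r,A=a)=r$, implies sufficiency, so the same contradiction applies. For (iii), I use the identity from (i) in the other direction. Separation $R\perp A\mid Y$ plus independence $R\perp A$ forces $R\perp Y$ (or $Y\perp A$), which is excluded as before.

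For (iv), I apply the theorem that separation and sufficiency cannot both hold when all events of $(A,R,Y)$ have positive probability, unless $A\perp Y$. This is exactly the positivity hypothesis stated in the corollary. The proof combines $R\perp A\mid Y$ and $Y\perp A\mid R$ via the intersection property of conditional independence, which requires a strictly positive joint density, to obtain $A\perp (R,Y)$, hence $A\perp Y$ and $A\perp R$. Both contradict the assumptions. Since calibration by groups implies sufficiency, the second half of (iv) follows. The only step requiring care is (i) for multiple classes; all other steps are direct citations combined with the two propositions.
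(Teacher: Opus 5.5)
Your route is the paper's: push $Z\perp A$ or $Z\perp A\mid Y$ down to $R$ via Props.~\ref{prop:marginv} and~\ref{prop:condinv}, then apply the independence/separation/sufficiency incompatibilities of \citet{barocas2023fairness}. The paper cites these results; you re-derive them (the mixture identity, the computation in (ii), the intersection property in (iv)). Parts (ii), (iv) and binary (i) are correct. You are also right that binary (i) needs $R\not\perp Y$ rather than the stated $R\not\perp A$. Under the literal hypothesis, your own first step ($R\perp A$ by Prop.~\ref{prop:marginv}) already contradicts the assumption, so (i)--(iii) would hold only vacuously.

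The step that fails is the multiclass half of (i). The bound $e_0+e_1\ge|\pi_0-\pi_1|$ is a consequence of demographic parity alone, so it cannot give anything that parity and separation do not already give jointly. Indeed, the constant predictor satisfies parity, separation and the bound. In the binary case the contradiction comes from $(\pi_0-\pi_1)(\mathrm{TPR}-\mathrm{FPR})=0$, not from the bound.

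For $|\mathcal{Y}|>2$, no argument can succeed under the non-vacuous hypotheses $Y\not\perp A$, $R\not\perp Y$. Let $Y\in\{0,1,2\}$ with $p(\cdot\mid A{=}0)=(0.2,0.6,0.2)$ and $p(\cdot\mid A{=}1)=(0.3,0.4,0.3)$. Let $Z=R\in\{0,1\}$ with $P(R{=}1\mid Y{=}y,A{=}a)=0,\tfrac12,1$ for $y=0,1,2$ in both groups. Then $P(R{=}1\mid A{=}a)=\tfrac12$ for both groups, so $Z\perp A$ and $Z\perp A\mid Y$ hold simultaneously, even though every class prevalence differs and $R$ is informative about $Y$. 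The same example breaks your ``excluded as before'' in (iii) when $Y$ is not binary.

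The paper's proof does not supply the missing step either. It cites the binary Prop.~3 of \citet{barocas2023fairness} for (iii). Its one-sentence multiclass argument in (i) implicitly assumes that equal predictive rates plus unequal prevalences force unequal class-conditional rates, which the example refutes.

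A correct multiclass version needs an extra nondegeneracy hypothesis, e.g.\ linear independence of the class-conditional laws $\{p(r\mid y)\}_y$. Under it, $\sum_y p(r\mid y)\,\bigl(p(y\mid a)-p(y\mid a')\bigr)=0$ forces $p(\cdot\mid a)=p(\cdot\mid a')$, contradicting $Y\not\perp A$.
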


\begin{proof}
(i) If marginal invariance holds, the independence criterion holds by Prop.~\ref{prop:marginv}. For binary $Y$, (i) follows from \cite[Ch.~3, Prop.~3]{barocas2023fairness}. If prevalences depend on $A$ and marginal independence holds, then constant predictive rates lead to differences in either TPR, FPR, or both, across those groups where prevalences differ. As a consequence, separation does not hold.

(ii) If marginal invariance holds, the independence criterion holds by Prop.~\ref{prop:marginv}, and thus sufficiency cannot hold by~\cite[Ch.~3, Prop.~2]{barocas2023fairness}. If calibration by groups were to hold, then sufficiency would hold as well, which it cannot; this proves the claim for calibration by groups.

(iii) If class-conditional invariance holds, then equalized odds holds by Prop.~\ref{prop:condinv}. Thus, independence cannot hold by \cite[Ch.~3, Prop.~3]{barocas2023fairness}.

(iv) If class-conditional invariance holds, then equalized odds holds by Prop.~\ref{prop:condinv}. Thus, sufficiency cannot hold by \cite[Ch.~3, Prop.~4]{barocas2023fairness}. If calibration by groups were to hold, then sufficiency would hold as well, which it cannot; this proves the claim for calibration by groups.
\end{proof}

These results all follow from known incompatibility results as stated in~\citep{barocas2023fairness}; we note that even stronger or more detailed fairness incompatibility results from the literature will generalize into invariance-fairness incompatibility results following the same line of reasoning. 

We also note that groupwise prevalence differences, which are used as a running challenging example in this paper, are a special case of $A$ and $Y$ not being independent, which is an assumption for all these incompatibility results. Fig.~\ref{fig:intra-class} shows a further concern about class-conditional invariance conflicting with within-class differences in disease severity across groups.

\subsection{What happens under prevalence shifts?}

Consider a fixed model evaluated on two populations $P$ and $Q$, with unchanged within-class distribution (i.e., $P(X\mid Y,A)=Q(X\mid Y,A)$) but a change in group-specific prevalence ($P(Y\mid A)\neq Q(Y\mid A)$). This could, e.g., happen if a disease over time becomes more prevalent in one group due to, e.g., updated diagnostic criteria that fit the group better, or behavioral changes that affect one group but not the other (e.g., a new contraceptive pill entering the market affecting women, but not men). It also commonly occurs when intentionally up- or downsampling certain classes in training and evaluation datasets. In such settings, we show:

\begin{prop}
 Class-conditional representation invariance is preserved under group-specific prevalence shifts, while marginal representation invariance is not.
\end{prop}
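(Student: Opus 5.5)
The argument rests on one observation: the representation $Z=g(X)$ is a deterministic function of $X$ that does not depend on the population. Hence the within-class, within-group distribution of $Z$ is the pushforward of $P(X\mid Y,A)$ under $g$. Since $P(X\mid Y,A)=Q(X\mid Y,A)$ by assumption, we get $p_P(z\mid y,a)=p_Q(z\mid y,a)$ for all $y,a$ with positive probability under both populations. I would state this first as a short lemma: for measurable $B$, $P(Z\in B\mid Y=y,A=a)=P(X\in g^{-1}(B)\mid y,a)=Q(X\in g^{-1}(B)\mid y,a)=Q(Z\in B\mid y,a)$.

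\textbf{Preservation of class-conditional invariance.} Suppose $Z\perp A\mid Y$ under $P$, so $p_P(z\mid y,a)=p_P(z\mid y)$ for all $a$. By the lemma, $p_Q(z\mid y,a)=p_P(z\mid y,a)=p_P(z\mid y)$, which does not depend on $a$. Hence $Z\perp A\mid Y$ under $Q$ as well. By Prop.~\ref{prop:condinv}, equalized odds is then also preserved under the shift. One technical point must be handled here: if some $(y,a)$ has zero probability under $Q$ but positive probability under $P$, the condition is vacuous for that pair. I would therefore assume both populations give positive mass to every $(y,a)$.

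\textbf{Non-preservation of marginal invariance.} This needs a counterexample, or better a characterization, built on the mixture decomposition \eqref{eq:marginal_mixture}. Write $p_Q(z\mid a)=\pi^Q_a\,p(z\mid 1,a)+(1-\pi^Q_a)\,p(z\mid 0,a)$, where the class-conditionals are shared between $P$ and $Q$ by the lemma. The cleanest construction takes the class-conditionals equal across groups, $p(z\mid y,a)=p(z\mid y)$, with $p(z\mid 1)\neq p(z\mid 0)$, i.e.\ $Z\not\perp Y$ as a nondegenerate model would have. Start from $P$ with equal prevalences, so $Z\perp A$ under $P$. Then shift to $Q$ with $\pi^Q_0\neq\pi^Q_1$. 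The difference $p_Q(z\mid 0)-p_Q(z\mid 1)=(\pi^Q_0-\pi^Q_1)\bigl(p(z\mid 1)-p(z\mid 0)\bigr)$ is nonzero, so marginal invariance fails under $Q$.

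The main obstacle is the scope of the negative claim. Marginal invariance is not broken by \emph{every} prevalence shift; a shift with $\pi^Q_a=\pi^P_a$ for all $a$ is an example that leaves it intact. So I would phrase the result as ``not preserved in general'' and support it with the construction above. If possible, I would also strengthen it: whenever $p(z\mid 1,a)\neq p(z\mid 0,a)$, marginal invariance survives only for shifts in a measure-zero set of prevalence vectors. This follows because $p_Q(z\mid a)$ is affine in $\pi^Q_a$ with nonzero slope, so equality across groups pins down a lower-dimensional set of prevalences.
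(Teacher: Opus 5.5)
Your proposal is correct and takes the same route as the paper. The fixed deterministic $g$ gives $P(Z\mid Y,A)=Q(Z\mid Y,A)$, which carries $Z\perp A\mid Y$ over to $Q$, and the mixture decomposition with changed $\pi^Q_a$ breaks $Z\perp A$; you make the pushforward step explicit and give a concrete counterexample, plus a measure-zero strengthening, where the paper only says the marginals ``can become different.'' One small correction to your side remark on positivity: the case that matters is a cell $(y,a)$ with zero $P$-mass but positive $Q$-mass, since $Z\perp_P A\mid Y$ says nothing about it, whereas cells with zero $Q$-mass are harmless (your stated assumption covers both, so the proof stands).
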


\begin{proof}
As the model is fixed, its conditional representation distribution remains unchanged ($P(Z\mid Y,A)=Q(Z\mid Y,A)$).
Therefore, if representations across groups for each outcome $Y$ are aligned for $P$, they also remain aligned for $Q$~\citep{asiaee2026fairness}:
\[
Z\perp_P A\mid Y
\quad\Longrightarrow\quad
Z\perp_Q A\mid Y.
\]
For binary outcomes, let $\pi_a^Q=Q(Y=1\mid A=a)$. The representation distribution for $Q$ becomes
\[
Q(Z\mid A=a)
=
\pi_a^Q P(Z\mid Y=1,A=a)
+
(1-\pi_a^Q)P(Z\mid Y=0,A=a).
\]
Changing $\pi$ also changes the proportions of positive and negative representations in each group. Their overall distributions can therefore become different, even when they were aligned under $P$:
\[
Z\perp_P A
\quad\centernot\Longrightarrow\quad
Z\perp_Q A.
\]
This finalizes our proof of both claims.
\end{proof}
These results show that \textbf{marginal representation invariance depends on the population it is evaluated on}~\citep{zhao2022inherent}. Class-conditional representation alignment, on the other hand, is preserved under group-specific prevalence shift. 

\begin{figure*}[t]
  \centering
  \begin{minipage}[c]{0.61\textwidth}
    \centering
    \includegraphics[width=\linewidth]{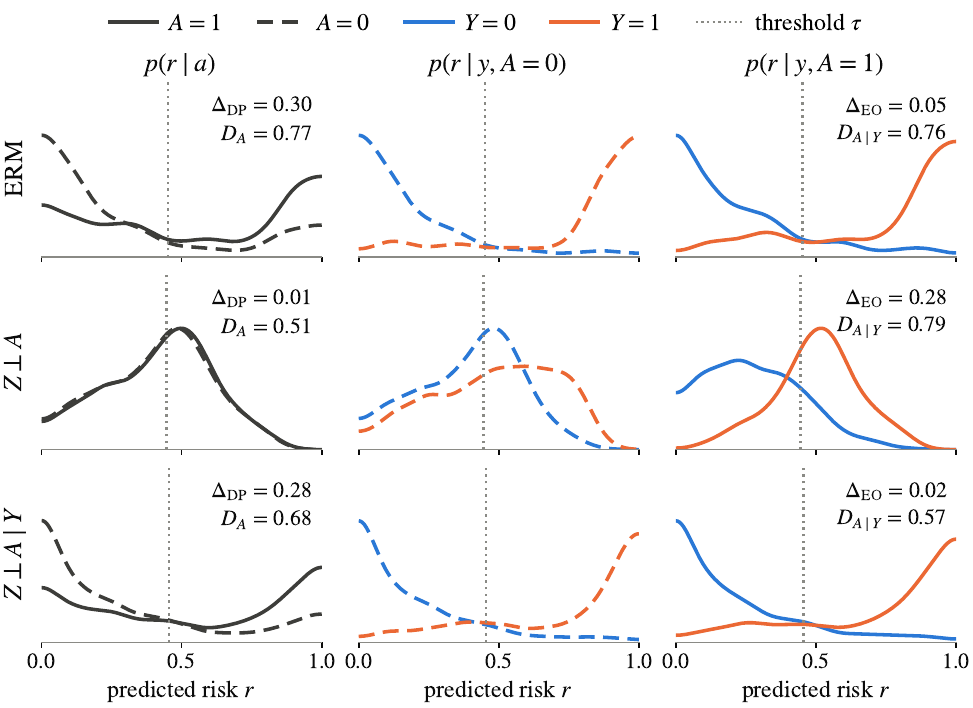}
  \end{minipage}%
  \hfill
  \begin{minipage}[c]{0.38\textwidth}
    \caption{\textbf{Fig.~\ref{fig:schematic} estimated from data} (Dutch, shift-50, seed 42). Rows: ERM, $Z\perp A$, $Z\perp A\mid Y$. Left: marginal risk densities $p(r\mid a)$ $A \in \{0,1\}$, with $\Delta_{\mathrm{DP}}$ and $D_A$. Middle and right: class-conditional densities $p(r\mid y,a)$ within each group, with $\Delta_{\mathrm{EO}}$ and $D_{A\mid Y}$. Dotted: Youden threshold. Marginal alignment makes the left column coincide by pulling the within-class curves apart across groups ($\mathrm{P}_1$); class-conditional alignment makes the middle and right columns coincide and leaves the left column at the base-rate gap ($\mathrm{P}_3$).}
    \label{fig:densities}
  \end{minipage}
\end{figure*}

\section{Experimental Design}
\label{sec:experiments}

\textbf{Datasets and prevalence conditions:}
We examine five tabular benchmarks (Adult, Dutch, Communities \& Crime, Taiwan credit, COMPAS)~\citep{kohavi1996scaling,calders2010three,redmond2002data,yeh2009comparisons} and two CheXpert tasks (edema, cardiomegaly)~\citep{irvin2019chexpert}; depending on the dataset, sensitive attributes are sex, race, or racial composition. Tabular datasets are used to experiment with different group prevalence disparities: natural, equalized, and removal of 50\% or 75\% of positive examples from the lower-prevalence group (\textit{shift-50, shift-75}), resulting in substantial prevalence disparities. In CheXpert, we simply remove 50\% of female-positive cases and compare with natural prevalence. Each condition is a fixed subsample, so every train/test split shares the same prevalence under the given condition (See App.~\ref{tab:app-datasets}).

\textbf{Methods:}
All methods are compared against empirical risk minimization ($\mathrm{ERM}$) baselines. On tabular data, we use categorical Fair Normalizing Flows (FNFs)~\citep{balunović2022fairnormalizingflows} sweeping over $\gamma \in \{0, .02, .1, .2, .5, .9, 1\}$, two MADE hidden widths $\{50,100\}$, and two feature sets, across four prevalence conditions, and three seeds $\{42,123,456\}$ to yield $5 \times 4 \times 3 \times (7 \times 2 \times 2 + 2) = 1{,}800$ runs. For CheXpert, we compare the invariant methods (FNFs, MMD regularizer, adversarial) across two tasks, with the same three seeds, $\gamma \in \{.05, 0.50, 0.95\}$, yielding $2 \times 3 \times (3 \times 3 + 1) = 60$ runs. Data splits are 60/10/10/20\% (tabular) and 60/5/5/10/20\% (CheXpert) for training, validation, probe-validation, and test partitions; thresholds are selected on the validation set.

\textbf{Evaluation:}
\textit{Decodability} $D_A$ is measured with independent probes (logistic regression, gradient-boosted tree, 3 MLPs) trained on held-out probe-validation representations. We evaluate predictive performance using the AUROC score. For fairness evaluation, we report group differences: $\Delta_\mathrm{DP}$, $\Delta_\mathrm{EO}$ and $\Delta_\mathrm{EOpp}$, along with $\Delta \mathrm{PPV}$ and $\Delta \mathrm{NPV}$. \textit{Calibration} is measured by calibration bias ($\Delta_{\mathrm{bias}}=|(\bar y_1-\bar p_1)-(\bar y_0-\bar p_0)|$) and group-wise recalibration intercepts~$\Delta_{\mathrm{cal}}=|a_1-a_0|$~\citep{cox1958two}~(unlike $\mathrm{ECE}$, both stay unbiased under unequal group sizes~\citep{petersen2023assessingfairnessriskscore}). All results are reported as mean $\pm$ standard deviation over training seeds. Paired bootstrapping uses the same samples for each method and is matched to the ERM baseline: 1,000 resamples for tabular, and 2,000 resamples for CheXpert. Additional results for robustness, including 95\% percentile intervals, Bonferroni-adjusted confidence intervals for probe-AUROC, sensitivity analysis over $\gamma$, and calibration-bin tests, are in Appendixes~\ref{app:tabular} and~\ref{app:chexpert}.

\section{Results}
\label{sec:results}

\begin{table*}[t]
\centering
\footnotesize
\setlength{\tabcolsep}{4pt}
\renewcommand{\arraystretch}{1.05}
\begin{tabular}{clccccccc}
\toprule
 & & Utility & \multicolumn{2}{c}{Decodability of $A$} & Indep. & Separation & Sufficiency & Group calibration \\
\cmidrule(lr){3-3}\cmidrule(lr){4-5}\cmidrule(lr){6-6}\cmidrule(lr){7-7}\cmidrule(lr){8-8}\cmidrule(lr){9-9}
 & Method & AUROC & $D_A$ & $D_{A\mid Y}$ & $\Delta_{\mathrm{DP}}$ & $\Delta_{\mathrm{EO}}$ & $\Delta_{\mathrm{PPV}}\,/\,\Delta_{\mathrm{NPV}}$ & $\Delta_{\mathrm{bias}}\,/\,\Delta_{\mathrm{cal}}$ \\
\midrule
\multirow{3}{*}{\rotatebox[origin=c]{90}{Adult}}
 & ERM                   & $0.89$ & $0.93$ & $0.99$ & $0.40$ & $0.28$ & $0.18\,/\,0.05$ & $0.02\,/\,0.32$ \\
 & $Z\perp A$            & $0.79$ & $0.58$ & $0.85$ & $\mathbf{0.04}\,\checkmark$ & $0.40$ & $0.49\,/\,0.04$ & $0.23\,/\,2.12$ \\
 & $Z\perp A\mid Y$      & $0.86$ & $0.62$ & $0.55$ & $0.17$ & $\mathbf{0.08}\,\checkmark$ & $0.40\,/\,0.07$ & $0.15\,/\,1.80$ \\
\midrule
\multirow{3}{*}{\rotatebox[origin=c]{90}{Dutch}}
 & ERM                   & $0.90$ & $0.77$ & $0.77$ & $0.30$ & $0.05$ & $0.27\,/\,0.28$ & $0.22\,/\,2.00$ \\
 & $Z\perp A$            & $0.71$ & $0.52$ & $0.79$ & $\mathbf{0.01}\,\checkmark$ & $0.31$ & $0.63\,/\,0.21$ & $0.44\,/\,2.32$ \\
 & $Z\perp A\mid Y$      & $0.89$ & $0.68$ & $0.56$ & $0.27$ & $\mathbf{0.03}\,\checkmark$ & $0.34\,/\,0.27$ & $0.23\,/\,2.05$ \\
\midrule
\multirow{3}{*}{\rotatebox[origin=c]{90}{C\,\&\,C}}
 & ERM                   & $0.85$ & $0.79$ & $0.66$ & $0.43$ & $0.20$ & $0.50\,/\,0.30$ & $0.23\,/\,2.10$ \\
 & $Z\perp A$            & $0.67$ & $0.55$ & $0.73$ & $\mathbf{0.05}\,\checkmark$ & $0.28$ & $0.71\,/\,0.31$ & $0.55\,/\,2.87$ \\
 & $Z\perp A\mid Y$      & $0.85$ & $0.74$ & $0.57$ & $0.40$ & $\mathbf{0.28}$\, & $0.53\,/\,0.37$ & $0.29\,/\,2.34$ \\
\midrule
\multirow{3}{*}{\rotatebox[origin=c]{90}{Taiwan}}
 & ERM                   & $0.75$ & $0.63$ & $0.62$ & $0.07$ & $0.08$ & $0.19\,/\,0.06$ & $0.08\,/\,0.68$ \\
 & $Z\perp A$            & $0.68$ & $0.52$ & $0.69$ & $\mathbf{0.01}\,\checkmark$ & $0.14$ & $0.28\,/\,0.04$ & $0.11\,/\,0.86$ \\
 & $Z\perp A\mid Y$      & $0.75$ & $0.54$ & $0.53$ & $0.04$ & $\mathbf{0.01}\,\checkmark$ & $0.18\,/\,0.07$ & $0.09\,/\,0.79$ \\
\midrule
\multirow{3}{*}{\rotatebox[origin=c]{90}{COMP.}}
 & ERM                   & $0.71$ & $0.68$ & $0.65$ & $0.21$ & $0.15$ & $0.23\,/\,0.15$ & $0.13\,/\,0.78$ \\
 & $Z\perp A$            & $0.64$ & $0.56$ & $0.63$ & $\mathbf{0.02}\,\checkmark$ & $0.11$ & $0.33\,/\,0.18$ & $0.24\,/\,1.19$ \\
 & $Z\perp A\mid Y$      & $0.69$ & $0.56$ & $0.54$ & $0.10$ & $\mathbf{0.04}\,\checkmark$ & $0.27\,/\,0.18$ & $0.20\,/\,1.09$ \\
\midrule\midrule
\multirow{3}{*}{\rotatebox[origin=c]{90}{Edema}}
 & ERM                   & $0.84$ & $0.98$ & $0.98$ & $0.15$ & $0.10$ & $0.08\,/\,0.02$ & $0.01\,/\,0.09$ \\
 & $Z\perp A$            & $0.83$ & $0.62$ & $0.63$ & $\mathbf{0.01}\,\checkmark$ & $0.05$ & $0.16\,/\,0.06$ & $0.10\,/\,0.80$ \\
 & $Z\perp A\mid Y$      & $0.83$ & $0.60$ & $0.59$ & $0.05$ & $\mathbf{0.02}\,\checkmark$ & $0.13\,/\,0.05$ & $0.07\,/\,0.58$ \\
\midrule
\multirow{3}{*}{\rotatebox[origin=c]{90}{Cardio.}}
 & ERM                   & $0.85$ & $0.98$ & $0.98$ & $0.13$ & $0.09$ & $0.09\,/\,0.02$ & $0.01\,/\,0.19$ \\
 & $Z\perp A$            & $0.84$ & $0.64$ & $0.65$ & $\mathbf{0.01}\,\checkmark$ & $0.05$ & $0.17\,/\,0.03$ & $0.07\,/\,0.96$ \\
 & $Z\perp A\mid Y$      & $0.84$ & $0.62$ & $0.62$ & $0.04$ & $\mathbf{0.01}\,\checkmark$ & $0.15\,/\,0.03$ & $0.05\,/\,0.74$ \\
\bottomrule
\end{tabular}
\caption{\textbf{Fairness criteria under the \textit{shift-50}} on test split, mean over $3$ seeds. $Z\perp A$ is marginal FNF at $\gamma=1$ (CheXpert: $\gamma=0.95$; strongest setting), $Z\perp A\mid Y$ class-conditional FNF at the same $\gamma$, both with the feature set and width selected by the lowest probe AUROC. $D_A$ and $D_{A\mid Y}$ are the maximum probe AUROC for recovering $A$ from $Z$ marginally and within class. $\checkmark$ denotes the criterion each invariance target guarantees ($\mathrm{P}_{1,3}$); every other $\Delta$ is not-stable relative to ERM. $\pm$std are omitted for space and reported, together with the full $\gamma$ and feature-set sweep and the remaining CheXpert configurations, in Appendix~\ref{app:seeds} and~\ref{app:ablation_hparams}.}
\label{tab:fnf-results}
\end{table*}

Table~\ref{tab:fnf-results} summarizes the \textit{shift-50} case; the rest, including full-hyperparameter sweeps and per-seed scores, are in the Appendix~\ref{app:seeds}. We find that ${D_A}$ drops for every alignment method, relative to $\mathrm{ERM}$, but no method achieves exact statistical invariance: the best methods, i.e., with the lowest $D_A$ values, reach $\mathrm{AUROC}$ $0.52$--$0.58$, while the within-class probes reach $0.53$--$0.62$. The results reflect approximate invariance, not exact invariance; therefore, we advise reading fairness conclusions with residual dependence in mind. In the following, we refer to propositions from Section~\ref{sec:invariance} using $\mathrm{P}_n$.

\paragraph{Marginal alignment achieves \textit{independence} but not \textit{separation} or \textit{sufficiency} ($\mathrm{P}_{1,2}$).}
Marginal alignment methods reduce $\mathrm{\Delta DP}$ to $0.01-0.05$ across all datasets ($\mathrm{P}_1$). However, due to inherent incompatibility, it degrades separation ($\mathrm{\Delta \mathrm{EO}}$). On the Dutch dataset, probe-AUC reduces $0.77 \rightarrow 0.52$ and the $\mathrm{\Delta DP}$ $0.30 \rightarrow 0.01$, while $\mathrm{\Delta EO}$ increases $0.05 \rightarrow 0.31$. The predictive performance cost reflects the $e_0+e_1\geq\vert{}\pi_0-\pi_1\vert{}$ bound, resulting in task-$\mathrm{AUROC}$ drop of $0.07$--$0.19$ points across datasets ($\mathrm{P}_2$)~(Detailed in Appendix~\ref{app:base_rate}). Finally, forcing identical score distributions across groups breaks \textit{sufficiency} and \textit{calibration}:$\Delta{\mathrm{PPV}}$ increases in four of five datasets (Dutch $0.27 \to 0.63$; Adult $0.18 \to 0.49$; C\&C $0.50 \to 0.71$; Credit $0.19 \to 0.28$), and the group calibration-bias gap $\Delta_{\mathrm{bias}}$ nearly doubles (Dutch $0.22 \to 0.44$; C\&C $0.23 \to 0.55$). 

\paragraph{Class-conditional alignment implies \textit{separation} ($\mathrm{P}_{3}$).}
Class-condition representation invariance produces the smallest $\mathrm{\Delta \mathrm{EO}}$ (Dutch $0.05\!\to\!0.03$, Credit $0.08\!\to\!0.01$, COMPAS $0.15\!\to\!0.04$, Adult $0.28\!\to\!0.08$; the exception is C\&C whose stratum for $A=0$ has $<80$ rows), implying \textit{separation} ($\mathrm{P}_3$). Unlike marginal alignment, task-$\mathrm{AUROC}$ is maintained within $0.01-0.03$ of $\mathrm{ERM}$. Calibration and \textit{sufficiency} metrics slightly degrade (Dutch $\Delta{\mathrm{PPV}}$: $0.27 \to 0.34$; $\Delta_{\mathrm{bias}}$: $0.22 \to 0.23$), showing conflict between score-level separation and calibration. Notice this only applies to \textit{risk score separation} ($R \perp A \mid Y$); calibrated scores can still yield equal error rates under thresholded decisions~\citep{reich2020possibility}.

\paragraph{Prevalence shifts break marginal but not conditional alignment ($\mathrm{P}_4$).} 
Fig.~\ref{fig:tradeoff-plot} demonstrates evaluation stability under change in population shifts. When base-rate $P(Y \mid A)$ changes while $P(X \mid A, Y)$ remains fixed: marginal alignment methods see increase in $\mathrm{\Delta _\mathrm{DP}}$ ($0.02~(\text{equalized}) \rightarrow 0.19~(\text{shift-75})$), hence \textit{independence} is lost. Under the same shift, conditional alignment maintains a stable $\mathrm{\Delta_\mathrm{EO}}$ ($0.05 \rightarrow 0.03$), but $\mathrm{\Delta_\mathrm{DP}}$ increases ($0.01 \rightarrow$ 0.32); confirming $\mathrm{P}_4$.  


\begin{figure*}[t]
  \centering
  \begin{minipage}[c]{0.68\textwidth}
    \centering
    \includegraphics[width=\linewidth]{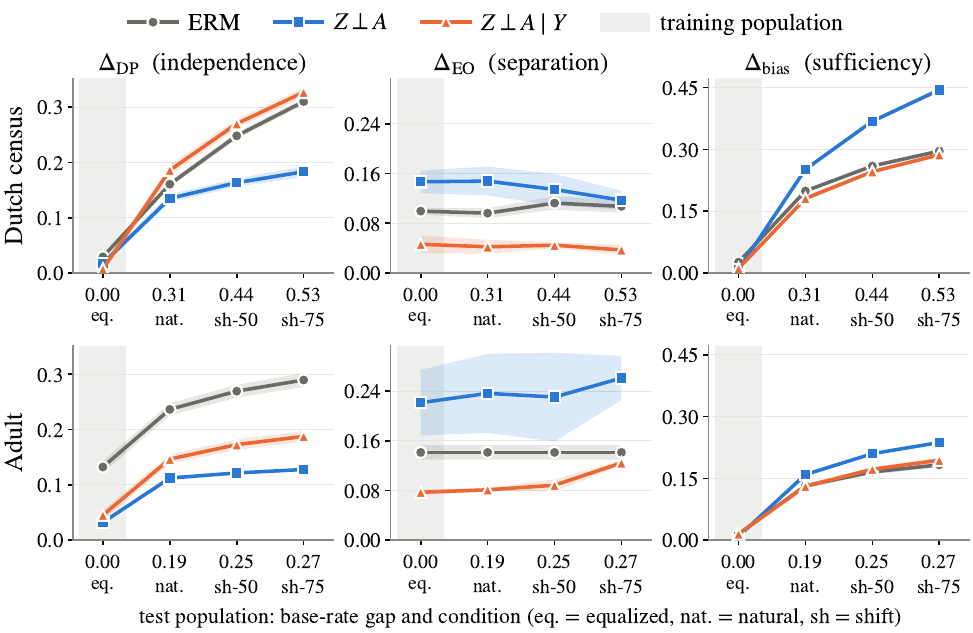}
  \end{minipage}%
  \hfill
  \begin{minipage}[c]{0.30\textwidth}
    \caption{\textbf{Guarantees under a change of evaluation population} ($\mathrm{P}_4$). Trained on the equalized condition (shaded) and evaluated on increasing base-rate gap: $P(Y \mid A)$ changes, $P(X \mid Y,A)$ fixed. $Z \perp A$ loses parity as the gap grows; $Z \perp A \mid Y$ keeps $\Delta_{\mathrm{EO}}$ flat and loses parity by exactly the base-rate gap. $\Delta_{\text{bias}}$ grows. Mean $\pm$ std over three seeds.}
    \label{fig:tradeoff-plot}
  \end{minipage}
  \label{fig:deployment}
\end{figure*}

\section{Discussion}

\paragraph{Removing demographic information as a fairness objective?}
The motivating question for this work is whether models should be trained to \textit{not} \textit{encode} group membership in their latent space. Our analysis shows that representation invariance is neither desirable nor sufficient for fairness. Enforcing strict marginal representation invariance while ensuring demographic parity also removes genuinely useful information and hurts model performance when outcome prevalence differs~\citep{barocas2023fairness,dwork2012fairness,zhao2022inherent,zhao2022fundamental}. This goes beyond the well-known \textit{leveling down} effect~\citep{mittelstadt2023unfairness,zietlow2022leveling}, as we do not just harm performance for the advantaged group; marginal invariance is incompatible with commonly desired fairness criteria. In short, removing information via marginal alignment methods is harmful to the fairness objective: enforcing parity at the cost of performance, separation, and sufficiency.

\paragraph{Removing unnecessarily strong demographic encoding is desirable.} As illustrated in Table~\ref{tab:fnf-results}, class-conditional representation invariance reduces decodability without costing $\mathrm{AUROC}$ and improves $\mathrm{EO}$ relative to $\mathrm{ERM}$. This suggests that some (excess) demographic information learned by $\mathrm{ERM}$ can be erased without costing fairness and performance. This presents a different trade-off, removing information about intra-class variations~(e.g.~comorbidities or disease phenotypes, see~\cite{castro2020causality,li2020rethinking,mahajan2021domain,oakden2020hidden}), and compromises calibration by groups~\citep{barocas2023fairness,hedden2021statistical}, although recent work questions whether this is truly problematic~\citep{kina2026prevalencecalibrationshortcutmitigation}. Every trade-off we discuss stays aligned with the well-established incompatibility between \textit{independence}, \textit{separation} and \textit{sufficiency}~\citep{kleinberg2016inherent,barocas2023fairness}. Even with the reported cost of class-conditional invariance, there is no reason to prefer marginal representation invariance over it.

\paragraph{The strength of constraints matters.} Representations, scores, and thresholded decisions impose different requirements. Marginal representation invariance makes the score distribution identical across all groups. However, at a chosen threshold, achieving parity only requires the same positive prediction rate. Similarly, aligning $Z$ within each outcome class aligns the class-conditional score distribution, whereas achieving separation at a threshold requires equal $\mathrm{TPRs}$ and $\mathrm{FPRs}$. This shows that representation invariance is \textit{sufficient} but not \emph{necessary} for satisfying decision-level criteria~\citep{madras2018learningadversariallyfairtransferable}. This difference also matters for calibration: under unequal prevalences, class-conditional alignment of risk scores generally conflicts with calibration by groups, even though equality of error rates at a chosen threshold does not impose the same restrictions~\citep{reich2020possibility,petersen2023assessingfairnessriskscore}.
Further relaxations are conceivable and potentially desirable, such as distribution \emph{support} matching~\citep{Tong2022}, or matching only the first moment of the conditional score distributions~(\cite{kina2026prevalencecalibrationshortcutmitigation}; also see Appendix~\ref{app:score-level}).



\paragraph{When can representation invariance help?} Our results do not imply that invariance-based methods are never useful, and the drawbacks discussed depend on what form is applied. Marginal representation invariance should generally be avoided when group prevalences ($\pi_a$) are not near equal. Invariance might also be used as a less restrictive mitigation technique~\citep{wu2022fairprune,chiu2023toward} by using weaker regularization strength (in our experiments, $\gamma < 1$ achieves a relatively better performance-parity tradeoff) or in unsupervised domain adaptation where labels $y$ are unavailable for the target domain~\citep{moyer2020scanner,li2018domain}. Here, representation invariance should be applied strategically, acknowledging the associated risks, and favoring conditional approaches for supervised tasks or optimizing post-hoc thresholds if strict latent alignment is unnecessary~\citep{hardt2016equality,reich2020possibility}.

\paragraph{Statistical invariance vs. model invariance.}
We discuss a case where a model treats two patients with similar physiology differently due to their group membership. A \emph{model} is invariant to $A$ if any change in $A$ while keeping everything else fixed leaves $\hat{y}$ unchanged. This still allows $\hat{Y}$ and $A$ to be correlated when the labels $Y$ correlate with $A$.
By contrast, a marginally invariant \emph{representation} requires $Z, \hat{Y} \perp A$ even when $Y$ and $A$ are correlated. Enforcing this can require disparate treatment: if disease presentations differ between groups, the model still maps them to identical distributions by treating similar patients from different groups differently~\citep{lohaus2022two}. 

To address this, an alternative conceptual goal is to learn invariant mechanisms that are stable across all environments~\citep{arjovsky2019invariant,makar2022causally,puli2021out,scholkopf2021toward,subbaswamy2020development}. One approach is \textit{counterfactual invariance}, where the attribute $A$ or a specific intervention should not change the model's prediction~\citep{veitch2021counterfactual}. When $A$ is a demographic attribute, this corresponds to \textit{counterfactual fairness}~\citep{kusner2017counterfactual}. However, defining these counterfactuals is challenging in medical imaging. Changing $A$ may also change anatomy, physiology, overlapping diseases, or even disease presentation. Therefore, specifying which factors change and which remain fixed can affect the predictions. This makes any fairness guarantees dependent on these modeling choices, their (unknown or undiscovered) confounding, and also the validity of the causal model~\citep{chiappa2018pathspecificcounterfactualfairness,kilbertus2020sensitivity}.


\paragraph{Limitations of our work.}
We advise reading our findings with caution. Our models and methods achieved approximate rather than exact statistical invariance, leaving demographic traces in representations. However, this does not change our primary conclusion. Our empirical analysis is limited to classification tasks and simulated prevalence differences between groups, which may not reflect real-world extremes. Fairness gaps and calibration costs are sensitive to operating thresholds, compression, or model selection. \textbf{As future work}, we plan to study why \textit{decodability} behaves differently across data domains and methods, and expand our analysis to complex tasks like segmentation.

\section{Conclusion}
\label{sec:conc}

We have shown that representation invariance, in spite of its intuitive appeal for fair AI, is incompatible with most popular fairness criteria when $A$ and $Y$ are not independent. We show, moreover, that previous work showing the benefit of representation independence might have been a result of failure to completely remove demographic information, suggesting that while invariance is often not desirable, reducing the encoding of demographic information can have a regularizing effect on both performance and fairness.
In particular, we implore the community to avoid the often strongly highlighted concerns regarding encoded demographic information. While encoded demographics may sound deeply alarming to users and even developers not familiar with the invariance-fairness-performance tradeoff, these encoded demographics are, in many practical applications, necessary to obtain the most equitable and high-performing models.
There \emph{is} a real risk of demographic shortcut learning in the case where models encode demographic information more strongly than is necessary for accurate prediction, as was the case for all $\mathrm{ERM}$ models in our experiments.
In such cases, the impetus must not be to aim for \emph{no} demographic encoding, however, but rather to limit encoding to the smallest level that is sufficient for maintaining accurate, robust, and fair prediction.







\subsection*{Generative AI Disclosure}

Following fair and responsible use of AI for research, we list the use of generative AI for the following sub-tasks:

\begin{itemize}
    \item \textbf{Major use:} Writing boilerplate code for the experimental pipeline, automated logging and tracking runs across thousands of experiments, formatting \LaTeX{} tables for the main paper and appendix (entries were verified).
    \item \textbf{Minor use:} Writing scripts to visualize metrics from raw logs, and mapping different notations across the literature (all mathematical proofs and steps are verified), minor checks for grammar, phrasing, flow, and typo fixes.
    \item \textbf{Not used for:} Ideation, core methodology, deriving theoretical proofs, main manuscript, literature referencing and the bibliography, or structuring the paper.
\end{itemize}

All AI-assisted outputs were manually verified by the authors. The authors take full responsibility for the final output and claims in this work.



\bibliography{iclr2027_conference}

@article{quarta2026hidden,
  title={Hidden in Plain Sight: Vector Embeddings give away Demographic Information},
  author={Quarta, Alessandro and Santos, Filipe and Marzullo, Aldo and Sousa, Joao MC and Vieira, Susana M and Celi, Leo Anthony and Seyyed-Kalantari, Laleh and Calimeri, Francesco},
  journal={IEEE Journal of Biomedical and Health Informatics},
  year={2026},
  publisher={IEEE}
}

@article{glocker2023algorithmic,
  title={Algorithmic encoding of protected characteristics in chest X-ray disease detection models},
  author={Glocker, Ben and Jones, Charles and Bernhardt, M{\'e}lanie and Winzeck, Stefan},
  journal={EBioMedicine},
  volume={89},
  year={2023},
  publisher={Elsevier}
}

@article{yang2024limits,
  title={The limits of fair medical imaging {AI} in real-world generalization},
  author={Yang, Yuzhe and Zhang, Haoran and Gichoya, Judy W and Katabi, Dina and Ghassemi, Marzyeh},
  journal={Nature medicine},
  volume={30},
  number={10},
  pages={2838--2848},
  year={2024},
  publisher={Nature Publishing Group US New York}
}

@article{duffy2022confounders,
  title={Confounders mediate {AI} prediction of demographics in medical imaging},
  author={Duffy, Grant and Clarke, Shoa L and Christensen, Matthew and He, Bryan and Yuan, Neal and Cheng, Susan and Ouyang, David},
  journal={NPJ digital medicine},
  volume={5},
  number={1},
  pages={188},
  year={2022},
  publisher={Nature Publishing Group UK London}
}

@article{gichoya2022ai,
  title={{AI} recognition of patient race in medical imaging: a modelling study},
  author={Gichoya, Judy Wawira and Banerjee, Imon and Bhimireddy, Ananth Reddy and Burns, John L and Celi, Leo Anthony and Chen, Li-Ching and Correa, Ramon and Dullerud, Natalie and Ghassemi, Marzyeh and Huang, Shih-Cheng and others},
  journal={The Lancet Digital Health},
  volume={4},
  number={6},
  pages={e406--e414},
  year={2022},
  publisher={Elsevier}
}

@article{yi2021radiology,
  title={Radiology “forensics”: determination of age and sex from chest radiographs using deep learning},
  author={Yi, Paul H and Wei, Jinchi and Kim, Tae Kyung and Shin, Jiwon and Sair, Haris I and Hui, Ferdinand K and Hager, Gregory D and Lin, Cheng Ting},
  journal={Emergency Radiology},
  volume={28},
  number={5},
  pages={949--954},
  year={2021},
  publisher={Springer}
}

@article{lotter2024acquisition,
  title={Acquisition parameters influence AI recognition of race in chest x-rays and mitigating these factors reduces underdiagnosis bias},
  author={Lotter, William},
  journal={Nature Communications},
  volume={15},
  number={1},
  pages={7465},
  year={2024},
  publisher={Nature Publishing Group UK London}
}

@article{brown2023detecting,
  title={Detecting shortcut learning for fair medical AI using shortcut testing},
  author={Brown, Alexander and Tomasev, Nenad and Freyberg, Jan and Liu, Yuan and Karthikesalingam, Alan and Schrouff, Jessica},
  journal={Nature communications},
  volume={14},
  number={1},
  pages={4314},
  year={2023},
  publisher={Nature Publishing Group UK London}
}

@article{stanley2025and,
  title={Where, why, and how is bias learned in medical image analysis models? A study of bias encoding within convolutional networks using synthetic data},
  author={Stanley, Emma AM and Souza, Raissa and Wilms, Matthias and Forkert, Nils D},
  journal={EBioMedicine},
  volume={111},
  year={2025},
  publisher={Elsevier}
}

@inproceedings{lin2024shortcut,
  title={Shortcut learning in medical image segmentation},
  author={Lin, Manxi and Weng, Nina and Mikolaj, Kamil and Bashir, Zahra and Svendsen, Morten BS and Tolsgaard, Martin G and Christensen, Anders N and Feragen, Aasa},
  booktitle={International Conference on Medical Image Computing and Computer-Assisted Intervention},
  pages={623--633},
  year={2024},
  organization={Springer}
}

@inproceedings{jones2023role,
  title={The role of subgroup separability in group-fair medical image classification},
  author={Jones, Charles and Roschewitz, M{\'e}lanie and Glocker, Ben},
  booktitle={International Conference on Medical Image Computing and Computer-Assisted Intervention},
  pages={179--188},
  year={2023},
  organization={Springer}
}

@inproceedings{akintande2025medicine,
  title={Medicine after death: {XAI} and algorithmic fairness under label bias},
  author={Akintande, Olalekan Joseph and Bigdeli, Siavash Arjomand and Feragen, Aasa},
  booktitle={Fourth European Workshop on Algorithmic Fairness},
  pages={171--186},
  year={2025},
  organization={ML Research Press}
}

@article{parikh2026towards,
  title={Towards Fairness under Label Bias in Image Segmentation: Impact, Measurement and Mitigation},
  author={Parikh, Aditya and Frank, Stella and Das, Sneha and Feragen, Aasa},
  journal={arXiv preprint arXiv:2605.06891},
  year={2026}
}

@inproceedings{stanley2025exploring,
  title = {Exploring the Interplay of Label Bias with Subgroup Size and Separability: A Case Study in Mammographic Density},
  author = {Stanley, Emma AM and Mehta, Raghav and Roschewitz, M{\'e}lanie},
  booktitle = {Fairness of AI in Medical Imaging: Third International Workshop, FAIMI 2025, Held in Conjunction with MICCAI 2025, Daejeon, South Korea, September 23, 2025, Proceedings},
  pages = {74},
  year = {2025},
  organization = {Springer Nature},
}

@inproceedings{jones2025rethinking,
  title={Rethinking fair representation learning for performance-sensitive tasks},
  author={Jones, Charles and de Sousa Ribeiro, Fabio and Roschewitz, M{\'e}lanie and Castro, Daniel and Glocker, Ben},
  booktitle={International Conference on Learning Representations},
  volume={2025},
  pages={29269--29288},
  year={2025}
}

@article{mittelstadt2023unfairness,
  title={The unfairness of fair machine learning: Leveling down and strict egalitarianism by default},
  author={Mittelstadt, Brent and Wachter, Sandra and Russell, Chris},
  journal={Mich. Tech. L. Rev.},
  volume={30},
  pages={1},
  year={2023},
  publisher={HeinOnline}
}

@inproceedings{edwards2015censoring,
  title     = {{Censoring Representations with an Adversary}},
  author    = {Edwards, Harrison and Storkey, Amos J.},
  booktitle = {International Conference on Learning Representations},
  year      = {2016},
}

@inproceedings{louizos2015variational,
  author       = {Christos Louizos and
                  Kevin Swersky and
                  Yujia Li and
                  Max Welling and
                  Richard S. Zemel},
  editor       = {Yoshua Bengio and
                  Yann LeCun},
  title        = {The Variational Fair Autoencoder},
  booktitle    = {4th International Conference on Learning Representations, {ICLR} 2016,
                  San Juan, Puerto Rico, May 2-4, 2016, Conference Track Proceedings},
  year         = {2016},
  url          = {http://arxiv.org/abs/1511.00830},
  bibsource    = {dblp computer science bibliography, https://dblp.org}
}

@inproceedings{zhang2018mitigating,
  title={Mitigating unwanted biases with adversarial learning},
  author={Zhang, Brian Hu and Lemoine, Blake and Mitchell, Margaret},
  booktitle={Proceedings of the 2018 AAAI/ACM Conference on AI, Ethics, and Society},
  pages={335--340},
  year={2018}
}

@article{zhao2022inherent,
  title={Inherent tradeoffs in learning fair representations},
  author={Zhao, Han and Gordon, Geoffrey J},
  journal={Journal of Machine Learning Research},
  volume={23},
  number={57},
  pages={1--26},
  year={2022}
}

@inproceedings{creager2019flexibly,
  title={Flexibly fair representation learning by disentanglement},
  author={Creager, Elliot and Madras, David and Jacobsen, J{\"o}rn-Henrik and Weis, Marissa and Swersky, Kevin and Pitassi, Toniann and Zemel, Richard},
  booktitle={International conference on machine learning},
  pages={1436--1445},
  year={2019},
  organization={PMLR}
}

@inproceedings{zemel2013learning,
  title={Learning fair representations},
  author={Zemel, Rich and Wu, Yu and Swersky, Kevin and Pitassi, Toni and Dwork, Cynthia},
  booktitle={International conference on machine learning},
  pages={325--333},
  year={2013},
  organization={PMLR}
}

@article{rashid2026stride,
  title={Stride-Net: Fairness-Aware Disentangled Representation Learning for Chest X-Ray Diagnosis},
  author={Rashid, Darakshan and Imam, Raza and Mahapatra, Dwarikanath and Lall, Brejesh},
  journal={arXiv preprint arXiv:2602.10875},
  year={2026}
}

@inproceedings{wang2024advancing,
  title={Advancing graph counterfactual fairness through fair representation learning},
  author={Wang, Zichong and Chu, Zhibo and Blanco, Ronald and Chen, Zhong and Chen, Shu-Ching and Zhang, Wenbin},
  booktitle={Joint European Conference on Machine Learning and Knowledge Discovery in Databases},
  pages={40--58},
  year={2024},
  organization={Springer}
}

@InProceedings{Tong2022,
  author    = {Shangyuan Tong and Timur Garipov and Yang Zhang and Shiyu Chang and Tommi S. Jaakkola},
  booktitle = {International Conference on Learning Representations},
  title     = {Adversarial Support Alignment},
  year      = {2022},
  url       = {https://openreview.net/forum?id=26gKg6x-ie},
}

@inproceedings{sadri2025mutual,
  title={Mutual Information Regularization for Fairness-Aware Deep Imaging Representations},
  author={Sadri, Amir Reza and DeSilvio, Thomas and Viswanath, Satish E},
  booktitle={International Conference on Medical Image Computing and Computer-Assisted Intervention},
  pages={399--409},
  year={2025},
  organization={Springer}
}

@article{sarridis2024flac,
  title={Flac: Fairness-aware representation learning by suppressing attribute-class associations},
  author={Sarridis, Ioannis and Koutlis, Christos and Papadopoulos, Symeon and Diou, Christos},
  journal={IEEE Transactions on Pattern Analysis and Machine Intelligence},
  volume={47},
  number={2},
  pages={1148--1160},
  year={2024},
  publisher={IEEE}
}

@article{hardt2016equality,
  title = {Equality of opportunity in supervised learning},
  author = {Hardt, Moritz and Price, Eric and Srebro, Nati},
  journal = {Advances in neural information processing systems},
  volume = {29},
  year = {2016},
}

@book{barocas2023fairness,
  title = {Fairness and machine learning: Limitations and opportunities},
  author = {Barocas, Solon and Hardt, Moritz and Narayanan, Arvind},
  year = {2023},
  publisher = {MIT press},
}

@InProceedings{kleinberg2016inherent,
  author =	{Kleinberg, Jon and Mullainathan, Sendhil and Raghavan, Manish},
  title =	{{Inherent Trade-Offs in the Fair Determination of Risk Scores}},
  booktitle =	{8th Innovations in Theoretical Computer Science Conference (ITCS 2017)},
  pages =	{43:1--43:23},
  year =	{2017},
  volume =	{67},
  doi =		{10.4230/LIPIcs.ITCS.2017.43}
}

@InProceedings{reich2020possibility,
  author =	{Lazar Reich, Claire and Vijaykumar, Suhas},
  title =	{{A Possibility in Algorithmic Fairness: Can Calibration and Equal Error Rates Be Reconciled?}},
  booktitle =	{2nd Symposium on Foundations of Responsible Computing (FORC 2021)},
  pages =	{4:1--4:21},
  series =	{Leibniz International Proceedings in Informatics (LIPIcs)},
  year =	{2021},
  volume =	{192},
  doi =		{10.4230/LIPIcs.FORC.2021.4}
}

@inproceedings{mcnamara2019costs,
  title={Costs and benefits of fair representation learning},
  author={McNamara, Daniel and Ong, Cheng Soon and Williamson, Robert C},
  booktitle={Proceedings of the 2019 AAAI/ACM Conference on AI, Ethics, and Society},
  pages={263--270},
  year={2019}
}

@article{zhao2022fundamental,
  title={Fundamental limits and tradeoffs in invariant representation learning},
  author={Zhao, Han and Dan, Chen and Aragam, Bryon and Jaakkola, Tommi S and Gordon, Geoffrey J and Ravikumar, Pradeep},
  journal={Journal of machine learning research},
  volume={23},
  number={340},
  pages={1--49},
  year={2022}
}

@article{li2020rethinking,
  title={Rethinking distributional matching based domain adaptation},
  author={Li, Bo and Wang, Yezhen and Che, Tong and Zhang, Shanghang and Zhao, Sicheng and Xu, Pengfei and Zhou, Wei and Bengio, Yoshua and Keutzer, Kurt},
  journal={arXiv preprint arXiv:2006.13352},
  year={2020}
}

@inproceedings{zietlow2022leveling,
  title={Leveling down in computer vision: Pareto inefficiencies in fair deep classifiers},
  author={Zietlow, Dominik and Lohaus, Michael and Balakrishnan, Guha and Kleindessner, Matth{\"a}us and Locatello, Francesco and Sch{\"o}lkopf, Bernhard and Russell, Chris},
  booktitle={2022 IEEE/CVF Conference on Computer Vision and Pattern Recognition (CVPR)},
  pages={10400--10411},
  year={2022},
  organization={IEEE}
}

@misc{madras2018learningadversariallyfairtransferable,
      title={Learning Adversarially Fair and Transferable Representations}, 
      author={David Madras and Elliot Creager and Toniann Pitassi and Richard Zemel},
      year={2018},
      eprint={1802.06309},
      archivePrefix={arXiv},
      primaryClass={cs.LG},
      url={https://arxiv.org/abs/1802.06309}, 
}

@article{arjovsky2019invariant,
  title={Invariant risk minimization},
  author={Arjovsky, Martin and Bottou, L{\'e}on and Gulrajani, Ishaan and Lopez-Paz, David},
  journal={arXiv preprint arXiv:1907.02893},
  year={2019}
}

@inproceedings{makar2022causally,
  title={Causally motivated shortcut removal using auxiliary labels},
  author={Makar, Maggie and Packer, Ben and Moldovan, Dan and Blalock, Davis and Halpern, Yoni and D’Amour, Alexander},
  booktitle={International Conference on Artificial Intelligence and Statistics},
  pages={739--766},
  year={2022},
  organization={PMLR}
}

@inproceedings{puli2021out,
  title={Out-of-distribution generalization in the presence of nuisance-induced spurious correlations},
  author    = {Puli, Aahlad Manas and Zhang, Lily H and Oermann, Eric Karl and Ranganath, Rajesh},
  booktitle = {International Conference on Learning Representations},
  year      = {2022},
}

@article{scholkopf2021toward,
  title={Toward causal representation learning},
  author={Sch{\"o}lkopf, Bernhard and Locatello, Francesco and Bauer, Stefan and Ke, Nan Rosemary and Kalchbrenner, Nal and Goyal, Anirudh and Bengio, Yoshua},
  journal={Proceedings of the IEEE},
  volume={109},
  number={5},
  pages={612--634},
  year={2021},
  publisher={IEEE}
}

@article{subbaswamy2020development,
  title={From development to deployment: dataset shift, causality, and shift-stable models in health {AI}},
  author={Subbaswamy, Adarsh and Saria, Suchi},
  journal={Biostatistics},
  volume={21},
  number={2},
  pages={345--352},
  year={2020},
  publisher={Oxford University Press}
}

@inproceedings{veitch2021counterfactual,
author = {Veitch, Victor and D'Amour, Alexander and Yadlowsky, Steve and Eisenstein, Jacob},
title = {Counterfactual invariance to spurious correlations: why and how to pass stress tests},
year = {2021},
booktitle = {Proceedings of the 35th International Conference on Neural Information Processing Systems},
articleno = {1239},
numpages = {13},
series = {NIPS '21}
}

@article{kusner2017counterfactual,
  title={Counterfactual fairness},
  author={Kusner, Matt J and Loftus, Joshua and Russell, Chris and Silva, Ricardo},
  journal={Advances in neural information processing systems},
  volume={30},
  year={2017}
}

@article{lohaus2022two,
  title={Are two heads the same as one? Identifying disparate treatment in fair neural networks},
  author={Lohaus, Michael and Kleindessner, Matth{\"a}us and Kenthapadi, Krishnaram and Locatello, Francesco and Russell, Chris},
  journal={Advances in Neural Information Processing Systems},
  volume={35},
  pages={16548--16562},
  year={2022}
}

@misc{chiappa2018pathspecificcounterfactualfairness,
      title={Path-Specific Counterfactual Fairness}, 
      author={Silvia Chiappa and Thomas P. S. Gillam},
      year={2018},
      eprint={1802.08139},
      archivePrefix={arXiv},
      primaryClass={stat.ML},
      url={https://arxiv.org/abs/1802.08139}, 
}

@inproceedings{kilbertus2020sensitivity,
  title={The sensitivity of counterfactual fairness to unmeasured confounding},
  author={Kilbertus, Niki and Ball, Philip J and Kusner, Matt J and Weller, Adrian and Silva, Ricardo},
  booktitle={Uncertainty in artificial intelligence},
  pages={616--626},
  year={2020},
  organization={PMLR}
}

@inproceedings{balunović2022fairnormalizingflows,
  title     = {{Fair Normalizing Flows}},
    author={Mislav Balunović and Anian Ruoss and Martin Vechev},
  booktitle = {International Conference on Learning Representations},
  year      = {2022},
}

@inproceedings{wu2022fairprune,
  title={Fairprune: Achieving fairness through pruning for dermatological disease diagnosis},
  author={Wu, Yawen and Zeng, Dewen and Xu, Xiaowei and Shi, Yiyu and Hu, Jingtong},
  booktitle={International Conference on Medical Image Computing and Computer-Assisted Intervention},
  pages={743--753},
  year={2022},
  organization={Springer}
}

@inproceedings{chiu2023toward,
  title={Toward fairness through fair multi-exit framework for dermatological disease diagnosis},
  author={Chiu, Ching-Hao and Chung, Hao-Wei and Chen, Yu-Jen and Shi, Yiyu and Ho, Tsung-Yi},
  booktitle={International Conference on Medical Image Computing and Computer-Assisted Intervention},
  pages={97--107},
  year={2023},
  organization={Springer}
}

@inproceedings{li2018domain,
  title={Domain generalization via conditional invariant representations},
  author={Li, Ya and Gong, Mingming and Tian, Xinmei and Liu, Tongliang and Tao, Dacheng},
  booktitle={Proceedings of the AAAI conference on artificial intelligence},
  volume={32},
  year={2018}
}

@article{moyer2020scanner,
  title={Scanner invariant representations for diffusion MRI harmonization},
  author={Moyer, Daniel and Ver Steeg, Greg and Tax, Chantal MW and Thompson, Paul M},
  journal={Magnetic resonance in medicine},
  volume={84},
  number={4},
  pages={2174--2189},
  year={2020},
  publisher={Wiley Online Library}
}

@inproceedings{dwork2012fairness,
  title = {Fairness through awareness},
  author = {Dwork, Cynthia and Hardt, Moritz and Pitassi, Toniann and Reingold, Omer and Zemel, Richard},
  booktitle = {Proceedings of the 3rd innovations in theoretical computer science conference},
  pages = {214--226},
  year = {2012},
}

@article{hedden2021statistical,
  title={On statistical criteria of algorithmic fairness},
  author={Hedden, Brian},
  journal={Philosophy and Public Affairs},
  volume={49},
  pages={209},
  year={2021},
  publisher={HeinOnline}
}

@article{castro2020causality,
  title={Causality matters in medical imaging},
  author={Castro, Daniel C and Walker, Ian and Glocker, Ben},
  journal={Nature Communications},
  volume={11},
  number={1},
  pages={3673},
  year={2020},
  publisher={Nature Publishing Group UK London}
}

@inproceedings{mahajan2021domain,
  title={Domain generalization using causal matching},
  author={Mahajan, Divyat and Tople, Shruti and Sharma, Amit},
  booktitle={International conference on machine learning},
  pages={7313--7324},
  year={2021},
  organization={PMLR}
}

@inproceedings{oakden2020hidden,
  title = {Hidden stratification causes clinically meaningful failures in machine learning for medical imaging},
  author = {Oakden-Rayner, Luke and Dunnmon, Jared and Carneiro, Gustavo and R{\'e}, Christopher},
  booktitle = {Proceedings of the ACM conference on health, inference, and learning},
  pages = {151--159},
  year = {2020},
}

@misc{asiaee2026fairness,
      title={Fairness Under Group-Conditional Prior Probability Shift: Invariance, Drift, and Target-Aware Post-Processing}, 
      author={Amir Asiaee and Kaveh Aryan},
      year={2026},
      eprint={2602.05144},
      archivePrefix={arXiv},
      primaryClass={cs.LG},
      url={https://arxiv.org/abs/2602.05144}, 
}

@article{cox1958two,
  title={Two further applications of a model for binary regression},
  author={Cox, David R},
  journal={Biometrika},
  volume={45},
  number={3/4},
  pages={562--565},
  year={1958},
  publisher={JSTOR}
}

@misc{petersen2023assessingfairnessriskscore,
      title={On (assessing) the fairness of risk score models}, 
      author={Eike Petersen and Melanie Ganz and Sune Hannibal Holm and Aasa Feragen},
      year={2023},
      eprint={2302.08851},
      archivePrefix={arXiv},
      primaryClass={cs.LG},
      url={https://arxiv.org/abs/2302.08851}, 
}

@InProceedings{kina2026prevalencecalibrationshortcutmitigation,
  author    = {Mohamed Amine Kina and Eike Petersen},
  booktitle = {17th International Workshop on Machine Learning in Medical Imaging (MLMI 2026) at MICCAI 2026},
  title     = {Prevalence calibration as shortcut mitigation},
  year      = {2026},
}

@article{ganin2016domain,
  title = {Domain-adversarial training of neural networks},
  author = {Ganin, Yaroslav and Ustinova, Evgeniya and Ajakan, Hana and Germain, Pascal and Larochelle, Hugo and Laviolette, Fran{\c{c}}ois and March, Mario and Lempitsky, Victor},
  journal = {Journal of machine learning research},
  volume = {17},
  number = {59},
  pages = {1--35},
  year = {2016},
}

@article{long2018conditional,
  title={Conditional adversarial domain adaptation},
  author={Long, Mingsheng and Cao, Zhangjie and Wang, Jianmin and Jordan, Michael},
  journal={Advances in neural information processing systems},
  volume={31},
  year={2018}
}

@inproceedings{kohavi1996scaling,
author = {Kohavi, Ron},
title = {Scaling up the accuracy of {Naive-Bayes} classifiers: a decision-tree hybrid},
year = {1996},
publisher = {AAAI Press},
booktitle = {Proceedings of the Second International Conference on Knowledge Discovery and Data Mining},
pages = {202--207},
numpages = {6},
location = {Portland, Oregon},
series = {KDD'96}
}

@article{calders2010three,
  title={Three naive {B}ayes approaches for discrimination-free classification},
  author={Calders, Toon and Verwer, Sicco},
  journal={Data mining and knowledge discovery},
  volume={21},
  number={2},
  pages={277--292},
  year={2010},
  publisher={Springer}
}

@article{redmond2002data,
  title={A data-driven software tool for enabling cooperative information sharing among police departments},
  author={Redmond, Michael and Baveja, Alok},
  journal={European Journal of Operational Research},
  volume={141},
  number={3},
  pages={660--678},
  year={2002},
  publisher={Elsevier}
}

@article{yeh2009comparisons,
  title={The comparisons of data mining techniques for the predictive accuracy of probability of default of credit card clients},
  author={Yeh, I-Cheng and Lien, Che-hui},
  journal={Expert systems with applications},
  volume={36},
  number={2},
  pages={2473--2480},
  year={2009},
  publisher={Elsevier}
}

@inproceedings{irvin2019chexpert,
  title={Chexpert: A large chest radiograph dataset with uncertainty labels and expert comparison},
  author    = {Irvin, Jeremy and Rajpurkar, Pranav and Ko, Michael and Yu, Yifan and Ciurea-Ilcus, Silviana and Chute, Christopher and Marklund, Henrik and Haghgoo, Behzad and Ball, Robyn L. and Shpanskaya, Katie S. and Seekins, Jayne and Mong, David A. and Halabi, Safwan S. and Sandberg, Jesse K. and Jones, Ricky and Larson, David B. and Langlotz, Curtis P. and Patel, Bhavik N. and Lungren, Matthew P. and Ng, Andrew Y.},
  booktitle={Proceedings of the AAAI conference on artificial intelligence},
  volume={33},
  pages={590--597},
  year={2019}
}

@article{gretton2006kernel,
  title = {A kernel method for the two-sample-problem},
  author = {Gretton, Arthur and Borgwardt, Karsten and Rasch, Malte and Sch{\"o}lkopf, Bernhard and Smola, Alex},
  journal = {Advances in neural information processing systems},
  volume = {19},
  year = {2006},
}

@inproceedings{perez2017fair,
  title={Fair kernel learning},
  author={P{\'e}rez-Suay, Adri{\'a}n and Laparra, Valero and Mateo-Garc{\'\i}a, Gonzalo and Mu{\~n}oz-Mar{\'\i}, Jordi and G{\'o}mez-Chova, Luis and Camps-Valls, Gustau},
  booktitle={Joint European Conference on Machine Learning and Knowledge Discovery in Databases},
  pages={339--355},
  year={2017},
  organization={Springer}
}

@article{belrose2023leace,
  title={Leace: Perfect linear concept erasure in closed form},
  author={Belrose, Nora and Schneider-Joseph, David and Ravfogel, Shauli and Cotterell, Ryan and Raff, Edward and Biderman, Stella},
  journal={Advances in Neural Information Processing Systems},
  volume={36},
  pages={66044--66063},
  year={2023}
}

@inproceedings{ravfogel2022adversarial,
  title={Adversarial concept erasure in kernel space},
  author={Ravfogel, Shauli and Vargas, Francisco and Goldberg, Yoav and Cotterell, Ryan},
  booktitle={Proceedings of the 2022 Conference on Empirical Methods in Natural Language Processing},
  pages={6034--6055},
  year={2022}
}

@inproceedings{shakibania2026obliviate,
  title={Obliviate: Erasing Concepts from Autoregressive Image Generation Models},
  author={Shakibania, Hossein and Grebe, Jonas Henry and Braun, Tobias and Aktemur, Ege and Aslani, Saleh and Yi{\u{g}}it, Mehmet G and Rohrbach, Marcus},
  booktitle={European Conference on Computer Vision},
  pages={665--683},
  year={2026},
  organization={Springer}
}

@article{petersen2023demographically,
  title={Are demographically invariant models and representations in medical imaging fair?},
  author={Petersen, Eike and Ferrante, Enzo and Ganz, Melanie and Feragen, Aasa},
  journal={arXiv preprint arXiv:2305.01397},
  year={2023}
}

@article{rahimi2007random,
  title={Random features for large-scale kernel machines},
  author={Rahimi, Ali and Recht, Benjamin},
  journal={Advances in neural information processing systems},
  volume={20},
  year={2007}
}

@misc{cerrato202410yearsfairrepresentations,
      title={10 Years of Fair Representations: Challenges and Opportunities}, 
      author={Mattia Cerrato and Marius Köppel and Philipp Wolf and Stefan Kramer},
      year={2024},
      eprint={2407.03834},
      archivePrefix={arXiv},
      primaryClass={cs.LG},
      url={https://arxiv.org/abs/2407.03834}, 
}
\bibliographystyle{iclr2027_conference}

\clearpage

\appendix
\section{Appendix 1: Tabular Datasets}
\label{app:tabular}

\subsection{Additional Details: Fairness under increasing base-rate gap and prevalence shift conditions}
\label{app:base_rate}

Refer to Fig.~\ref{fig:shift-all}, Fig.~\ref{fig:prevalence-gap} and Table~\ref{tab:app-shift} for results of all evaluation metrics against the training base-rate gap for the four prevalence conditions. Class-conditional alignment is similar to $\mathrm{ERM}$'s $\mathrm{AUROC}$ at every gap, while the marginal flow manages to reduce decodability, and its summed error $e_0+e_1$ rises with the gap. $D_A$ for $\mathrm{ERM}$ follows the gap where $A$ becomes decodable from $Y$. $\Delta_{\mathrm{DP}}$ for $\mathrm{ERM}$ and class-conditional alignment follows the default base-rate; $\Delta_{\mathrm{EO}}$ for the marginal method increases with the gap; $\Delta_{\mathrm{PPV}}$ and $\Delta_{\mathrm{bias}}$ grow for every method, as anticipated from our propositions. Fig.~\ref{fig:prop2} plots the summed error of all $2{,}040$ evaluations against the bound~\citep{mcnamara2019costs}; none falls below it.

\begin{figure}[!htbp]
  \centering
  \includegraphics[width=\linewidth]{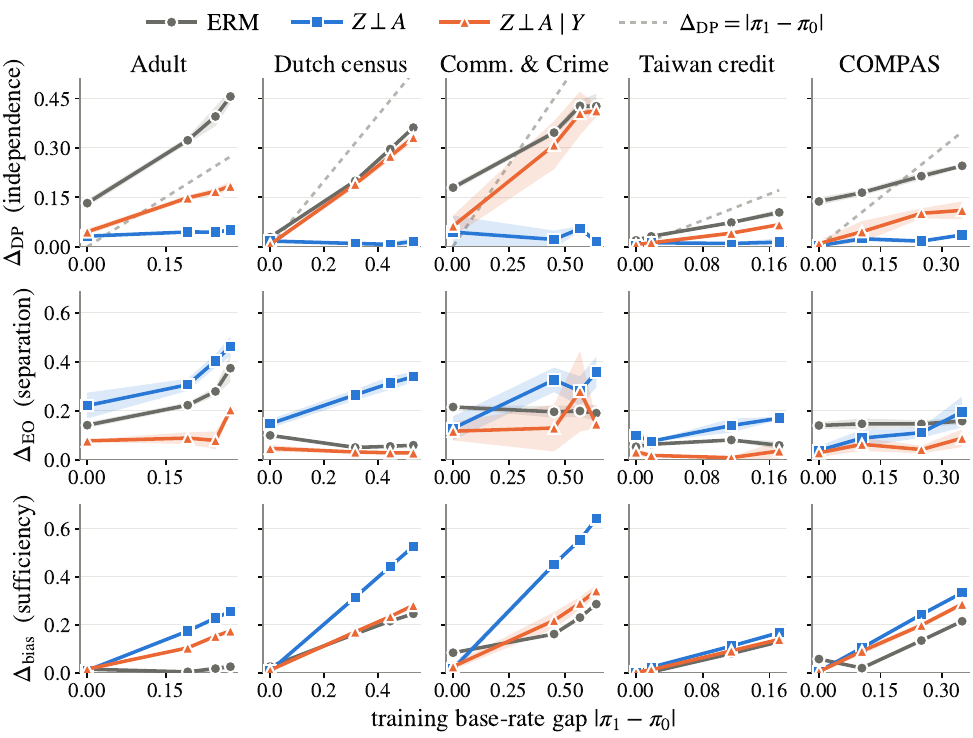}
  \caption{\textbf{Fairness gaps increase with the base-rate gap.} Top: $\Delta_{\mathrm{DP}}$; Middle: $\Delta_{\mathrm{EO}}$; Bottom: $\Delta_{\mathrm{bias}}$ against the training base-rate gap $|\pi_1-\pi_0|$ over the four prevalence conditions, per dataset and method. Marginal representation invariance harms \textit{separation} for parity at a rate set by the gap ($\mathrm{P}_1$); class-conditional representation invariance holds $\Delta_{\mathrm{EO}}$ ($\mathrm{P}_3$) but not $\Delta_{\mathrm{bias}}$ ($\mathrm{P}_3$). Mean $\pm$ std over three seeds reported.}
  \label{fig:prevalence-gap}
\end{figure}

\begin{figure}[!htbp]
  \centering
  \includegraphics[width=\linewidth]{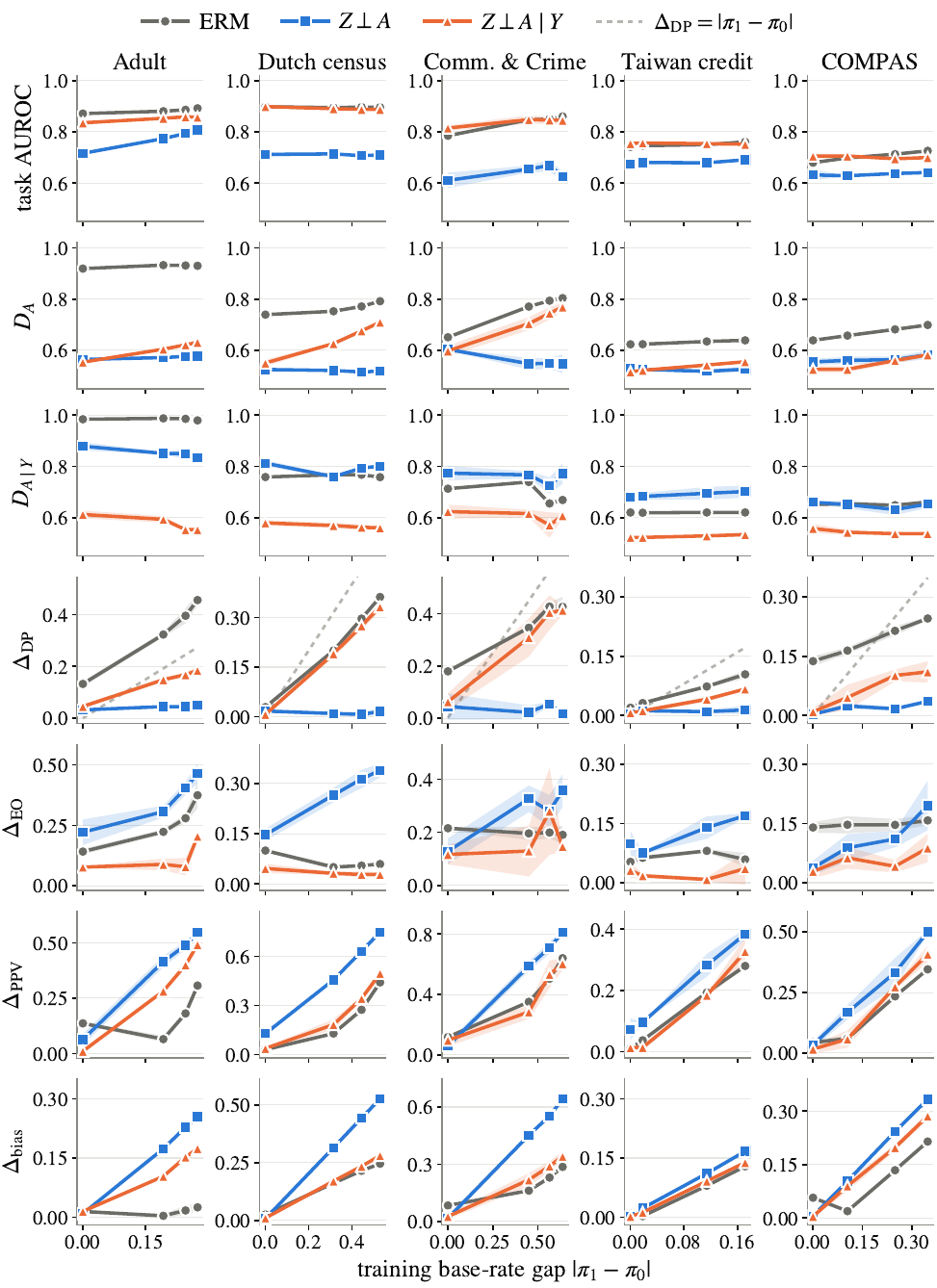}
  \caption{Every metric against the training base-rate gap $|\pi_1-\pi_0|$ (equalized, natural, shift-50, shift-75; matched test). Dotted line in the $\Delta_{\mathrm{DP}}$ row: $\Delta_{\mathrm{DP}}=|\pi_1-\pi_0|$. Mean $\pm$ sd over seeds.}
  \label{fig:shift-all}
\end{figure}

\begin{figure}[!htbp]
  \centering
  \includegraphics[width=.62\linewidth]{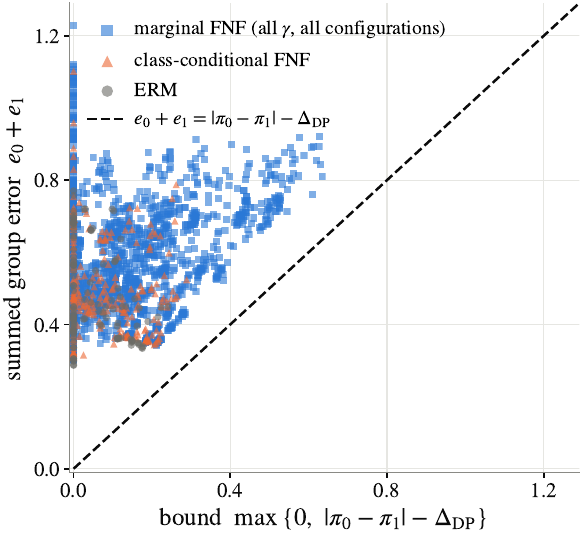}
  \caption{Summed group error $e_0+e_1$ against the bound $\max\{0,|\pi_0-\pi_1|-\Delta_{\mathrm{DP}}\}$ (mentioned in $\mathrm{P}_2$) for every model$\times$prevalence condition.}
  \label{fig:prop2}
\end{figure}

\begin{table*}[!htbp]
\centering
\caption{\textbf{Evaluation metrics by training condition across datasets} (reported as mean $\pm$ std over all seeds). $\Delta\pi = |\pi_1-\pi_0|$ is the base-rate gap. \textbf{Conditions}: Eq (equalized), Nat (natural), S50 (shift-50), S75 (shift-75). \textbf{Methods}: ERM, \textbf{M} (marginal alignment, $Z\perp A$), and \textbf{C} (class-conditional alignment, $Z\perp A\mid Y$). Datasets: Adult, Dutch census (Dutch), Communities \& Crime (Crime), Taiwan credit (Credit), and COMPAS.}
\label{tab:app-shift}
\resizebox{\textwidth}{!}{%
\begin{tabular}{lll r rrrrrrrr}
\toprule
\textbf{Dataset} & \textbf{Cond.} & \textbf{Method} & $\boldsymbol{\Delta\pi}$ & \textbf{AUROC} & $\boldsymbol{D_A}$ & $\boldsymbol{D_{A\mid Y}}$ & $\boldsymbol{\Delta_{\mathrm{DP}}}$ & $\boldsymbol{\Delta_{\mathrm{EO}}}$ & $\boldsymbol{\Delta_{\mathrm{PPV}}}$ & $\boldsymbol{\Delta_{\mathrm{bias}}}$ & $\boldsymbol{e_0+e_1}$ \\
\midrule
\multirow{12}{*}{Adult} 
      & Eq  & ERM & 0.00 & 0.87 $\pm$ 0.00 & 0.92 $\pm$ 0.00 & 0.98 $\pm$ 0.00 & 0.13 $\pm$ 0.01 & 0.14 $\pm$ 0.01 & 0.14 $\pm$ 0.01 & 0.01 $\pm$ 0.00 & 0.42 $\pm$ 0.02 \\
      & Eq  & M   & 0.00 & 0.72 $\pm$ 0.01 & 0.56 $\pm$ 0.00 & 0.88 $\pm$ 0.02 & 0.03 $\pm$ 0.00 & 0.22 $\pm$ 0.05 & 0.06 $\pm$ 0.03 & 0.01 $\pm$ 0.00 & 0.57 $\pm$ 0.06 \\
      & Eq  & C   & 0.00 & 0.84 $\pm$ 0.01 & 0.55 $\pm$ 0.00 & 0.61 $\pm$ 0.02 & 0.05 $\pm$ 0.01 & 0.08 $\pm$ 0.00 & 0.01 $\pm$ 0.00 & 0.01 $\pm$ 0.00 & 0.54 $\pm$ 0.05 \\
      \addlinespace[2pt]
      & Nat & ERM & 0.19 & 0.88 $\pm$ 0.00 & 0.93 $\pm$ 0.00 & 0.99 $\pm$ 0.00 & 0.32 $\pm$ 0.01 & 0.22 $\pm$ 0.01 & 0.06 $\pm$ 0.01 & 0.00 $\pm$ 0.00 & 0.38 $\pm$ 0.01 \\
      & Nat & M   & 0.19 & 0.77 $\pm$ 0.00 & 0.57 $\pm$ 0.01 & 0.85 $\pm$ 0.01 & 0.04 $\pm$ 0.00 & 0.31 $\pm$ 0.02 & 0.42 $\pm$ 0.03 & 0.17 $\pm$ 0.00 & 0.65 $\pm$ 0.07 \\
      & Nat & C   & 0.19 & 0.85 $\pm$ 0.00 & 0.60 $\pm$ 0.00 & 0.59 $\pm$ 0.01 & 0.15 $\pm$ 0.01 & 0.09 $\pm$ 0.02 & 0.28 $\pm$ 0.00 & 0.10 $\pm$ 0.00 & 0.49 $\pm$ 0.04 \\
      \addlinespace[2pt]
      & S50 & ERM & 0.25 & 0.89 $\pm$ 0.00 & 0.93 $\pm$ 0.00 & 0.99 $\pm$ 0.00 & 0.40 $\pm$ 0.03 & 0.28 $\pm$ 0.02 & 0.18 $\pm$ 0.01 & 0.02 $\pm$ 0.01 & 0.37 $\pm$ 0.03 \\
      & S50 & M   & 0.25 & 0.79 $\pm$ 0.00 & 0.58 $\pm$ 0.01 & 0.85 $\pm$ 0.02 & 0.04 $\pm$ 0.00 & 0.40 $\pm$ 0.03 & 0.49 $\pm$ 0.02 & 0.23 $\pm$ 0.00 & 0.68 $\pm$ 0.04 \\
      & S50 & C   & 0.25 & 0.86 $\pm$ 0.00 & 0.62 $\pm$ 0.01 & 0.55 $\pm$ 0.00 & 0.17 $\pm$ 0.01 & 0.08 $\pm$ 0.04 & 0.40 $\pm$ 0.01 & 0.15 $\pm$ 0.00 & 0.51 $\pm$ 0.05 \\
      \addlinespace[2pt]
      & S75 & ERM & 0.27 & 0.89 $\pm$ 0.00 & 0.93 $\pm$ 0.00 & 0.98 $\pm$ 0.00 & 0.46 $\pm$ 0.02 & 0.37 $\pm$ 0.06 & 0.31 $\pm$ 0.01 & 0.03 $\pm$ 0.01 & 0.38 $\pm$ 0.04 \\
      & S75 & M   & 0.27 & 0.81 $\pm$ 0.00 & 0.58 $\pm$ 0.01 & 0.83 $\pm$ 0.01 & 0.05 $\pm$ 0.01 & 0.46 $\pm$ 0.04 & 0.55 $\pm$ 0.01 & 0.25 $\pm$ 0.00 & 0.63 $\pm$ 0.02 \\
      & S75 & C   & 0.27 & 0.86 $\pm$ 0.00 & 0.63 $\pm$ 0.01 & 0.55 $\pm$ 0.00 & 0.18 $\pm$ 0.01 & 0.20 $\pm$ 0.00 & 0.49 $\pm$ 0.01 & 0.17 $\pm$ 0.01 & 0.50 $\pm$ 0.02 \\
\midrule
\multirow{12}{*}{Dutch} 
      & Eq  & ERM & 0.00 & 0.90 $\pm$ 0.00 & 0.74 $\pm$ 0.00 & 0.76 $\pm$ 0.00 & 0.03 $\pm$ 0.00 & 0.10 $\pm$ 0.01 & 0.03 $\pm$ 0.00 & 0.03 $\pm$ 0.00 & 0.30 $\pm$ 0.00 \\
      & Eq  & M   & 0.00 & 0.71 $\pm$ 0.01 & 0.53 $\pm$ 0.00 & 0.81 $\pm$ 0.01 & 0.02 $\pm$ 0.01 & 0.15 $\pm$ 0.02 & 0.13 $\pm$ 0.01 & 0.01 $\pm$ 0.00 & 0.64 $\pm$ 0.03 \\
      & Eq  & C   & 0.00 & 0.90 $\pm$ 0.00 & 0.55 $\pm$ 0.01 & 0.58 $\pm$ 0.01 & 0.01 $\pm$ 0.01 & 0.05 $\pm$ 0.01 & 0.04 $\pm$ 0.01 & 0.01 $\pm$ 0.01 & 0.36 $\pm$ 0.01 \\
      \addlinespace[2pt]
      & Nat & ERM & 0.31 & 0.89 $\pm$ 0.00 & 0.75 $\pm$ 0.00 & 0.77 $\pm$ 0.00 & 0.20 $\pm$ 0.01 & 0.05 $\pm$ 0.01 & 0.13 $\pm$ 0.00 & 0.16 $\pm$ 0.00 & 0.36 $\pm$ 0.00 \\
      & Nat & M   & 0.31 & 0.71 $\pm$ 0.01 & 0.52 $\pm$ 0.00 & 0.76 $\pm$ 0.01 & 0.01 $\pm$ 0.00 & 0.27 $\pm$ 0.02 & 0.46 $\pm$ 0.01 & 0.31 $\pm$ 0.00 & 0.70 $\pm$ 0.01 \\
      & Nat & C   & 0.31 & 0.89 $\pm$ 0.00 & 0.63 $\pm$ 0.00 & 0.57 $\pm$ 0.01 & 0.19 $\pm$ 0.01 & 0.03 $\pm$ 0.00 & 0.18 $\pm$ 0.03 & 0.17 $\pm$ 0.01 & 0.39 $\pm$ 0.02 \\
      \addlinespace[2pt]
      & S50 & ERM & 0.44 & 0.90 $\pm$ 0.00 & 0.77 $\pm$ 0.00 & 0.77 $\pm$ 0.00 & 0.30 $\pm$ 0.00 & 0.05 $\pm$ 0.00 & 0.27 $\pm$ 0.01 & 0.22 $\pm$ 0.00 & 0.35 $\pm$ 0.01 \\
      & S50 & M   & 0.44 & 0.71 $\pm$ 0.00 & 0.52 $\pm$ 0.00 & 0.79 $\pm$ 0.01 & 0.01 $\pm$ 0.01 & 0.31 $\pm$ 0.03 & 0.63 $\pm$ 0.00 & 0.44 $\pm$ 0.00 & 0.73 $\pm$ 0.01 \\
      & S50 & C   & 0.44 & 0.89 $\pm$ 0.00 & 0.68 $\pm$ 0.01 & 0.56 $\pm$ 0.01 & 0.27 $\pm$ 0.01 & 0.03 $\pm$ 0.01 & 0.34 $\pm$ 0.01 & 0.23 $\pm$ 0.01 & 0.38 $\pm$ 0.01 \\
      \addlinespace[2pt]
      & S75 & ERM & 0.53 & 0.90 $\pm$ 0.00 & 0.79 $\pm$ 0.00 & 0.76 $\pm$ 0.00 & 0.36 $\pm$ 0.01 & 0.06 $\pm$ 0.01 & 0.44 $\pm$ 0.02 & 0.25 $\pm$ 0.00 & 0.34 $\pm$ 0.01 \\
      & S75 & M   & 0.53 & 0.71 $\pm$ 0.00 & 0.52 $\pm$ 0.00 & 0.80 $\pm$ 0.01 & 0.02 $\pm$ 0.01 & 0.34 $\pm$ 0.02 & 0.75 $\pm$ 0.01 & 0.53 $\pm$ 0.00 & 0.75 $\pm$ 0.01 \\
      & S75 & C   & 0.53 & 0.89 $\pm$ 0.00 & 0.71 $\pm$ 0.00 & 0.56 $\pm$ 0.00 & 0.33 $\pm$ 0.01 & 0.03 $\pm$ 0.01 & 0.50 $\pm$ 0.02 & 0.28 $\pm$ 0.00 & 0.36 $\pm$ 0.02 \\
\midrule
\multirow{12}{*}{Crime} 
      & Eq  & ERM & 0.00 & 0.78 $\pm$ 0.00 & 0.65 $\pm$ 0.00 & 0.71 $\pm$ 0.00 & 0.18 $\pm$ 0.01 & 0.22 $\pm$ 0.01 & 0.12 $\pm$ 0.00 & 0.08 $\pm$ 0.00 & 0.65 $\pm$ 0.01 \\
      & Eq  & M   & 0.00 & 0.61 $\pm$ 0.03 & 0.60 $\pm$ 0.00 & 0.77 $\pm$ 0.03 & 0.04 $\pm$ 0.05 & 0.13 $\pm$ 0.05 & 0.06 $\pm$ 0.03 & 0.02 $\pm$ 0.02 & 0.94 $\pm$ 0.18 \\
      & Eq  & C   & 0.00 & 0.81 $\pm$ 0.01 & 0.60 $\pm$ 0.02 & 0.62 $\pm$ 0.03 & 0.06 $\pm$ 0.03 & 0.12 $\pm$ 0.04 & 0.10 $\pm$ 0.04 & 0.03 $\pm$ 0.01 & 0.67 $\pm$ 0.08 \\
      \addlinespace[2pt]
      & Nat & ERM & 0.45 & 0.85 $\pm$ 0.00 & 0.77 $\pm$ 0.00 & 0.74 $\pm$ 0.00 & 0.35 $\pm$ 0.01 & 0.20 $\pm$ 0.02 & 0.35 $\pm$ 0.01 & 0.16 $\pm$ 0.00 & 0.47 $\pm$ 0.03 \\
      & Nat & M   & 0.45 & 0.65 $\pm$ 0.01 & 0.55 $\pm$ 0.02 & 0.77 $\pm$ 0.01 & 0.02 $\pm$ 0.03 & 0.33 $\pm$ 0.05 & 0.59 $\pm$ 0.03 & 0.45 $\pm$ 0.01 & 0.82 $\pm$ 0.01 \\
      & Nat & C   & 0.45 & 0.85 $\pm$ 0.02 & 0.70 $\pm$ 0.03 & 0.62 $\pm$ 0.01 & 0.31 $\pm$ 0.07 & 0.13 $\pm$ 0.10 & 0.28 $\pm$ 0.05 & 0.22 $\pm$ 0.03 & 0.47 $\pm$ 0.06 \\
      \addlinespace[2pt]
      & S50 & ERM & 0.57 & 0.85 $\pm$ 0.00 & 0.79 $\pm$ 0.00 & 0.66 $\pm$ 0.00 & 0.43 $\pm$ 0.02 & 0.20 $\pm$ 0.02 & 0.50 $\pm$ 0.01 & 0.23 $\pm$ 0.00 & 0.46 $\pm$ 0.01 \\
      & S50 & M   & 0.57 & 0.67 $\pm$ 0.02 & 0.55 $\pm$ 0.02 & 0.73 $\pm$ 0.03 & 0.05 $\pm$ 0.01 & 0.28 $\pm$ 0.06 & 0.71 $\pm$ 0.04 & 0.55 $\pm$ 0.01 & 0.80 $\pm$ 0.05 \\
      & S50 & C   & 0.57 & 0.85 $\pm$ 0.01 & 0.74 $\pm$ 0.02 & 0.57 $\pm$ 0.05 & 0.40 $\pm$ 0.07 & 0.28 $\pm$ 0.16 & 0.53 $\pm$ 0.09 & 0.29 $\pm$ 0.03 & 0.49 $\pm$ 0.06 \\
      \addlinespace[2pt]
      & S75 & ERM & 0.64 & 0.86 $\pm$ 0.00 & 0.80 $\pm$ 0.00 & 0.67 $\pm$ 0.02 & 0.43 $\pm$ 0.04 & 0.19 $\pm$ 0.03 & 0.64 $\pm$ 0.03 & 0.29 $\pm$ 0.00 & 0.45 $\pm$ 0.00 \\
      & S75 & M   & 0.64 & 0.63 $\pm$ 0.02 & 0.55 $\pm$ 0.04 & 0.77 $\pm$ 0.04 & 0.02 $\pm$ 0.01 & 0.36 $\pm$ 0.06 & 0.81 $\pm$ 0.01 & 0.64 $\pm$ 0.01 & 0.86 $\pm$ 0.03 \\
      & S75 & C   & 0.64 & 0.84 $\pm$ 0.02 & 0.77 $\pm$ 0.02 & 0.61 $\pm$ 0.01 & 0.41 $\pm$ 0.02 & 0.15 $\pm$ 0.02 & 0.60 $\pm$ 0.04 & 0.34 $\pm$ 0.02 & 0.45 $\pm$ 0.03 \\
\midrule
\multirow{12}{*}{Credit} 
      & Eq  & ERM & 0.00 & 0.74 $\pm$ 0.00 & 0.62 $\pm$ 0.00 & 0.62 $\pm$ 0.00 & 0.02 $\pm$ 0.01 & 0.05 $\pm$ 0.01 & 0.01 $\pm$ 0.00 & 0.01 $\pm$ 0.00 & 0.57 $\pm$ 0.01 \\
      & Eq  & M   & 0.00 & 0.67 $\pm$ 0.00 & 0.53 $\pm$ 0.00 & 0.68 $\pm$ 0.01 & 0.00 $\pm$ 0.00 & 0.10 $\pm$ 0.03 & 0.07 $\pm$ 0.04 & 0.00 $\pm$ 0.00 & 0.62 $\pm$ 0.07 \\
      & Eq  & C   & 0.00 & 0.75 $\pm$ 0.01 & 0.52 $\pm$ 0.00 & 0.52 $\pm$ 0.01 & 0.01 $\pm$ 0.01 & 0.03 $\pm$ 0.02 & 0.01 $\pm$ 0.01 & 0.00 $\pm$ 0.00 & 0.52 $\pm$ 0.05 \\
      \addlinespace[2pt]
      & Nat & ERM & 0.02 & 0.75 $\pm$ 0.00 & 0.62 $\pm$ 0.00 & 0.62 $\pm$ 0.00 & 0.03 $\pm$ 0.00 & 0.06 $\pm$ 0.01 & 0.04 $\pm$ 0.00 & 0.00 $\pm$ 0.00 & 0.53 $\pm$ 0.02 \\
      & Nat & M   & 0.02 & 0.68 $\pm$ 0.00 & 0.53 $\pm$ 0.00 & 0.68 $\pm$ 0.01 & 0.01 $\pm$ 0.00 & 0.08 $\pm$ 0.00 & 0.10 $\pm$ 0.00 & 0.02 $\pm$ 0.00 & 0.61 $\pm$ 0.03 \\
      & Nat & C   & 0.02 & 0.76 $\pm$ 0.00 & 0.52 $\pm$ 0.00 & 0.52 $\pm$ 0.01 & 0.01 $\pm$ 0.01 & 0.02 $\pm$ 0.01 & 0.01 $\pm$ 0.01 & 0.01 $\pm$ 0.00 & 0.49 $\pm$ 0.03 \\
      \addlinespace[2pt]
      & S50 & ERM & 0.11 & 0.75 $\pm$ 0.00 & 0.63 $\pm$ 0.00 & 0.62 $\pm$ 0.00 & 0.07 $\pm$ 0.00 & 0.08 $\pm$ 0.00 & 0.19 $\pm$ 0.01 & 0.08 $\pm$ 0.00 & 0.49 $\pm$ 0.03 \\
      & S50 & M   & 0.11 & 0.68 $\pm$ 0.01 & 0.52 $\pm$ 0.01 & 0.69 $\pm$ 0.02 & 0.01 $\pm$ 0.00 & 0.14 $\pm$ 0.03 & 0.28 $\pm$ 0.04 & 0.11 $\pm$ 0.00 & 0.55 $\pm$ 0.06 \\
      & S50 & C   & 0.11 & 0.75 $\pm$ 0.00 & 0.54 $\pm$ 0.01 & 0.53 $\pm$ 0.00 & 0.04 $\pm$ 0.01 & 0.01 $\pm$ 0.01 & 0.18 $\pm$ 0.01 & 0.09 $\pm$ 0.00 & 0.54 $\pm$ 0.09 \\
      \addlinespace[2pt]
      & S75 & ERM & 0.17 & 0.76 $\pm$ 0.00 & 0.64 $\pm$ 0.00 & 0.62 $\pm$ 0.00 & 0.10 $\pm$ 0.01 & 0.06 $\pm$ 0.02 & 0.28 $\pm$ 0.01 & 0.13 $\pm$ 0.00 & 0.50 $\pm$ 0.09 \\
      & S75 & M   & 0.17 & 0.69 $\pm$ 0.00 & 0.53 $\pm$ 0.01 & 0.70 $\pm$ 0.02 & 0.01 $\pm$ 0.01 & 0.17 $\pm$ 0.01 & 0.38 $\pm$ 0.02 & 0.17 $\pm$ 0.00 & 0.53 $\pm$ 0.05 \\
      & S75 & C   & 0.17 & 0.75 $\pm$ 0.00 & 0.56 $\pm$ 0.00 & 0.53 $\pm$ 0.01 & 0.07 $\pm$ 0.00 & 0.04 $\pm$ 0.04 & 0.33 $\pm$ 0.05 & 0.14 $\pm$ 0.00 & 0.41 $\pm$ 0.10 \\
\midrule
\multirow{12}{*}{COMPAS} 
      & Eq  & ERM & 0.00 & 0.68 $\pm$ 0.00 & 0.64 $\pm$ 0.00 & 0.65 $\pm$ 0.01 & 0.14 $\pm$ 0.01 & 0.14 $\pm$ 0.01 & 0.04 $\pm$ 0.01 & 0.06 $\pm$ 0.00 & 0.74 $\pm$ 0.04 \\
      & Eq  & M   & 0.00 & 0.63 $\pm$ 0.02 & 0.56 $\pm$ 0.01 & 0.66 $\pm$ 0.00 & 0.00 $\pm$ 0.00 & 0.04 $\pm$ 0.02 & 0.03 $\pm$ 0.01 & 0.01 $\pm$ 0.01 & 0.79 $\pm$ 0.05 \\
      & Eq  & C   & 0.00 & 0.71 $\pm$ 0.01 & 0.53 $\pm$ 0.01 & 0.56 $\pm$ 0.02 & 0.01 $\pm$ 0.01 & 0.03 $\pm$ 0.02 & 0.02 $\pm$ 0.02 & 0.00 $\pm$ 0.00 & 0.66 $\pm$ 0.02 \\
      \addlinespace[2pt]
      & Nat & ERM & 0.11 & 0.70 $\pm$ 0.00 & 0.66 $\pm$ 0.00 & 0.65 $\pm$ 0.00 & 0.16 $\pm$ 0.01 & 0.15 $\pm$ 0.02 & 0.06 $\pm$ 0.01 & 0.02 $\pm$ 0.00 & 0.68 $\pm$ 0.00 \\
      & Nat & M   & 0.11 & 0.63 $\pm$ 0.00 & 0.56 $\pm$ 0.01 & 0.65 $\pm$ 0.02 & 0.02 $\pm$ 0.01 & 0.09 $\pm$ 0.03 & 0.17 $\pm$ 0.03 & 0.10 $\pm$ 0.01 & 0.84 $\pm$ 0.03 \\
      & Nat & C   & 0.11 & 0.71 $\pm$ 0.01 & 0.53 $\pm$ 0.01 & 0.54 $\pm$ 0.01 & 0.05 $\pm$ 0.03 & 0.06 $\pm$ 0.03 & 0.06 $\pm$ 0.04 & 0.09 $\pm$ 0.01 & 0.66 $\pm$ 0.01 \\
      \addlinespace[2pt]
      & S50 & ERM & 0.25 & 0.71 $\pm$ 0.00 & 0.68 $\pm$ 0.01 & 0.65 $\pm$ 0.00 & 0.21 $\pm$ 0.01 & 0.15 $\pm$ 0.01 & 0.23 $\pm$ 0.01 & 0.13 $\pm$ 0.00 & 0.70 $\pm$ 0.03 \\
      & S50 & M   & 0.25 & 0.64 $\pm$ 0.00 & 0.56 $\pm$ 0.01 & 0.63 $\pm$ 0.02 & 0.02 $\pm$ 0.01 & 0.11 $\pm$ 0.02 & 0.33 $\pm$ 0.06 & 0.24 $\pm$ 0.00 & 0.78 $\pm$ 0.04 \\
      & S50 & C   & 0.25 & 0.69 $\pm$ 0.00 & 0.56 $\pm$ 0.01 & 0.54 $\pm$ 0.00 & 0.10 $\pm$ 0.02 & 0.04 $\pm$ 0.02 & 0.27 $\pm$ 0.01 & 0.20 $\pm$ 0.01 & 0.68 $\pm$ 0.03 \\
      \addlinespace[2pt]
      & S75 & ERM & 0.35 & 0.73 $\pm$ 0.00 & 0.70 $\pm$ 0.00 & 0.66 $\pm$ 0.00 & 0.25 $\pm$ 0.01 & 0.16 $\pm$ 0.01 & 0.35 $\pm$ 0.01 & 0.21 $\pm$ 0.00 & 0.67 $\pm$ 0.08 \\
      & S75 & M   & 0.35 & 0.64 $\pm$ 0.01 & 0.58 $\pm$ 0.02 & 0.65 $\pm$ 0.02 & 0.04 $\pm$ 0.00 & 0.20 $\pm$ 0.06 & 0.50 $\pm$ 0.01 & 0.33 $\pm$ 0.01 & 0.83 $\pm$ 0.03 \\
      & S75 & C   & 0.35 & 0.70 $\pm$ 0.01 & 0.58 $\pm$ 0.01 & 0.54 $\pm$ 0.00 & 0.11 $\pm$ 0.03 & 0.09 $\pm$ 0.04 & 0.41 $\pm$ 0.03 & 0.29 $\pm$ 0.01 & 0.68 $\pm$ 0.10 \\
\bottomrule
\end{tabular}%
}
\end{table*}

\subsection{Ablations: Probe Ablation}
\label{app:probes}

Refer to Fig.~\ref{fig:probes} and Table~\ref{tab:app-probes} for details on probe evaluation. For the marginal alignment methods, the empirical probe $\mathrm{AUROC}$ lies below the bound obtained from the exact per-group MADE densities, while the within-class probe stays high or rises above $\mathrm{ERM}$. For the class-conditional alignment methods, the within-class probe stays within the class bound, and the marginal probe rises to the base-rate floor of $\mathrm{P}_2$.

\begin{figure}[!htbp]
  \centering
  \includegraphics[width=\linewidth]{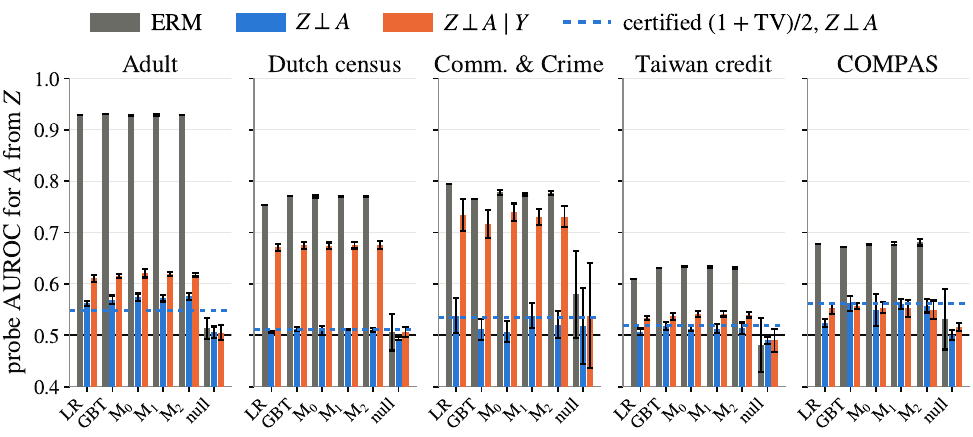}
  \caption{Probe $\mathrm{AUROC}$ to \textit{decode} for $A$ from $Z$ under six probe families (\textit{shift-50}, reported mean $\pm$ std over all seeds). Dotted: $(1+\mathrm{TV})/2$ for the marginal flow.}
  \label{fig:probes}
\end{figure}

\begin{table*}[!htbp]
\centering
\caption{\textbf{Probe ablation at shift 50.} Methods: ERM, Marginal (\textbf{M}: $Z\perp A$), and Conditional (\textbf{C}: $Z\perp A\mid Y$). Columns 3--8 report AUC of probe families predicting $A$ from $Z$: \textbf{LR} (logistic regression), \textbf{GB} (gradient-boosted trees), \textbf{MLP-$k$} ($k$ hidden layers), and \textbf{LR$-A$} (logistic control baseline excluding $A$). Right columns report the original FNF adversary balanced accuracy (\textbf{Adv BA}), the certified balanced accuracy bound $(1+\mathrm{TV})/2$ (\textbf{Bound}), and the maximum within-class probe AUC (\textbf{Max $W$}). Reported as mean $\pm$ sd over three seeds.}
\label{tab:app-probes}
\resizebox{\textwidth}{!}{%
\begin{tabular}{ll cccccc ccc}
\toprule
& & \multicolumn{6}{c}{\textbf{Probe AUC ($A$ from $Z$)}} & & & \\
\cmidrule(lr){3-8}
\textbf{Dataset} & \textbf{Method} & \textbf{LR} & \textbf{GB} & \textbf{MLP-0} & \textbf{MLP-1} & \textbf{MLP-2} & \textbf{LR$-A$} & \textbf{Adv BA} & \textbf{Bound} & \textbf{Max $W$} \\
\midrule
Adult & ERM & 0.93 $\pm$ 0.00 & 0.93 $\pm$ 0.00 & 0.93 $\pm$ 0.00 & 0.93 $\pm$ 0.00 & 0.93 $\pm$ 0.00 & 0.51 $\pm$ 0.02 & 0.86 $\pm$ 0.00 & -- & 0.99 $\pm$ 0.00 \\
      & M   & 0.56 $\pm$ 0.00 & 0.57 $\pm$ 0.01 & 0.57 $\pm$ 0.01 & 0.57 $\pm$ 0.01 & 0.58 $\pm$ 0.01 & 0.51 $\pm$ 0.01 & 0.55 $\pm$ 0.01 & 0.55 $\pm$ 0.00 & 0.85 $\pm$ 0.02 \\
      & C   & 0.61 $\pm$ 0.01 & 0.62 $\pm$ 0.00 & 0.62 $\pm$ 0.01 & 0.62 $\pm$ 0.00 & 0.62 $\pm$ 0.00 & 0.51 $\pm$ 0.01 & 0.58 $\pm$ 0.01 & 0.53 $\pm$ 0.00 & 0.56 $\pm$ 0.00 \\
\addlinespace[3pt]
Dutch census & ERM & 0.75 $\pm$ 0.00 & 0.77 $\pm$ 0.00 & 0.77 $\pm$ 0.00 & 0.77 $\pm$ 0.00 & 0.77 $\pm$ 0.00 & 0.51 $\pm$ 0.04 & 0.68 $\pm$ 0.00 & -- & 0.77 $\pm$ 0.00 \\
             & M   & 0.51 $\pm$ 0.00 & 0.51 $\pm$ 0.00 & 0.51 $\pm$ 0.01 & 0.51 $\pm$ 0.00 & 0.51 $\pm$ 0.00 & 0.49 $\pm$ 0.00 & 0.50 $\pm$ 0.00 & 0.51 $\pm$ 0.00 & 0.79 $\pm$ 0.01 \\
             & C   & 0.67 $\pm$ 0.01 & 0.67 $\pm$ 0.01 & 0.67 $\pm$ 0.01 & 0.68 $\pm$ 0.01 & 0.68 $\pm$ 0.01 & 0.51 $\pm$ 0.01 & 0.60 $\pm$ 0.00 & 0.56 $\pm$ 0.00 & 0.56 $\pm$ 0.01 \\
\addlinespace[3pt]
Comm. \& Crime & ERM & 0.79 $\pm$ 0.00 & 0.76 $\pm$ 0.00 & 0.78 $\pm$ 0.00 & 0.77 $\pm$ 0.00 & 0.78 $\pm$ 0.00 & 0.58 $\pm$ 0.08 & 0.69 $\pm$ 0.01 & -- & 0.66 $\pm$ 0.00 \\
               & M   & 0.54 $\pm$ 0.03 & 0.51 $\pm$ 0.02 & 0.51 $\pm$ 0.02 & 0.54 $\pm$ 0.02 & 0.52 $\pm$ 0.03 & 0.52 $\pm$ 0.07 & 0.51 $\pm$ 0.00 & 0.54 $\pm$ 0.01 & 0.73 $\pm$ 0.03 \\
               & C   & 0.73 $\pm$ 0.03 & 0.72 $\pm$ 0.03 & 0.74 $\pm$ 0.02 & 0.73 $\pm$ 0.02 & 0.73 $\pm$ 0.02 & 0.54 $\pm$ 0.10 & 0.64 $\pm$ 0.01 & 0.55 $\pm$ 0.01 & 0.58 $\pm$ 0.04 \\
\addlinespace[3pt]
Taiwan credit & ERM & 0.61 $\pm$ 0.00 & 0.63 $\pm$ 0.00 & 0.63 $\pm$ 0.00 & 0.63 $\pm$ 0.00 & 0.63 $\pm$ 0.00 & 0.48 $\pm$ 0.05 & 0.58 $\pm$ 0.00 & -- & 0.62 $\pm$ 0.00 \\
              & M   & 0.51 $\pm$ 0.01 & 0.52 $\pm$ 0.01 & 0.51 $\pm$ 0.00 & 0.51 $\pm$ 0.01 & 0.51 $\pm$ 0.01 & 0.49 $\pm$ 0.01 & 0.50 $\pm$ 0.00 & 0.52 $\pm$ 0.00 & 0.69 $\pm$ 0.02 \\
              & C   & 0.53 $\pm$ 0.00 & 0.54 $\pm$ 0.01 & 0.54 $\pm$ 0.01 & 0.54 $\pm$ 0.00 & 0.54 $\pm$ 0.01 & 0.49 $\pm$ 0.02 & 0.50 $\pm$ 0.00 & 0.52 $\pm$ 0.00 & 0.53 $\pm$ 0.00 \\
\addlinespace[3pt]
COMPAS & ERM & 0.68 $\pm$ 0.00 & 0.67 $\pm$ 0.00 & 0.68 $\pm$ 0.00 & 0.68 $\pm$ 0.00 & 0.68 $\pm$ 0.01 & 0.53 $\pm$ 0.06 & 0.60 $\pm$ 0.00 & -- & 0.65 $\pm$ 0.00 \\
       & M   & 0.52 $\pm$ 0.01 & 0.56 $\pm$ 0.01 & 0.55 $\pm$ 0.03 & 0.56 $\pm$ 0.01 & 0.56 $\pm$ 0.01 & 0.50 $\pm$ 0.01 & 0.52 $\pm$ 0.01 & 0.56 $\pm$ 0.00 & 0.63 $\pm$ 0.02 \\
       & C   & 0.55 $\pm$ 0.01 & 0.56 $\pm$ 0.01 & 0.55 $\pm$ 0.01 & 0.55 $\pm$ 0.02 & 0.55 $\pm$ 0.02 & 0.52 $\pm$ 0.01 & 0.52 $\pm$ 0.01 & 0.60 $\pm$ 0.01 & 0.54 $\pm$ 0.01 \\
\bottomrule
\end{tabular}%
}
\end{table*}

\subsection{Ablation: Strength of representation invariance, hyperparameters and thresholds}
\label{app:ablation_hparams}

Refer to Fig.~\ref{fig:gamma} for sweeps $\gamma$ for the marginal alignment method. $D_A$, $\Delta_{\mathrm{DP}}$ and AUROC both decrease with $\gamma$; $\Delta_{\mathrm{EO}}$ do not. On Dutch it's par with $\mathrm{ERM}$ level for $\gamma\le0.1$ and rises to $0.31$ at $\gamma=1$ as $D_{A\mid Y}$ increases from $0.61$ to $0.79$; on Adult it is above $\mathrm{ERM}$ for all $\gamma$; on COMPAS it stays within seed std of $\mathrm{ERM}$. Refer to Table~\ref{tab:app-hparams}, which demonstrates that changing the feature set does not change the conclusion: $\Delta_{\mathrm{DP}}\le0.10$ in every config while $\Delta_{\mathrm{EO}}$ stays at par or above $\mathrm{ERM}$. Table~\ref{tab:app-threshold} compares the Youden threshold with a default $0.5$: at default, the marginal alignment method predicts few positives under low prevalence on Adult, Credit and COMPAS, so both gaps shrink counterintuitively; on Dutch and Crime it holds for both thresholds.

\begin{figure}[!htbp]
  \centering
  \includegraphics[width=\linewidth]{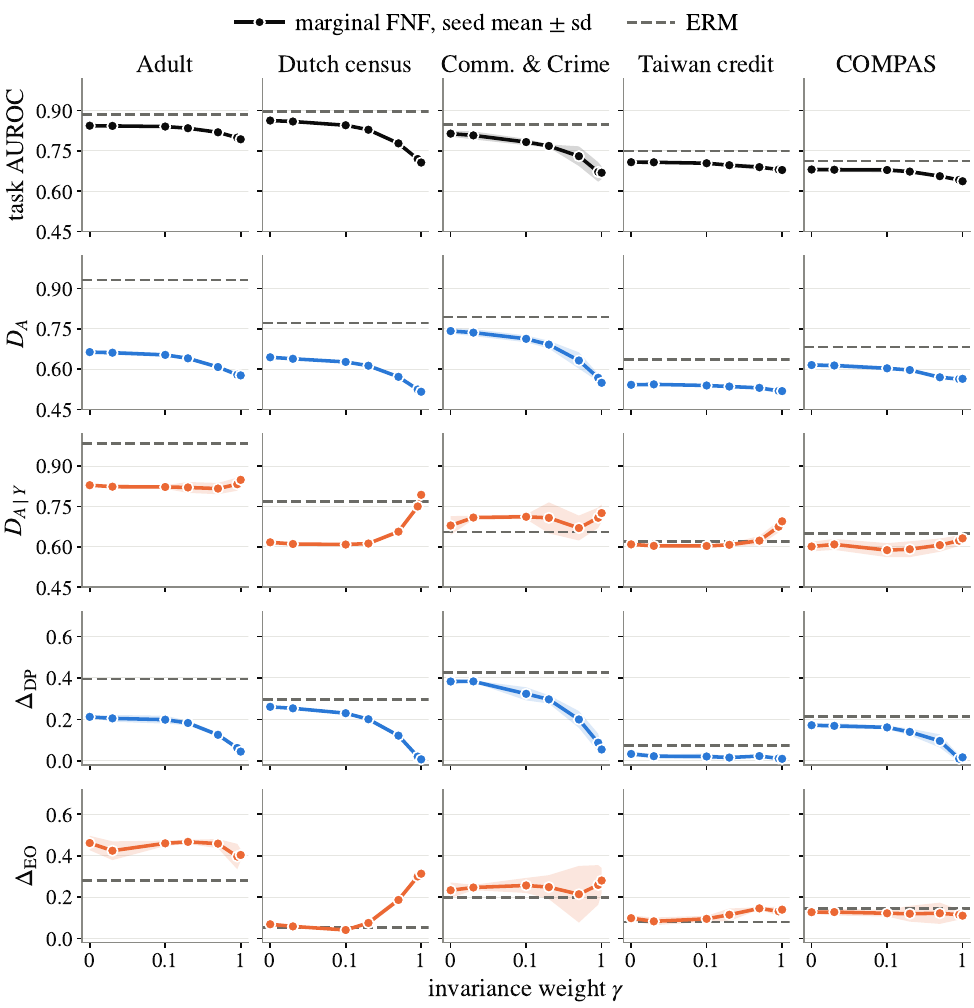}
  \caption{Marginal alignment method as a function of the invariance strength $\gamma$ (\textit{shift-50}, selected feature set, MADE width 50). Reported mean $\pm$ std over seeds.}
  \label{fig:gamma}
\end{figure}

\begin{table*}[!htbp]\centering\scriptsize\setlength{\tabcolsep}{3pt}
\caption{Marginal alignment method at $\gamma=1$ for every feature set and MADE width in the sweep (\textit{shift 50}, mean $\pm$ std over all seeds).}\label{tab:app-hparams}
\resizebox{\textwidth}{!}{\begin{tabular}{llrrrrrrrr}\toprule
Dataset & Feature set & MADE & n fits & AUROC & $D_A$ & $D_{A\mid Y}$ & $\Delta_{\mathrm{DP}}$ & $\Delta_{\mathrm{EO}}$ & exact TV \\\midrule
Adult & paper6 & 50 & 3 & 0.79 $\pm$ 0.00 & 0.58 $\pm$ 0.01 & 0.89 $\pm$ 0.01 & 0.08 $\pm$ 0.00 & 0.38 $\pm$ 0.03 & 0.103 $\pm$ 0.003 \\
Adult & paper6 & 100 & 3 & 0.79 $\pm$ 0.00 & 0.59 $\pm$ 0.01 & 0.87 $\pm$ 0.02 & 0.09 $\pm$ 0.01 & 0.36 $\pm$ 0.04 & 0.102 $\pm$ 0.005 \\
Adult & paper6\_age (selected) & 50 & 3 & 0.79 $\pm$ 0.00 & 0.58 $\pm$ 0.01 & 0.85 $\pm$ 0.02 & 0.04 $\pm$ 0.00 & 0.40 $\pm$ 0.03 & 0.096 $\pm$ 0.007 \\
Adult & paper6\_age & 100 & 3 & 0.80 $\pm$ 0.00 & 0.58 $\pm$ 0.00 & 0.84 $\pm$ 0.02 & 0.05 $\pm$ 0.01 & 0.38 $\pm$ 0.02 & 0.093 $\pm$ 0.024 \\
Dutch census & core6 & 50 & 3 & 0.72 $\pm$ 0.00 & 0.52 $\pm$ 0.01 & 0.81 $\pm$ 0.01 & 0.01 $\pm$ 0.01 & 0.27 $\pm$ 0.02 & 0.022 $\pm$ 0.003 \\
Dutch census & core6 & 100 & 3 & 0.72 $\pm$ 0.01 & 0.52 $\pm$ 0.00 & 0.81 $\pm$ 0.00 & 0.01 $\pm$ 0.01 & 0.30 $\pm$ 0.01 & 0.022 $\pm$ 0.003 \\
Dutch census & core6\_hh (selected) & 50 & 3 & 0.71 $\pm$ 0.00 & 0.52 $\pm$ 0.00 & 0.79 $\pm$ 0.01 & 0.01 $\pm$ 0.01 & 0.31 $\pm$ 0.03 & 0.023 $\pm$ 0.002 \\
Dutch census & core6\_hh & 100 & 3 & 0.71 $\pm$ 0.00 & 0.51 $\pm$ 0.01 & 0.78 $\pm$ 0.00 & 0.01 $\pm$ 0.01 & 0.32 $\pm$ 0.02 & 0.026 $\pm$ 0.004 \\
Comm. \& Crime & socio4 (selected) & 50 & 3 & 0.67 $\pm$ 0.02 & 0.55 $\pm$ 0.02 & 0.73 $\pm$ 0.03 & 0.05 $\pm$ 0.01 & 0.28 $\pm$ 0.06 & 0.070 $\pm$ 0.020 \\
Comm. \& Crime & socio4 & 100 & 3 & 0.64 $\pm$ 0.03 & 0.56 $\pm$ 0.03 & 0.76 $\pm$ 0.02 & 0.04 $\pm$ 0.03 & 0.28 $\pm$ 0.05 & 0.069 $\pm$ 0.015 \\
Comm. \& Crime & socio6 & 50 & 3 & 0.65 $\pm$ 0.02 & 0.56 $\pm$ 0.03 & 0.74 $\pm$ 0.01 & 0.09 $\pm$ 0.03 & 0.30 $\pm$ 0.09 & 0.085 $\pm$ 0.013 \\
Comm. \& Crime & socio6 & 100 & 3 & 0.68 $\pm$ 0.02 & 0.58 $\pm$ 0.02 & 0.73 $\pm$ 0.04 & 0.10 $\pm$ 0.07 & 0.22 $\pm$ 0.02 & 0.063 $\pm$ 0.011 \\
Taiwan credit & core5 (selected) & 50 & 3 & 0.68 $\pm$ 0.01 & 0.52 $\pm$ 0.01 & 0.69 $\pm$ 0.02 & 0.01 $\pm$ 0.00 & 0.14 $\pm$ 0.03 & 0.038 $\pm$ 0.005 \\
Taiwan credit & core5 & 100 & 3 & 0.69 $\pm$ 0.00 & 0.53 $\pm$ 0.01 & 0.69 $\pm$ 0.01 & 0.01 $\pm$ 0.01 & 0.12 $\pm$ 0.03 & 0.039 $\pm$ 0.005 \\
Taiwan credit & extended8 & 50 & 3 & 0.65 $\pm$ 0.01 & 0.53 $\pm$ 0.01 & 0.70 $\pm$ 0.01 & 0.01 $\pm$ 0.01 & 0.22 $\pm$ 0.04 & 0.022 $\pm$ 0.002 \\
Taiwan credit & extended8 & 100 & 3 & 0.65 $\pm$ 0.02 & 0.58 $\pm$ 0.01 & 0.71 $\pm$ 0.03 & 0.01 $\pm$ 0.01 & 0.25 $\pm$ 0.03 & 0.025 $\pm$ 0.005 \\
COMPAS & paper7 (selected) & 50 & 3 & 0.64 $\pm$ 0.00 & 0.56 $\pm$ 0.01 & 0.63 $\pm$ 0.02 & 0.02 $\pm$ 0.01 & 0.11 $\pm$ 0.02 & 0.125 $\pm$ 0.007 \\
COMPAS & paper7 & 100 & 3 & 0.63 $\pm$ 0.02 & 0.57 $\pm$ 0.01 & 0.64 $\pm$ 0.04 & 0.02 $\pm$ 0.03 & 0.10 $\pm$ 0.03 & 0.116 $\pm$ 0.016 \\
COMPAS & paper7\_nosex & 50 & 3 & 0.63 $\pm$ 0.00 & 0.60 $\pm$ 0.01 & 0.65 $\pm$ 0.04 & 0.04 $\pm$ 0.02 & 0.10 $\pm$ 0.02 & 0.143 $\pm$ 0.018 \\
COMPAS & paper7\_nosex & 100 & 3 & 0.63 $\pm$ 0.02 & 0.60 $\pm$ 0.01 & 0.65 $\pm$ 0.01 & 0.04 $\pm$ 0.03 & 0.07 $\pm$ 0.02 & 0.131 $\pm$ 0.007 \\
\bottomrule\end{tabular}}\end{table*}

\begin{table*}[!htbp]\centering\scriptsize\setlength{\tabcolsep}{3pt}
\caption{Optimal Youden threshold $\tau$ (for validation) vs. a default $0.5$ (\textit{shift 50}, reported mean $\pm$ std over all seeds).}\label{tab:app-threshold}
\resizebox{\textwidth}{!}{\begin{tabular}{llrrrrrrr}\toprule
Dataset & Method & Youden $\tau$ & Acc ($\tau$) & $\Delta_{\mathrm{DP}}$ ($\tau$) & $\Delta_{\mathrm{EO}}$ ($\tau$) & Acc (0.5) & $\Delta_{\mathrm{DP}}$ (0.5) & $\Delta_{\mathrm{EO}}$ (0.5) \\\midrule
Adult & ERM & 0.22 $\pm$ 0.02 & 0.78 $\pm$ 0.02 & 0.40 $\pm$ 0.03 & 0.28 $\pm$ 0.02 & 0.84 $\pm$ 0.00 & 0.24 $\pm$ 0.01 & 0.39 $\pm$ 0.03 \\
Adult & $Z\perp A$ & 0.17 $\pm$ 0.02 & 0.69 $\pm$ 0.02 & 0.04 $\pm$ 0.00 & 0.40 $\pm$ 0.03 & 0.78 $\pm$ 0.00 & 0.01 $\pm$ 0.01 & 0.06 $\pm$ 0.03 \\
Adult & $Z\perp A\mid Y$ & 0.17 $\pm$ 0.00 & 0.75 $\pm$ 0.02 & 0.17 $\pm$ 0.01 & 0.08 $\pm$ 0.04 & 0.82 $\pm$ 0.00 & 0.11 $\pm$ 0.01 & 0.06 $\pm$ 0.01 \\
Dutch census & ERM & 0.43 $\pm$ 0.02 & 0.82 $\pm$ 0.00 & 0.30 $\pm$ 0.00 & 0.05 $\pm$ 0.00 & 0.83 $\pm$ 0.00 & 0.29 $\pm$ 0.00 & 0.06 $\pm$ 0.00 \\
Dutch census & $Z\perp A$ & 0.40 $\pm$ 0.04 & 0.64 $\pm$ 0.00 & 0.01 $\pm$ 0.01 & 0.31 $\pm$ 0.03 & 0.65 $\pm$ 0.01 & 0.01 $\pm$ 0.01 & 0.22 $\pm$ 0.03 \\
Dutch census & $Z\perp A\mid Y$ & 0.41 $\pm$ 0.04 & 0.81 $\pm$ 0.00 & 0.27 $\pm$ 0.01 & 0.03 $\pm$ 0.01 & 0.81 $\pm$ 0.00 & 0.28 $\pm$ 0.01 & 0.02 $\pm$ 0.01 \\
Comm. \& Crime & ERM & 0.45 $\pm$ 0.03 & 0.77 $\pm$ 0.01 & 0.43 $\pm$ 0.02 & 0.20 $\pm$ 0.02 & 0.75 $\pm$ 0.01 & 0.39 $\pm$ 0.01 & 0.27 $\pm$ 0.00 \\
Comm. \& Crime & $Z\perp A$ & 0.44 $\pm$ 0.03 & 0.62 $\pm$ 0.02 & 0.05 $\pm$ 0.01 & 0.28 $\pm$ 0.06 & 0.61 $\pm$ 0.02 & 0.04 $\pm$ 0.03 & 0.19 $\pm$ 0.04 \\
Comm. \& Crime & $Z\perp A\mid Y$ & 0.50 $\pm$ 0.02 & 0.75 $\pm$ 0.03 & 0.40 $\pm$ 0.07 & 0.28 $\pm$ 0.16 & 0.76 $\pm$ 0.03 & 0.40 $\pm$ 0.07 & 0.25 $\pm$ 0.18 \\
Taiwan credit & ERM & 0.18 $\pm$ 0.02 & 0.76 $\pm$ 0.01 & 0.07 $\pm$ 0.00 & 0.08 $\pm$ 0.00 & 0.85 $\pm$ 0.00 & 0.05 $\pm$ 0.00 & 0.03 $\pm$ 0.00 \\
Taiwan credit & $Z\perp A$ & 0.20 $\pm$ 0.03 & 0.72 $\pm$ 0.03 & 0.01 $\pm$ 0.00 & 0.14 $\pm$ 0.03 & 0.82 $\pm$ 0.00 & 0.00 $\pm$ 0.00 & 0.02 $\pm$ 0.01 \\
Taiwan credit & $Z\perp A\mid Y$ & 0.17 $\pm$ 0.03 & 0.73 $\pm$ 0.05 & 0.04 $\pm$ 0.01 & 0.01 $\pm$ 0.01 & 0.85 $\pm$ 0.00 & 0.03 $\pm$ 0.01 & 0.02 $\pm$ 0.02 \\
COMPAS & ERM & 0.39 $\pm$ 0.04 & 0.64 $\pm$ 0.01 & 0.21 $\pm$ 0.01 & 0.15 $\pm$ 0.01 & 0.68 $\pm$ 0.00 & 0.20 $\pm$ 0.01 & 0.14 $\pm$ 0.01 \\
COMPAS & $Z\perp A$ & 0.41 $\pm$ 0.03 & 0.62 $\pm$ 0.01 & 0.02 $\pm$ 0.01 & 0.11 $\pm$ 0.02 & 0.62 $\pm$ 0.00 & 0.02 $\pm$ 0.02 & 0.05 $\pm$ 0.03 \\
COMPAS & $Z\perp A\mid Y$ & 0.39 $\pm$ 0.07 & 0.66 $\pm$ 0.01 & 0.10 $\pm$ 0.02 & 0.04 $\pm$ 0.02 & 0.65 $\pm$ 0.00 & 0.08 $\pm$ 0.00 & 0.05 $\pm$ 0.01 \\
\bottomrule\end{tabular}}\end{table*}

\subsection{Additional Details: Datasets and per-seed results}
\label{app:seeds}
Dataset details, group definitions, and training-split prevalences per condition are reported in Table~\ref{tab:app-datasets} for accountability.

\begin{table*}[!htbp]
\centering
\small
\caption{\textbf{Dataset statistics and prevalence shift configurations.} 
Binary groups ($A{=}0$ vs.\ $A{=}1$) and prediction targets ($Y{=}1$) are: 
\textit{Adult}: female vs.\ male, income $>$50K; 
\textit{Dutch}: female vs.\ male, high-level occupation; 
\textit{Crime}: Black population share below vs.\ at or above median, violent crime rate above median; 
\textit{Taiwan credit}: female vs.\ male, default next month; 
\textit{COMPAS}: Caucasian vs.\ African-American, two-year recidivism. 
Rightmost columns report the training-split base rates ($\pi_0$ / $\pi_1$) across synthetic shift regimes. 
Selected feature sets: Adult \texttt{paper6\_age} (workclass, education, marital status, occupation, relationship, race, binned age), Dutch \texttt{core6\_hh} (age, education, economic status, current activity, marital status, household position, household size), Crime \texttt{socio4} (median income, poverty, unemployment, two-parent families; quantile-binned), Credit \texttt{core5} (education, marriage, binned age, last repayment status, binned limit), COMPAS \texttt{paper7} (sex, age category, charge degree, binned priors, three juvenile counts).}
\label{tab:app-datasets}
\setlength{\tabcolsep}{4pt}
\begin{tabularx}{\textwidth}{@{} l X c cccc @{}}
\toprule
& & & \multicolumn{4}{c}{\textbf{Training Prevalences} ($\pi_0$ / $\pi_1$)} \\
\cmidrule(lr){4-7}
\textbf{Dataset} & \textbf{Samples} & \textbf{Natural} $\pi_0 / \pi_1$ & \textbf{Equalized} & \textbf{Natural} & \textbf{Shift-50} & \textbf{Shift-75} \\
\midrule
Adult income & 48,842 & 0.11 / 0.30 & 0.11 / 0.11 & 0.11 / 0.31 & 0.06 / 0.31 & 0.03 / 0.31 \\
             & \scriptsize (29,312 / 4,880 / 4,885 / 9,765) & & & & & \\
\addlinespace[3pt]
Dutch census & 60,420 & 0.33 / 0.63 & 0.33 / 0.33 & 0.33 / 0.62 & 0.20 / 0.62 & 0.11 / 0.62 \\
             & \scriptsize (36,252 / 6,042 / 6,042 / 12,084) & & & & & \\
\addlinespace[3pt]
Comm. \& Crime & 1,994 & 0.27 / 0.71 & 0.26 / 0.26 & 0.26 / 0.69 & 0.15 / 0.69 & 0.08 / 0.69 \\
             & \scriptsize (1,196 / 199 / 199 / 400) & & & & & \\
\addlinespace[3pt]
Taiwan credit & 30,000 & 0.21 / 0.24 & 0.21 / 0.21 & 0.21 / 0.25 & 0.12 / 0.25 & 0.06 / 0.25 \\
             & \scriptsize (18,000 / 3,000 / 3,000 / 6,000) & & & & & \\
\addlinespace[3pt]
COMPAS       & 5,278 & 0.39 / 0.52 & 0.39 / 0.39 & 0.39 / 0.54 & 0.24 / 0.54 & 0.14 / 0.54 \\
             & \scriptsize (3,166 / 528 / 528 / 1,056) & & & & & \\
\bottomrule
\end{tabularx}
\end{table*}

\begin{table*}[!htbp]
\centering
\scriptsize
\setlength{\tabcolsep}{3pt}
\caption{Detailed results for the \textit{shift-50} prevalence condition per selected seeds.}
\label{tab:app-per-seed}
\resizebox{\textwidth}{!}{
\begin{tabular}{lllrrrrrrr}
\toprule
Dataset & Method & Seed & AUROC & $D_A$ & $D_{A\mid Y}$ & $\Delta_{\mathrm{DP}}$ & $\Delta_{\mathrm{EO}}$ & $\Delta_{\mathrm{PPV}}$ & $\Delta_{\mathrm{bias}}$ \\
\midrule
Adult & ERM & 42 & 0.89 & 0.93 & 0.99 & 0.42 & 0.30 & 0.17 & 0.00 \\
Adult & ERM & 123 & 0.89 & 0.93 & 0.99 & 0.36 & 0.27 & 0.19 & 0.02 \\
Adult & ERM & 456 & 0.89 & 0.93 & 0.99 & 0.40 & 0.27 & 0.18 & 0.02 \\
Adult & M-FNF ($Z\perp A$) & 42 & 0.79 & 0.58 & 0.86 & 0.04 & 0.37 & 0.48 & 0.23 \\
Adult & M-FNF ($Z\perp A$) & 123 & 0.79 & 0.57 & 0.86 & 0.05 & 0.43 & 0.47 & 0.23 \\
Adult & M-FNF ($Z\perp A$) & 456 & 0.79 & 0.58 & 0.83 & 0.04 & 0.41 & 0.51 & 0.23 \\
Adult & C-FNF ($Z\perp A\mid Y$) & 42 & 0.84 & 0.66 & 0.83 & 0.21 & 0.41 & 0.47 & 0.11 \\
Adult & C-FNF ($Z\perp A\mid Y$) & 123 & 0.84 & 0.65 & 0.78 & 0.20 & 0.41 & 0.48 & 0.11 \\
Adult & C-FNF ($Z\perp A\mid Y$) & 456 & 0.84 & 0.66 & 0.81 & 0.21 & 0.48 & 0.50 & 0.11 \\
Dutch census & ERM & 42 & 0.90 & 0.77 & 0.76 & 0.30 & 0.05 & 0.26 & 0.22 \\
Dutch census & ERM & 123 & 0.90 & 0.77 & 0.77 & 0.30 & 0.05 & 0.29 & 0.22 \\
Dutch census & ERM & 456 & 0.90 & 0.77 & 0.77 & 0.30 & 0.05 & 0.27 & 0.21 \\
Dutch census & M-FNF ($Z\perp A$) & 42 & 0.71 & 0.51 & 0.79 & 0.01 & 0.28 & 0.63 & 0.44 \\
Dutch census & M-FNF ($Z\perp A$) & 123 & 0.70 & 0.51 & 0.80 & 0.00 & 0.32 & 0.63 & 0.44 \\
Dutch census & M-FNF ($Z\perp A$) & 456 & 0.71 & 0.52 & 0.79 & 0.01 & 0.33 & 0.63 & 0.45 \\
Dutch census & C-FNF ($Z\perp A\mid Y$) & 42 & 0.87 & 0.64 & 0.61 & 0.26 & 0.07 & 0.29 & 0.27 \\
Dutch census & C-FNF ($Z\perp A\mid Y$) & 123 & 0.86 & 0.64 & 0.61 & 0.26 & 0.06 & 0.29 & 0.27 \\
Dutch census & C-FNF ($Z\perp A\mid Y$) & 456 & 0.86 & 0.64 & 0.62 & 0.26 & 0.06 & 0.29 & 0.27 \\
Comm. \& Crime & ERM & 42 & 0.85 & 0.79 & 0.66 & 0.43 & 0.18 & 0.50 & 0.23 \\
Comm. \& Crime & ERM & 123 & 0.85 & 0.79 & 0.66 & 0.41 & 0.23 & 0.52 & 0.23 \\
Comm. \& Crime & ERM & 456 & 0.85 & 0.79 & 0.66 & 0.44 & 0.19 & 0.50 & 0.23 \\
Comm. \& Crime & M-FNF ($Z\perp A$) & 42 & 0.67 & 0.53 & 0.76 & 0.05 & 0.35 & 0.74 & 0.55 \\
Comm. \& Crime & M-FNF ($Z\perp A$) & 123 & 0.65 & 0.57 & 0.72 & 0.06 & 0.27 & 0.73 & 0.56 \\
Comm. \& Crime & M-FNF ($Z\perp A$) & 456 & 0.69 & 0.54 & 0.70 & 0.05 & 0.23 & 0.66 & 0.55 \\
Comm. \& Crime & C-FNF ($Z\perp A\mid Y$) & 42 & 0.81 & 0.71 & 0.74 & 0.36 & 0.18 & 0.54 & 0.32 \\
Comm. \& Crime & C-FNF ($Z\perp A\mid Y$) & 123 & 0.82 & 0.73 & 0.73 & 0.40 & 0.28 & 0.55 & 0.31 \\
Comm. \& Crime & C-FNF ($Z\perp A\mid Y$) & 456 & 0.82 & 0.74 & 0.56 & 0.39 & 0.31 & 0.56 & 0.30 \\
Taiwan credit & ERM & 42 & 0.75 & 0.63 & 0.62 & 0.07 & 0.09 & 0.20 & 0.08 \\
Taiwan credit & ERM & 123 & 0.75 & 0.63 & 0.62 & 0.08 & 0.08 & 0.19 & 0.08 \\
Taiwan credit & ERM & 456 & 0.75 & 0.64 & 0.62 & 0.07 & 0.08 & 0.20 & 0.08 \\
Taiwan credit & M-FNF ($Z\perp A$) & 42 & 0.67 & 0.52 & 0.67 & 0.01 & 0.16 & 0.29 & 0.11 \\
Taiwan credit & M-FNF ($Z\perp A$) & 123 & 0.68 & 0.52 & 0.72 & 0.01 & 0.11 & 0.24 & 0.11 \\
Taiwan credit & M-FNF ($Z\perp A$) & 456 & 0.69 & 0.51 & 0.69 & 0.01 & 0.15 & 0.31 & 0.11 \\
Taiwan credit & C-FNF ($Z\perp A\mid Y$) & 42 & 0.71 & 0.54 & 0.60 & 0.03 & 0.07 & 0.26 & 0.09 \\
Taiwan credit & C-FNF ($Z\perp A\mid Y$) & 123 & 0.70 & 0.54 & 0.62 & 0.04 & 0.08 & 0.25 & 0.09 \\
Taiwan credit & C-FNF ($Z\perp A\mid Y$) & 456 & 0.72 & 0.54 & 0.57 & 0.03 & 0.08 & 0.25 & 0.10 \\
COMPAS & ERM & 42 & 0.71 & 0.68 & 0.65 & 0.20 & 0.13 & 0.23 & 0.13 \\
COMPAS & ERM & 123 & 0.72 & 0.69 & 0.65 & 0.22 & 0.16 & 0.23 & 0.13 \\
COMPAS & ERM & 456 & 0.71 & 0.68 & 0.65 & 0.22 & 0.15 & 0.24 & 0.14 \\
COMPAS & M-FNF ($Z\perp A$) & 42 & 0.63 & 0.56 & 0.65 & 0.02 & 0.13 & 0.38 & 0.24 \\
COMPAS & M-FNF ($Z\perp A$) & 123 & 0.64 & 0.58 & 0.63 & 0.01 & 0.10 & 0.26 & 0.25 \\
COMPAS & M-FNF ($Z\perp A$) & 456 & 0.64 & 0.56 & 0.61 & 0.02 & 0.09 & 0.35 & 0.24 \\
COMPAS & C-FNF ($Z\perp A\mid Y$) & 42 & 0.66 & 0.61 & 0.62 & 0.17 & 0.12 & 0.20 & 0.18 \\
COMPAS & C-FNF ($Z\perp A\mid Y$) & 123 & 0.69 & 0.63 & 0.62 & 0.18 & 0.12 & 0.21 & 0.17 \\
COMPAS & C-FNF ($Z\perp A\mid Y$) & 456 & 0.68 & 0.63 & 0.62 & 0.18 & 0.12 & 0.21 & 0.17 \\
\bottomrule
\end{tabular}
}
\end{table*}

Table~\ref{tab:app-per-seed} reports every seed for the four methods at the \textit{shift-50} prevalence condition in Table~\ref{tab:fnf-results}. The training seed variation is small relative to the between-method differences on all datasets except Communities \& Crime (400 test rows, 38--151 positives in $A{=}0$)/ It is the only dataset where the training seed variation of $\Delta_{\mathrm{EO}}$ and $\Delta_{\mathrm{PPV}}$ is comparable to the method effect.

\clearpage

\subsection{Additional Details: Calibration by groups}
\label{app:calibration}

Table~\ref{tab:app-calibration} reports the calibration by groups results for the selected models under \textit{shift-50} prevalence condition. Under marginal alignment: the low-prevalence group $A{=}0$ becomes strongly 'over-confident' (Adult $b_0$: $0.96\to0.06$; Dutch $0.98\to0.51$) and the high-prevalence group 'under-confident' ($b_1=1.34$--$1.84$), the intercepts can be seen in opposite directions ($a_0<0<a_1$), and $\Delta_{\mathrm{cal}}$ increases for all datasets, as anticipated by $\mathrm{P}_2$. Similar dependency on training prevalence is also discussed by~\citet{kina2026prevalencecalibrationshortcutmitigation}~for calibration by groups: a model fitted to a $P(Y\mid A)$ carries that prevalence, and the mismatch may appears as group-specific miscalibration under $Q$. . Class-conditional alignment brings back both slopes to what they were for $\mathrm{ERM}$  but leaves the intercept gap at par with or above $\mathrm{ERM}$ ($\mathrm{P}_4$). Under prevalence shift at deployment (Fig.~\ref{fig:deployment}), $\Delta_{\mathrm{bias}}$ increases for every method, including at $\mathrm{ERM}$, because a score calibrated to the training prevalence $\pi_a$ is off by $\mathrm{logit}\,\pi^Q_a-\mathrm{logit}\,\pi_a$ in the large once the test prevalence is $\pi^Q_a$; when the shift is group-specific, this gap differs between groups. Per-run values for all $2{,}040$ evaluations are summarized in the table.

\begin{table*}[!htbp]
\centering
\scriptsize
\setlength{\tabcolsep}{2.5pt}
\caption{\textbf{Calibration by groups at \textit{shift 50}:} 15-bin ECE, Cox recalibration slope~\citep{cox1958two} $b_g$ and fixed-slope intercept $a_g$ (logits), signed mean bias $\bar y_g-\bar p_g$, and the two between-group gaps $\Delta_{\mathrm{bias}}$ and $\Delta_{\mathrm{cal}}$. Methods: ERM, \textbf{M} (marginal alignment, $Z\perp A$), and \textbf{C} (class-conditional alignment, $Z\perp A\mid Y$). Datasets: Adult, Dutch census (Dutch), Communities \& Crime (Crime), Taiwan credit (Credit), and COMPAS. Reported as mean $\pm$ std over all three seeds.}
\label{tab:app-calibration}
\resizebox{\textwidth}{!}{%
\begin{tabular}{ll rrrrrrrrrr}
\toprule
\textbf{Dataset} & \textbf{Method} & $\mathbf{ECE}_0$ & $\mathbf{ECE}_1$ & $\boldsymbol{b}_0$ & $\boldsymbol{b}_1$ & $\boldsymbol{a}_0$ & $\boldsymbol{a}_1$ & $\boldsymbol{\bar y_0-\bar p_0}$ & $\boldsymbol{\bar y_1-\bar p_1}$ & $\boldsymbol{\Delta_{\mathrm{bias}}}$ & $\boldsymbol{\Delta_{\mathrm{cal}}}$ \\
\midrule
\multirow{3}{*}{Adult}
      & ERM & 0.02 $\pm$ 0.00 & 0.02 $\pm$ 0.00 & 0.96 $\pm$ 0.03 & 0.85 $\pm$ 0.01 & $-0.29$ $\pm$ 0.08 & 0.03 $\pm$ 0.12 & $-0.013$ $\pm$ 0.004 & 0.004 $\pm$ 0.016 & 0.02 $\pm$ 0.01 & 0.32 $\pm$ 0.06 \\
      & M   & 0.13 $\pm$ 0.01 & 0.11 $\pm$ 0.01 & 0.06 $\pm$ 0.04 & 1.34 $\pm$ 0.05 & $-1.36$ $\pm$ 0.06 & 0.75 $\pm$ 0.06 & $-0.116$ $\pm$ 0.007 & 0.112 $\pm$ 0.009 & 0.23 $\pm$ 0.00 & 2.12 $\pm$ 0.01 \\
      & C   & 0.08 $\pm$ 0.01 & 0.08 $\pm$ 0.01 & 0.79 $\pm$ 0.03 & 0.88 $\pm$ 0.02 & $-1.20$ $\pm$ 0.14 & 0.60 $\pm$ 0.09 & $-0.077$ $\pm$ 0.013 & 0.075 $\pm$ 0.010 & 0.15 $\pm$ 0.00 & 1.80 $\pm$ 0.07 \\
\midrule
\multirow{3}{*}{Dutch}
      & ERM & 0.12 $\pm$ 0.00 & 0.10 $\pm$ 0.00 & 1.01 $\pm$ 0.02 & 0.90 $\pm$ 0.01 & $-1.24$ $\pm$ 0.05 & 0.75 $\pm$ 0.02 & $-0.120$ $\pm$ 0.004 & 0.095 $\pm$ 0.003 & 0.22 $\pm$ 0.00 & 2.00 $\pm$ 0.03 \\
      & M   & 0.23 $\pm$ 0.00 & 0.22 $\pm$ 0.01 & 0.51 $\pm$ 0.08 & 1.65 $\pm$ 0.06 & $-1.25$ $\pm$ 0.01 & 1.07 $\pm$ 0.04 & $-0.224$ $\pm$ 0.003 & 0.219 $\pm$ 0.007 & 0.44 $\pm$ 0.00 & 2.32 $\pm$ 0.03 \\
      & C   & 0.12 $\pm$ 0.01 & 0.11 $\pm$ 0.00 & 0.98 $\pm$ 0.03 & 0.91 $\pm$ 0.02 & $-1.16$ $\pm$ 0.05 & 0.89 $\pm$ 0.04 & $-0.121$ $\pm$ 0.006 & 0.114 $\pm$ 0.005 & 0.23 $\pm$ 0.01 & 2.05 $\pm$ 0.07 \\
\midrule
\multirow{3}{*}{Crime}
      & ERM & 0.18 $\pm$ 0.02 & 0.12 $\pm$ 0.00 & 0.45 $\pm$ 0.01 & 0.74 $\pm$ 0.01 & $-1.28$ $\pm$ 0.04 & 0.82 $\pm$ 0.01 & $-0.126$ $\pm$ 0.006 & 0.104 $\pm$ 0.002 & 0.23 $\pm$ 0.00 & 2.10 $\pm$ 0.03 \\
      & M   & 0.28 $\pm$ 0.02 & 0.29 $\pm$ 0.00 & 0.43 $\pm$ 0.53 & 1.48 $\pm$ 0.18 & $-1.46$ $\pm$ 0.01 & 1.41 $\pm$ 0.02 & $-0.263$ $\pm$ 0.003 & 0.290 $\pm$ 0.005 & 0.55 $\pm$ 0.01 & 2.87 $\pm$ 0.03 \\
      & C   & 0.17 $\pm$ 0.04 & 0.16 $\pm$ 0.01 & 0.71 $\pm$ 0.26 & 0.74 $\pm$ 0.01 & $-1.21$ $\pm$ 0.24 & 1.13 $\pm$ 0.01 & $-0.137$ $\pm$ 0.026 & 0.153 $\pm$ 0.007 & 0.29 $\pm$ 0.03 & 2.34 $\pm$ 0.24 \\
\midrule
\multirow{3}{*}{Credit}
      & ERM & 0.03 $\pm$ 0.00 & 0.05 $\pm$ 0.00 & 0.96 $\pm$ 0.01 & 0.95 $\pm$ 0.00 & $-0.34$ $\pm$ 0.02 & 0.35 $\pm$ 0.02 & $-0.034$ $\pm$ 0.003 & 0.046 $\pm$ 0.002 & 0.08 $\pm$ 0.00 & 0.68 $\pm$ 0.01 \\
      & M   & 0.06 $\pm$ 0.00 & 0.05 $\pm$ 0.00 & 0.50 $\pm$ 0.05 & 1.22 $\pm$ 0.01 & $-0.50$ $\pm$ 0.02 & 0.36 $\pm$ 0.02 & $-0.058$ $\pm$ 0.003 & 0.053 $\pm$ 0.003 & 0.11 $\pm$ 0.00 & 0.86 $\pm$ 0.02 \\
      & C   & 0.05 $\pm$ 0.00 & 0.05 $\pm$ 0.00 & 0.94 $\pm$ 0.02 & 0.89 $\pm$ 0.01 & $-0.44$ $\pm$ 0.02 & 0.35 $\pm$ 0.01 & $-0.045$ $\pm$ 0.002 & 0.046 $\pm$ 0.002 & 0.09 $\pm$ 0.00 & 0.79 $\pm$ 0.02 \\
\midrule
\multirow{3}{*}{COMPAS}
      & ERM & 0.13 $\pm$ 0.00 & 0.06 $\pm$ 0.01 & 0.63 $\pm$ 0.01 & 0.75 $\pm$ 0.03 & $-0.69$ $\pm$ 0.02 & 0.09 $\pm$ 0.02 & $-0.116$ $\pm$ 0.003 & 0.018 $\pm$ 0.004 & 0.13 $\pm$ 0.00 & 0.78 $\pm$ 0.01 \\
      & M   & 0.18 $\pm$ 0.01 & 0.10 $\pm$ 0.01 & 0.52 $\pm$ 0.14 & 0.86 $\pm$ 0.07 & $-0.81$ $\pm$ 0.02 & 0.38 $\pm$ 0.01 & $-0.157$ $\pm$ 0.004 & 0.086 $\pm$ 0.002 & 0.24 $\pm$ 0.00 & 1.19 $\pm$ 0.01 \\
      & C   & 0.13 $\pm$ 0.01 & 0.11 $\pm$ 0.01 & 0.78 $\pm$ 0.01 & 0.62 $\pm$ 0.02 & $-0.72$ $\pm$ 0.07 & 0.37 $\pm$ 0.04 & $-0.125$ $\pm$ 0.013 & 0.073 $\pm$ 0.006 & 0.20 $\pm$ 0.01 & 1.09 $\pm$ 0.05 \\
\bottomrule
\end{tabular}%
}
\end{table*}

\subsection{Additional Details: Score-level alignment}
\label{app:score-level}

We constrain the score distribution, whereas alignment methods constrain the representation. We look for (i) whether aligning the score $R$ directly, rather than $Z$, changes the outcome, and (ii) whether a weaker constraint that matches only the class-conditional \textit{mean} of the score suffices for separation. We evaluate this on Dutch at \textit{shift-50} with the ERM model (Table~\ref{tab:app-score-level}).

\paragraph{Score-level alignment.} It reproduce the representation-level results: marginal alignment gives $\Delta_{\mathrm{DP}}=0.006$ with $\Delta_{\mathrm{EO}}=0.26$ and $\Delta_{\mathrm{bias}}=0.44$ (FNF: $0.007$, $0.31$, $0.44$), and within-class alignment gives $\Delta_{\mathrm{EO}}=0.024$ (C-FNF: $0.028$) with $\Delta_{\mathrm{DP}}$ and $\Delta_{\mathrm{bias}}$ at the ERM level. Aligning one scalar costs less utility than aligning $Z$ (AUROC $0.81$ versus $0.71$ for the marginal target), but the fairness conclusions remain unchanged, as expected for statements about $p(r\mid a)$ and $p(r\mid y,a)$.

\paragraph{Conditional mean matching.} We retrain the ERM head with the penalty $\lambda\sum_{y}\big(\mathbb{E}[R\mid Y{=}y,A{=}0]-\mathbb{E}[R\mid Y{=}y,A{=}1]\big)^2$ estimated per mini-batch, $\lambda\in\{1,10,100\}$. The class-conditional mean gaps fall from $+0.041$ ($y{=}0$) and $-0.028$ ($y{=}1$) under ERM to $0.000$ on train and test for $\lambda\ge100$ (also at $\lambda=10^3,10^4$). It does not deliver separation: $\Delta_{\mathrm{EO}}$ stays at $0.06$--$0.08$ (ERM $0.055$) and $D_{A\mid Y}$ does not fall, because the group-specific shape of $p(r\mid y,a)$ beyond its mean is untouched. It does move calibration: $\Delta_{\mathrm{bias}}$ rises to $0.41$.

\begin{table*}[t]
\centering
\small
\setlength{\tabcolsep}{0pt}
\renewcommand{\arraystretch}{1.15}
\caption{Score-level alignment and class-conditional mean matching on Dutch, \textit{shift-50}, test split, mean over three seeds.}
\label{tab:app-score-level}
\begin{tabular*}{\textwidth}{@{\extracolsep{\fill}} l ccccccccc @{}}
\toprule
\textbf{Method} 
& \textbf{AUROC} 
& $\boldsymbol{D_A}$ 
& $\boldsymbol{D_{A\mid Y}}$ 
& $\boldsymbol{\Delta_{\mathrm{DP}}}$ 
& $\boldsymbol{\Delta_{\mathrm{EO}}}$ 
& $\boldsymbol{\Delta_{\mathrm{PPV}}}$ 
& $\boldsymbol{\Delta_{\mathrm{NPV}}}$ 
& $\boldsymbol{\Delta_{\mathrm{bias}}}$ 
& $\boldsymbol{\Delta_{\mathrm{cal}}}$ \\
\midrule
ERM                                                       & 0.896 & 0.70       & 0.64       & 0.296 & 0.055 & 0.27 & 0.28 & 0.22 & 2.00 \\
Score $R\perp A$                  & 0.813 & 0.54       & 0.77       & 0.006 & 0.258 & 0.55 & 0.35 & 0.44 & 3.68 \\
Score $R\perp A\mid Y$ (within-class map)                 & 0.881 & 0.67       & 0.59       & 0.257 & 0.024 & 0.36 & 0.28 & 0.24 & 2.11 \\
Mean matching, $\lambda=1$                                & 0.896 & 0.70       & 0.63       & 0.291 & 0.051 & 0.29 & 0.28 & 0.22 & 1.95 \\
Mean matching, $\lambda=10$                               & 0.889 & 0.69       & 0.66       & 0.270 & 0.082 & 0.27 & 0.30 & 0.28 & 1.79 \\
Mean matching, $\lambda=100$                              & 0.883 & 0.69       & 0.66       & 0.266 & 0.081 & 0.30 & 0.30 & 0.41 & 1.87 \\
\midrule
FNF $Z\perp A$ ($\gamma=1$), repr. level         & 0.707 & 0.52$^{*}$ & 0.79$^{*}$ & 0.007 & 0.313 & 0.63 & 0.21 & 0.44 & 2.32 \\
C-FNF $Z\perp A\mid Y$ ($\gamma=1$), repr. level & 0.888 & 0.68$^{*}$ & 0.56$^{*}$ & 0.275 & 0.028 & 0.34 & 0.27 & 0.23 & 2.05 \\
\bottomrule
\end{tabular*}
\end{table*}

\clearpage

\section{Appendix 2: CheXpert}
\label{app:chexpert}

\subsection{Additional Details: Dataset details and cohorts}

We perform a classification task on Edema and Cardiomegaly from frontal CheXpert radiographs using recorded sex ($A=0$: female; $A=1$: male). Images with unknown sex are excluded;
Uncertain pathology labels are positive, and blank labels are negative. The \emph{shift50} intervention removes half of female-positive images separately within each partition. Both training and evaluation use the shifted distribution; this experiment does not test a new deployment shift. Table~\ref{tab:cx-cohort} reports the retained cohorts.

\begin{table}[!htbp]
\centering
\small
\setlength{\tabcolsep}{3pt}
\caption{\textit{shift50} cohorts. Patient partitions are fixed across seeds; image-level deletion changes the retained patient counts.}
\label{tab:cx-cohort}
\begin{tabular}{@{}llrrrr@{}}
\toprule
Task & Partition & Images & Patients & $\pi_0$ & $\pi_1$ \\
\midrule
Edema & Train & 107,688 & 37,580 & 0.198 & 0.313 \\
 & Checkpoint & 9,050 & 3,134 & 0.195 & 0.325 \\
 & Development & 8,872 & 3,138 & 0.202 & 0.333 \\
 & Probe val. & 17,186 & 6,271 & 0.206 & 0.310 \\
 & Test & 35,078 & 12,551 & 0.205 & 0.309 \\ \midrule
Cardiomegaly & Train & 111,980 & 38,082 & 0.081 & 0.159 \\
 & Checkpoint & 9,363 & 3,175 & 0.093 & 0.161 \\
 & Development & 9,225 & 3,168 & 0.082 & 0.173 \\
 & Probe val. & 17,896 & 6,351 & 0.085 & 0.168 \\
 & Test & 36,525 & 12,693 & 0.085 & 0.160 \\
\bottomrule
\end{tabular}
\end{table}

Similar to the tabular setting, we use three training seeds (42, 123, 456). ImageNet-init DenseNet121 models with $256\times256$ RGB input and standard normalization. We use BCE, AdamW (lr $10^{-4}$, decay of $10^{-5}$), a batch size of 32, and at most 30 epochs with early stopping. It uses a patience of 7 epochs after the first 10 epochs and minimizes validation BCE.

We adapt continuous Fair Normalizing Flows~\citep{balunović2022fairnormalizingflows}. Marginal training uses the analogous marginal KL divided by $d$. We use $\gamma\in\{.05,.50,.95\}$, 60 epochs, batch size 256, AdamW (learning rate $.001$, weight decay $.0001$), gradient clipping at five, and a ten-epoch ramp. Each KL direction uses 128 samples from its fitted prior. Checkpoint selection (passing criteria) minimizes validation MMD (worst-label conditional MMD
for conditional FNF), subject to retaining at least 90\% of ERM validation AUROC. All 36 FNF fits completed without fallback; all fitted GMMs converged. The existing MMD and adversarial-adapter experiments are ancillary comparisons; the marginal-versus-conditional analysis here holds the FNF architecture fixed. Results are detailed in~Tables~\ref{tab:cx-fairness} and \ref{tab:cx-utility}.

\paragraph{Evaluation.} 
The frozen representation $Z$ and scalar score $R$ are audited separately,
both marginally and within each outcome class. Probes use up to 40,000 training examples; the RBF-SVM uses at most 2,000 per sex. Hyperparameters and MLP early stopping use only probe-validation
balanced BCE. Decision metrics use a common, sex-independent threshold selected by validation Youden's index. Calibration uses 15 equal-width ECE bins and logistic recalibration.

\begin{figure}[!htbp]
  \centering
  \begin{minipage}[c]{0.58\linewidth}
    \centering
    \includegraphics[width=\linewidth]{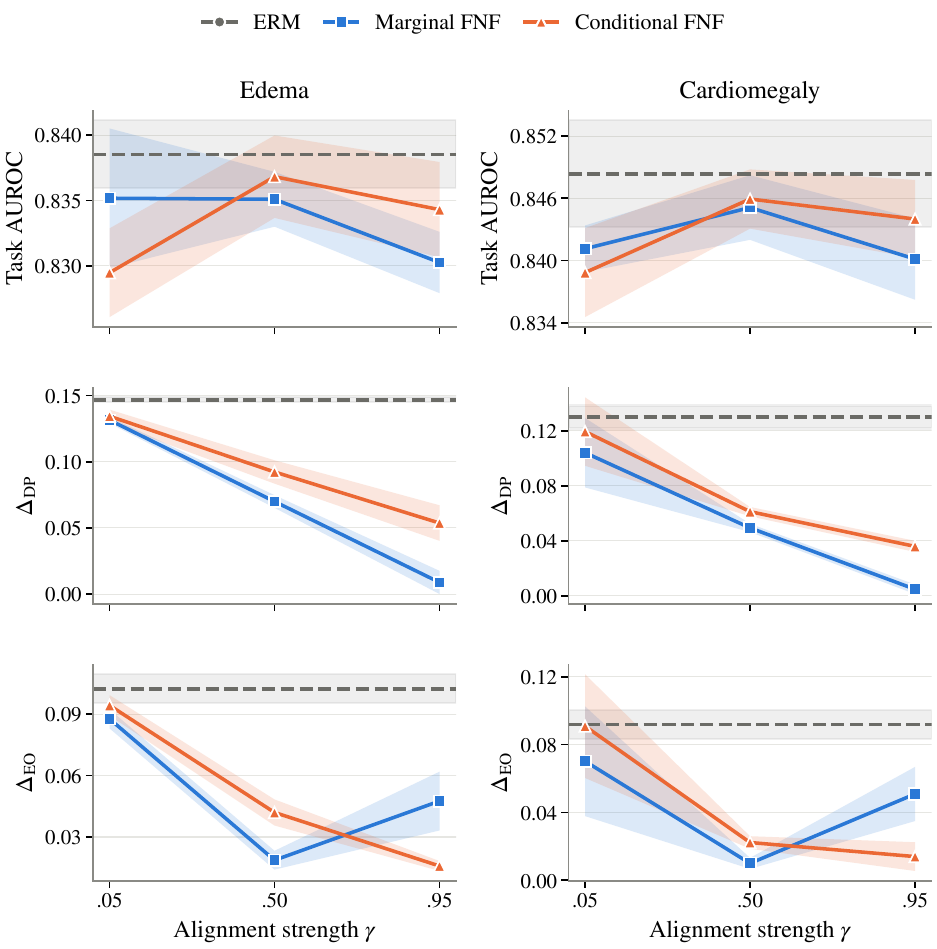}
  \end{minipage}%
  \hfill
  \begin{minipage}[c]{0.38\linewidth}
    \small
    \caption{\textbf{Invariance strength changes the fairness trade-off.} Lines and bands show means $\pm$ std. Conditional alignment lowers $\mathrm{EO}$ gaps at strong alignment, while marginal alignment produces smaller $\mathrm{DP}$ gaps.}
    \label{fig:cx-gamma}
  \end{minipage}
\end{figure}

\begin{table}[!htbp]
\centering
\small
\setlength{\tabcolsep}{3pt}
\caption{\textbf{Fairness and calibration for every invariance method strength} (mean $\pm$ seed reported). $\Delta_{\rm bias}$ is the absolute between-sex difference in mean residual $Y-R$; $\Delta_{\rm cal}$ is the absolute difference in fixed-slope calibration intercepts.}
\label{tab:cx-fairness}
\begin{tabular}{@{}llrrrrr@{}}
\toprule
Task & Method & $\Delta_{\rm DP}$ & $\Delta_{\rm EO}$ & Max ECE & $\Delta_{\rm bias}$ & $\Delta_{\rm cal}$ \\
\midrule
Edema & ERM & $0.147\pm0.002$ & $0.102\pm0.007$ & $0.020\pm0.009$ & $0.013\pm0.006$ & $0.092\pm0.037$ \\
 & M .05 & $0.132\pm0.002$ & $0.088\pm0.005$ & $0.037\pm0.003$ & $0.018\pm0.005$ & $0.160\pm0.043$ \\
 & M .50 & $0.070\pm0.005$ & $0.019\pm0.005$ & $0.037\pm0.002$ & $0.062\pm0.004$ & $0.504\pm0.012$ \\
 & M .95 & $0.009\pm0.009$ & $0.048\pm0.014$ & $0.069\pm0.006$ & $0.101\pm0.006$ & $0.797\pm0.071$ \\
 & C .05 & $0.134\pm0.005$ & $0.094\pm0.005$ & $0.034\pm0.009$ & $0.020\pm0.009$ & $0.165\pm0.064$ \\
 & C .50 & $0.092\pm0.009$ & $0.042\pm0.006$ & $0.039\pm0.004$ & $0.046\pm0.008$ & $0.387\pm0.055$ \\
 & C .95 & $0.054\pm0.013$ & $0.016\pm0.002$ & $0.051\pm0.006$ & $0.072\pm0.007$ & $0.582\pm0.029$ \\
Cardiomegaly & ERM & $0.130\pm0.008$ & $0.092\pm0.009$ & $0.022\pm0.003$ & $0.013\pm0.008$ & $0.188\pm0.102$ \\
 & M .05 & $0.104\pm0.025$ & $0.070\pm0.032$ & $0.030\pm0.006$ & $0.019\pm0.006$ & $0.266\pm0.087$ \\
 & M .50 & $0.049\pm0.003$ & $0.010\pm0.003$ & $0.031\pm0.006$ & $0.044\pm0.002$ & $0.635\pm0.032$ \\
 & M .95 & $0.005\pm0.003$ & $0.051\pm0.016$ & $0.050\pm0.002$ & $0.070\pm0.001$ & $0.962\pm0.018$ \\
 & C .05 & $0.119\pm0.025$ & $0.091\pm0.031$ & $0.028\pm0.008$ & $0.016\pm0.008$ & $0.246\pm0.110$ \\
 & C .50 & $0.061\pm0.003$ & $0.022\pm0.004$ & $0.027\pm0.001$ & $0.041\pm0.002$ & $0.592\pm0.018$ \\
 & C .95 & $0.036\pm0.004$ & $0.014\pm0.008$ & $0.035\pm0.001$ & $0.052\pm0.001$ & $0.737\pm0.008$ \\
\bottomrule
\end{tabular}
\end{table}

\begin{table}[!htbp]
\centering
\small
\setlength{\tabcolsep}{3pt}
\caption{\textbf{All settings:} utility and strongest demographic probe AUROC (mean $\pm$ sample std over all training seeds). M/C denote marginal/conditional objectives, followed by $\gamma$.}
\label{tab:cx-utility}
\begin{tabular}{@{}llrrrrr@{}}
\toprule
Task & Method & AUROC & AUPRC & $D_A(Z)$ & $D_{A\mid Y}(Z)$ & $D_{A\mid Y}(R)$ \\
\midrule
Edema & ERM & $0.839\pm0.003$ & $0.646\pm0.007$ & $0.978\pm0.002$ & $0.978\pm0.002$ & $0.613\pm0.019$ \\
 & M .05 & $0.835\pm0.005$ & $0.641\pm0.011$ & $0.693\pm0.022$ & $0.678\pm0.022$ & $0.583\pm0.011$ \\
 & M .50 & $0.835\pm0.002$ & $0.640\pm0.006$ & $0.631\pm0.034$ & $0.624\pm0.029$ & $0.519\pm0.008$ \\
 & M .95 & $0.830\pm0.002$ & $0.630\pm0.005$ & $0.623\pm0.042$ & $0.629\pm0.043$ & $0.555\pm0.008$ \\
 & C .05 & $0.829\pm0.003$ & $0.632\pm0.008$ & $0.675\pm0.013$ & $0.664\pm0.024$ & $0.582\pm0.010$ \\
 & C .50 & $0.837\pm0.003$ & $0.642\pm0.007$ & $0.613\pm0.004$ & $0.599\pm0.013$ & $0.537\pm0.010$ \\
 & C .95 & $0.834\pm0.004$ & $0.636\pm0.008$ & $0.596\pm0.016$ & $0.590\pm0.023$ & $0.513\pm0.003$ \\
Cardiomegaly & ERM & $0.848\pm0.005$ & $0.531\pm0.010$ & $0.978\pm0.003$ & $0.978\pm0.002$ & $0.602\pm0.009$ \\
 & M .05 & $0.841\pm0.002$ & $0.514\pm0.009$ & $0.703\pm0.019$ & $0.696\pm0.023$ & $0.607\pm0.016$ \\
 & M .50 & $0.845\pm0.003$ & $0.524\pm0.006$ & $0.650\pm0.021$ & $0.636\pm0.025$ & $0.551\pm0.011$ \\
 & M .95 & $0.840\pm0.004$ & $0.507\pm0.011$ & $0.644\pm0.028$ & $0.646\pm0.023$ & $0.533\pm0.008$ \\
 & C .05 & $0.839\pm0.004$ & $0.507\pm0.014$ & $0.673\pm0.012$ & $0.663\pm0.012$ & $0.592\pm0.019$ \\
 & C .50 & $0.846\pm0.003$ & $0.526\pm0.008$ & $0.630\pm0.005$ & $0.623\pm0.004$ & $0.537\pm0.002$ \\
 & C .95 & $0.844\pm0.004$ & $0.521\pm0.008$ & $0.617\pm0.002$ & $0.616\pm0.004$ & $0.515\pm0.001$ \\
\bottomrule
\end{tabular}
\end{table}

\subsection{Additional Details: Probe Results}

Strong conditional alignment did reduce conditional representation probe AUROC from $\approx$ $.978$ under $\mathrm{ERM}$ to $.590$ (Edema) and $.616$ (Cardiomegaly), but none of the 18 conditional representations pass the marginal or conditional near-chance $\mathrm{AUROC}$ criterion (which is, in fact is unrealistic for medical images. Figure~\ref{fig:cx-probes} shows why a single probe family is insufficient. Score-level results are stronger: conditional FNF at $\gamma=.95$ passes all three test criteria for Edema in 3/3 seeds. Neither task passes the conditional $\mathrm{AUROC}$ criterion (Table~\ref{tab:cx-gate}), but are sufficient to support our theoretical findings.

\begin{figure*}[!htbp]
  \centering
  \begin{minipage}[c]{0.60\textwidth}
    \centering
    \includegraphics[width=\linewidth]{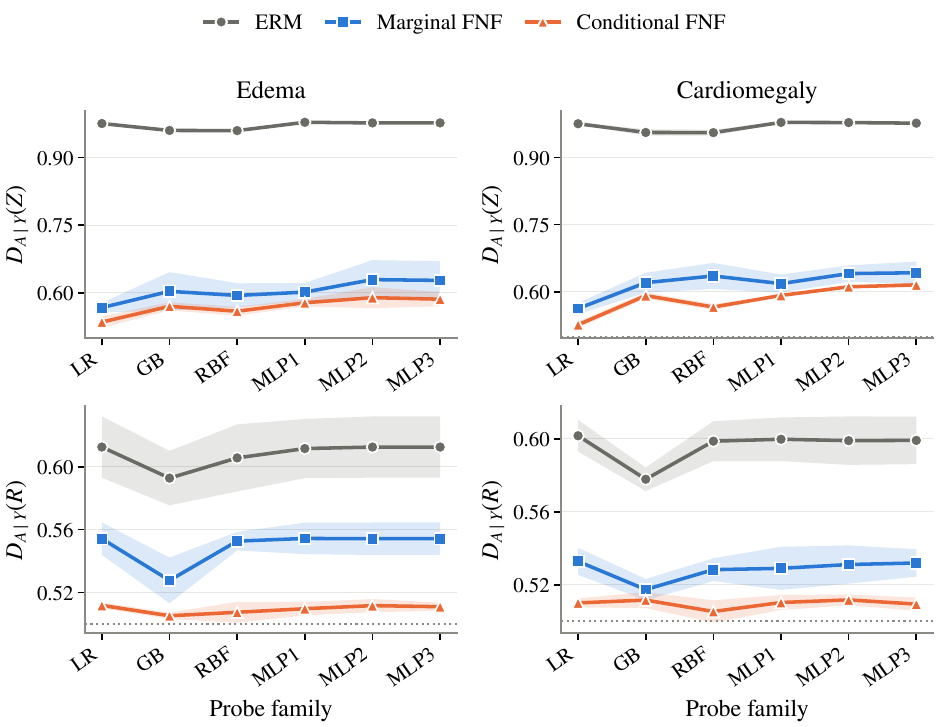}
  \end{minipage}%
  \hfill
  \begin{minipage}[c]{0.37\textwidth}
    \caption{\textbf{Conditional probe-family audit at $\gamma=.95$.} Top: representations; bottom: scores.}
    \label{fig:cx-probes}
  \end{minipage}
\end{figure*}

\begin{table}[!htbp]
\centering
\small
\setlength{\tabcolsep}{3pt}
\caption{\textbf{Conditional score audit.} Upper bounds use the within-scope Bonferroni correction; passing upper bound $\leq 0.55$. All three Edema test runs also pass the balanced-accuracy and log-loss interval checks.}
\label{tab:cx-gate}
\begin{tabular}{@{}llrrrr@{}}
\toprule
Task & Seed & Test $D_{A\mid Y}(R)$ & Test upper & Dev. upper & Test pass \\
\midrule
Edema & 42 & 0.515 & 0.540 & 0.591 & Yes \\
 & 123 & 0.510 & 0.541 & 0.563 & Yes \\
 & 456 & 0.514 & 0.545 & 0.572 & Yes \\
Cardiomegaly & 42 & 0.514 & 0.557 & 0.637 & No \\
 & 123 & 0.516 & 0.559 & 0.618 & No \\
 & 456 & 0.515 & 0.558 & 0.606 & No \\
\bottomrule
\end{tabular}
\end{table}

\subsection{Additional Details: Calibration and operating-point analysis}

At $\gamma=.95$, conditional FNF improves mean $\mathrm{AUROC}$, $\mathrm{EO}$, and worst-group $\mathrm{ECE}$ relative to marginal FNF on both tasks, but increases $\mathrm{DP}$ gaps. Detailed in Table~\ref{tab:cx-calibration}; Fig.~\ref{fig:cx-calibration}. The $\mathrm{EO}$ comparison depends on the operating point (Fig.~\ref{fig:cx-threshold}). All results support our propositions, though extreme thresholds can shrink gaps by
predicting the same label for everyone. Stronger alignment is therefore not preferable, and reduced gaps alone do not establish improved predictions.

\begin{figure*}[!htbp]
  \centering
  \begin{minipage}[!htbp]{0.48\textwidth}
    \vspace{0pt} 
    \centering
    \includegraphics[width=\linewidth]{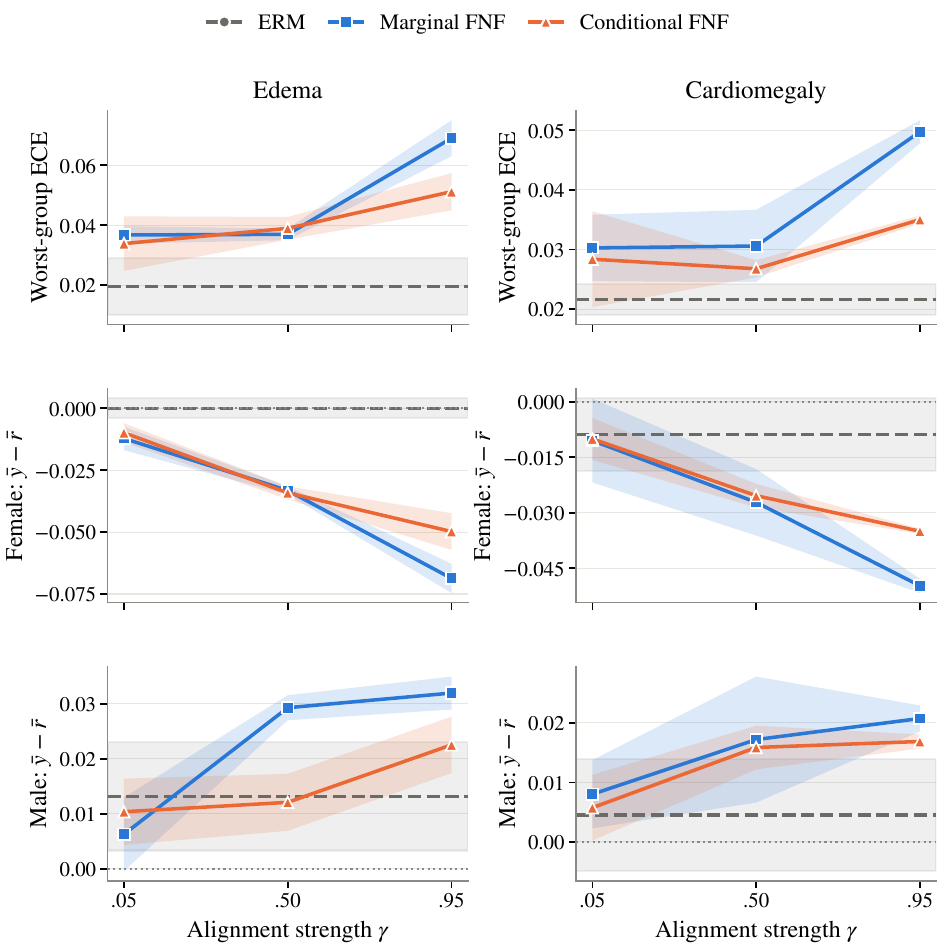}
    \caption{\textbf{Calibration costs vs. improved $\mathrm{EO}$.} Means $\pm$ std across strengths.}
    \label{fig:cx-calibration}
  \end{minipage}%
  \hfill
  \begin{minipage}[!htbp]{0.48\textwidth}
    \vspace{0pt} 
    \centering
    \includegraphics[width=\linewidth]{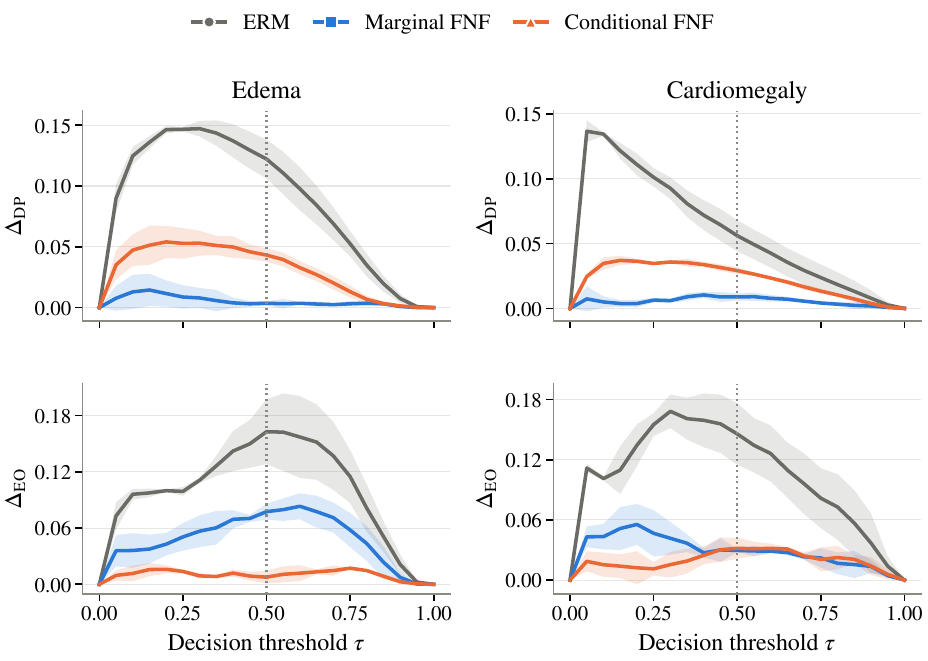}
    \caption{\textbf{Fairness gaps dependency on threshold.} $\mathrm{ERM}$ and FNF at $\gamma=.95$, evaluated on the same test cohorts. Bands show training seed std; the vertical dotted line marks the default $.5$ threshold.}
    \label{fig:cx-threshold}
  \end{minipage}
\end{figure*}

\begin{table}[!htbp]
\centering
\small
\setlength{\tabcolsep}{3pt}
\caption{Sex-calibration at $\gamma=.95$ (mean $\pm$ std reported). $\alpha_a$ is fitted separately with slope fixed at one. Negative intercepts and negative mean residuals $\implies$ overprediction. Ideal slope is 1; ideal intercept and mean residual are 0.}
\label{tab:cx-calibration}
\begin{tabular}{@{}llcrrrr@{}}
\toprule
Task & Method & $A$ & ECE & Slope $b_a$ & Intercept $\alpha_a$ & $\bar y_a-\bar r_a$ \\
\midrule
Edema & ERM & 0 & $0.010\pm0.005$ & $0.978\pm0.054$ & $0.001\pm0.034$ & $-0.000\pm0.004$ \\
 &  & 1 & $0.020\pm0.009$ & $0.926\pm0.044$ & $0.093\pm0.071$ & $0.013\pm0.010$ \\
 & M .95 & 0 & $0.069\pm0.006$ & $0.875\pm0.037$ & $-0.561\pm0.057$ & $-0.069\pm0.006$ \\
 &  & 1 & $0.036\pm0.004$ & $0.858\pm0.056$ & $0.236\pm0.030$ & $0.032\pm0.003$ \\
 & C .95 & 0 & $0.051\pm0.006$ & $0.867\pm0.079$ & $-0.416\pm0.036$ & $-0.050\pm0.007$ \\
 &  & 1 & $0.029\pm0.007$ & $0.861\pm0.076$ & $0.166\pm0.043$ & $0.023\pm0.005$ \\
Cardiomegaly & ERM & 0 & $0.015\pm0.004$ & $0.923\pm0.083$ & $-0.136\pm0.142$ & $-0.009\pm0.010$ \\
 &  & 1 & $0.018\pm0.008$ & $0.870\pm0.102$ & $0.052\pm0.111$ & $0.005\pm0.009$ \\
 & M .95 & 0 & $0.050\pm0.002$ & $0.851\pm0.030$ & $-0.718\pm0.031$ & $-0.050\pm0.002$ \\
 &  & 1 & $0.023\pm0.001$ & $0.864\pm0.020$ & $0.244\pm0.025$ & $0.021\pm0.002$ \\
 & C .95 & 0 & $0.035\pm0.001$ & $0.857\pm0.036$ & $-0.534\pm0.008$ & $-0.035\pm0.001$ \\
 &  & 1 & $0.024\pm0.003$ & $0.818\pm0.018$ & $0.204\pm0.016$ & $0.017\pm0.001$ \\
\bottomrule
\end{tabular}
\end{table}

\clearpage

\section{Appendix 3: Fairness Metrics Do Not Establish Invariance}

\paragraph{Failure to achieve demographic erasure using popular methods.}
We initially used more popular methods, specifically DANN~\citep{ganin2016domain} and class-conditional DANN~\cite{long2018conditional}. Neither achieved its objective, i.e., to erase demographic information in our CheXpert experiments. Across eight adversary strengths and five seeds, probes were still able to recover demographic information from learned representations. Fairness gaps can sometimes decrease (and even be zero) because diagnostic performance also decreases as a function of demographic erasure. We leave it to future work to investigate why certain methods fail to erase demographic signals even with strong constraints while others succeed, and why it is \textit{easy} on some datasets and not on others. This also shows that improved fairness metrics alone do not establish invariance. Our broader exploratory comparison revealed substantial variation across methods and hyperparameters: some reduced demographic decodability while retaining useful prediction, whereas others retained demographic information or substantially reduced predictive performance. Moreover, nonlinear probes recovered information that linear probes missed, highlighting the importance of evaluating erasure with multiple probe families (also discussed in Appendix 2). 

Beyond DANN and CDANN, we also explored MMD~\citep{gretton2006kernel} and HSIC~\citep{perez2017fair} penalties during training and their alternative variants, such as adversarial ensemble variational bottlenecks with MMD~\citep{gretton2006kernel}, learning Fair Representations (LFR)~\citep{zemel2013learning}, Gaussian moment transport, LEACE~\citep{belrose2023leace}, LEOPARD~\citep{ravfogel2022adversarial}, Obliviate~\citep{shakibania2026obliviate}, random Fourier features~\citep{rahimi2007random} followed by LEACE. Across 51 configurations in the single-seed Edema study, LFR, FNF, Gaussian transport, LEACE, LEOPARD, and RFF–LEACE did not meet our expectation to reduce demographic decodability. As the probe-comparison plot illustrates in Fig~\ref{fig:erasure-probe-comparison}, low linear decodability sometimes hides nonlinear leakage. Some methods like MMD and HSIC did achieve near-chance decodability but with reduced task utility (classification AUROC). Similarly, Obliviator also costs predictive performance; variational bottlenecks and gender-aware channels retained decodable information. An analysis is detailed in Table~\ref{tab:chexpert-erasure-exploration}. Note that this analysis does not reflect a limitation or general failure of the methods. 

\begin{table*}[t]
\centering
\caption{\textbf{Evaluation across pathologies and objectives.} Task AUROC, marginal and class-conditional sex probe AUROCs, demographic parity ($\mathrm{DP}$) gap, and equalized odds ($\mathrm{EO}$) gap. Results are reported as mean $\pm$ std.}
\label{tab:pathology_results}
\noindent\resizebox{\textwidth}{!}{%
\begin{tabular}{@{}llcccccc@{}}
\toprule
& & & \multicolumn{3}{c}{\textbf{Sex Probe AUROC} $\downarrow$} & \multicolumn{2}{c}{\textbf{Fairness Gaps} $\downarrow$} \\
\cmidrule(lr){4-6} \cmidrule(lr){7-8}
\textbf{Pathology} & \textbf{Method} & \textbf{Task AUROC} $\uparrow$ & \textbf{Marginal} & $\boldsymbol{Y=0}$ & $\boldsymbol{Y=1}$ & \textbf{DP Gap} & \textbf{EO Gap} \\
\midrule
\multirow{2}{*}{Edema} 
  & DANN  & 0.842 $\pm$ 0.003 & 0.950 $\pm$ 0.005 & 0.950 $\pm$ 0.006 & 0.938 $\pm$ 0.008 & 0.069 $\pm$ 0.023 & 0.033 $\pm$ 0.025 \\
  & CDANN & 0.845 $\pm$ 0.003 & 0.952 $\pm$ 0.006 & 0.955 $\pm$ 0.007 & 0.940 $\pm$ 0.012 & 0.111 $\pm$ 0.015 & 0.074 $\pm$ 0.025 \\
\midrule
\multirow{2}{*}{Cardiomegaly} 
  & DANN  & 0.842 $\pm$ 0.004 & 0.947 $\pm$ 0.006 & 0.951 $\pm$ 0.006 & 0.928 $\pm$ 0.018 & 0.054 $\pm$ 0.031 & 0.037 $\pm$ 0.015 \\
  & CDANN & 0.843 $\pm$ 0.006 & 0.944 $\pm$ 0.030 & 0.957 $\pm$ 0.008 & 0.924 $\pm$ 0.027 & 0.105 $\pm$ 0.028 & 0.078 $\pm$ 0.042 \\
\bottomrule
\end{tabular}%
}
\end{table*}

\begin{figure*}[p]
    \centering
    \begin{minipage}[b]{\textwidth}
        \centering
        \includegraphics[
            width=\linewidth,
            height=0.20\textheight,
            keepaspectratio
        ]{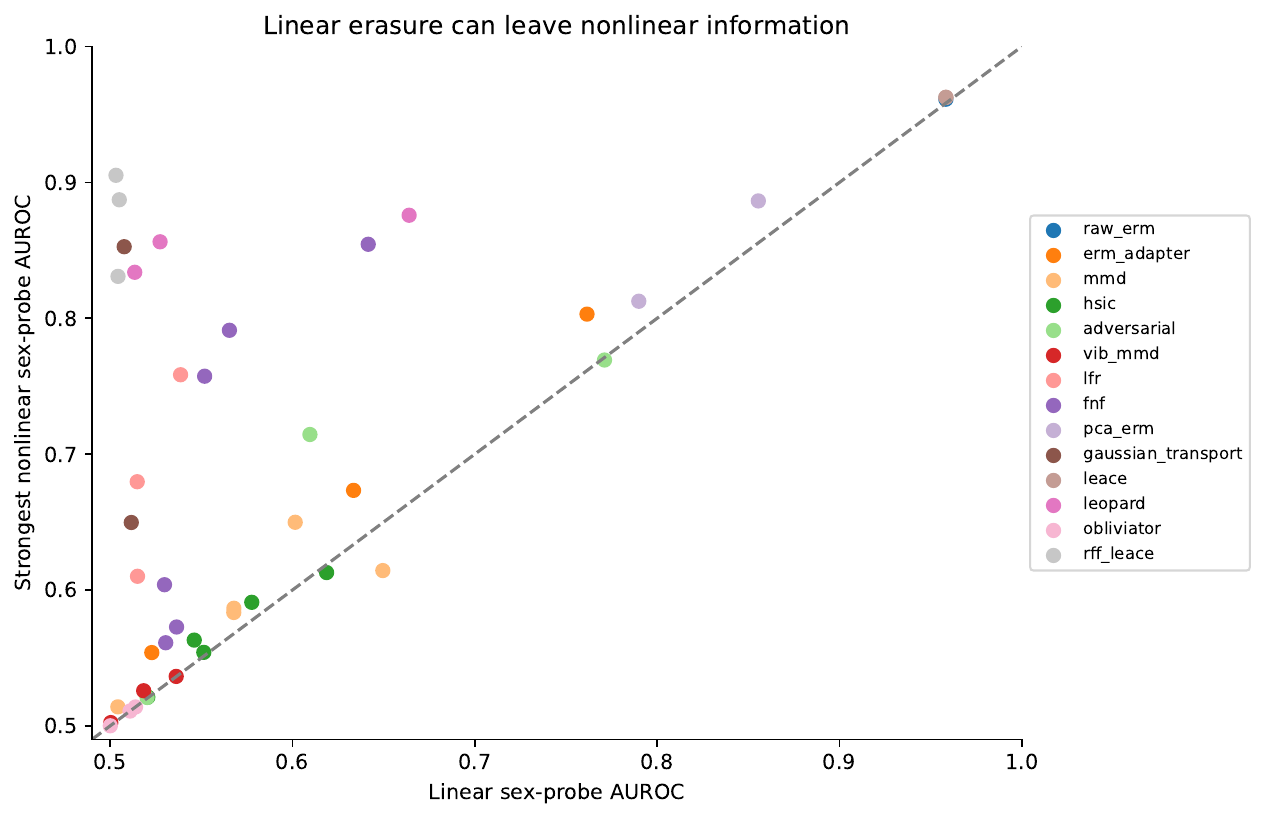}
        \par\vspace{2pt}
        {\small \textbf{(a)} Nonlinear probes reveal demographic information missed by linear probes.}
        \label{fig:erasure-probe-comparison}
    \end{minipage}
    \vspace{8pt}
    \begin{minipage}[b]{\textwidth}
        \centering
        \includegraphics[
            width=\linewidth,
            height=0.20\textheight,
            keepaspectratio
        ]{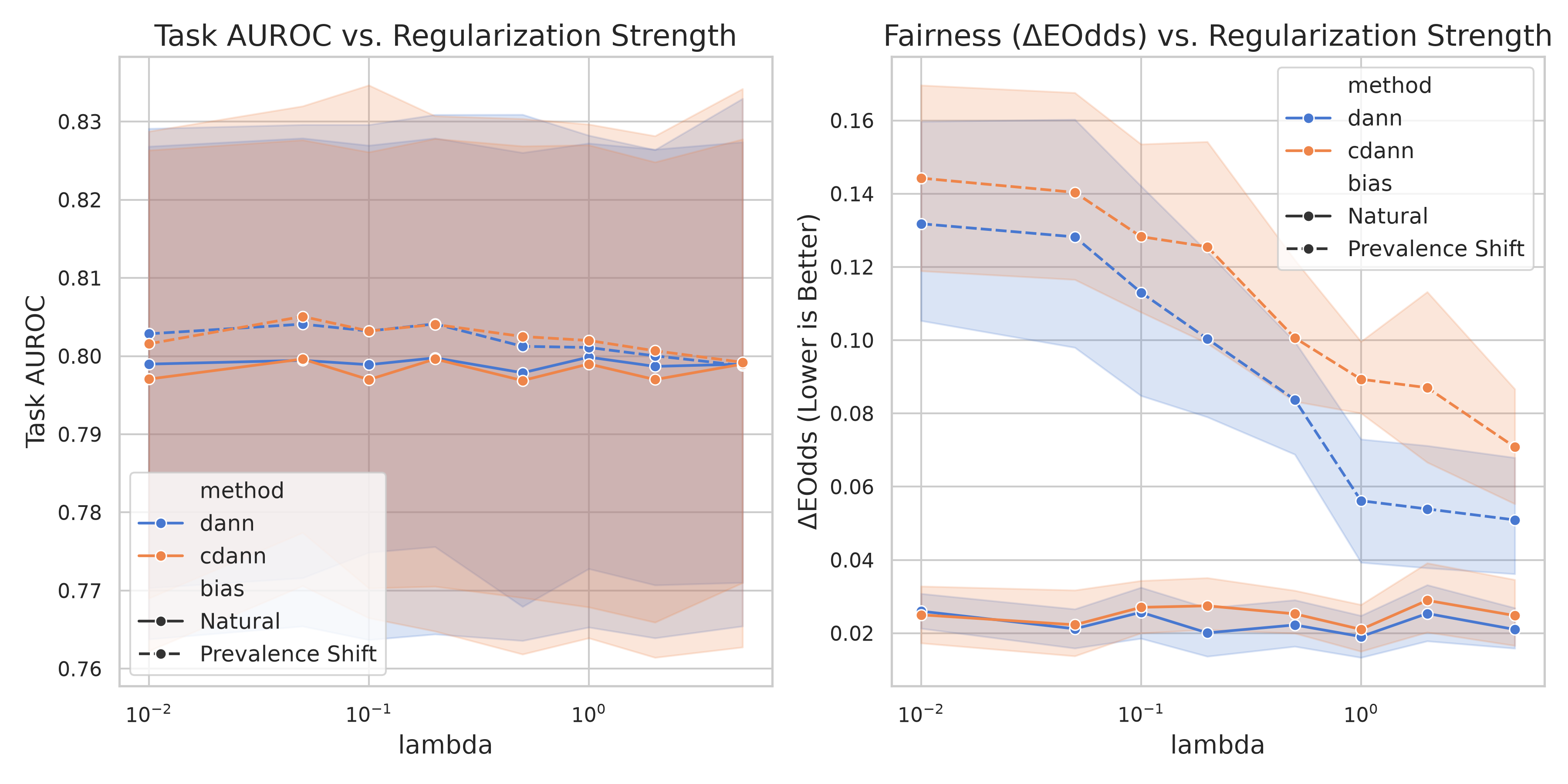}
        \par\vspace{2pt}
        {\small \textbf{(b)} Diagnostic AUROC and $\mathrm{\Delta_{EO}}$ across adversarial strengths.}
        \label{fig:erasure-lambda-sweep}
    \end{minipage}
    \vspace{8pt}
    \begin{minipage}[b]{\textwidth}
        \centering
        \includegraphics[
            width=\linewidth,
            height=0.20\textheight,
            keepaspectratio
        ]{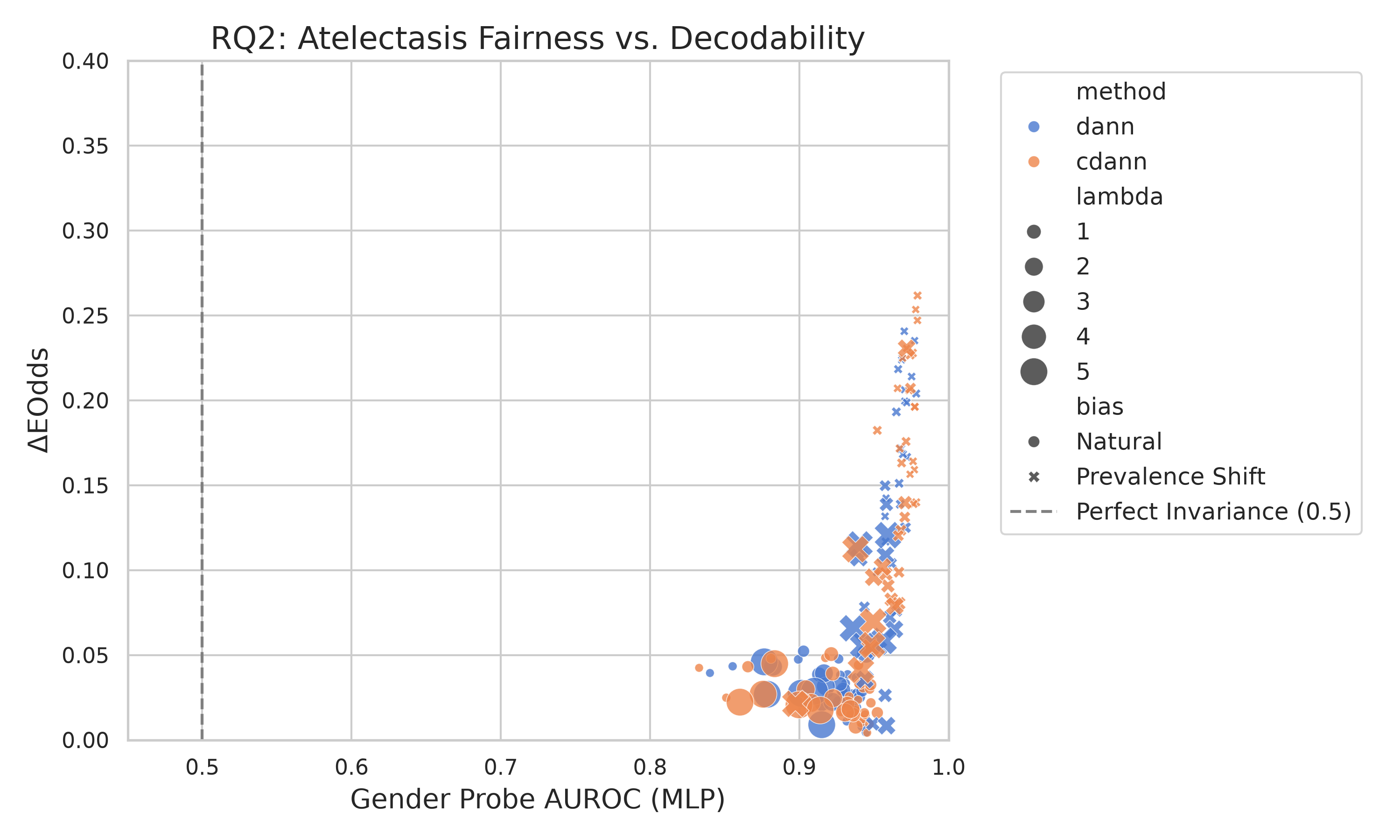}
        \par\vspace{2pt}
        {\small \textbf{(c)} Atelectasis: small fairness gaps coexist with strong demographic decodability.}
        \label{fig:erasure-atelectasis}
    \end{minipage}
    \vspace{6pt}
    \caption{\textbf{Evaluating demographic erasure on CheXpert.} 
        \textbf{(a)} Comparison of linear and nonlinear probes in the single-seed exploratory Edema study. 
        \textbf{(b--c)} Summary of DANN and CDANN experiments across adversarial strengths and seeds. 
        Chance probe AUROC does not establish independence, and marginal probes alone do not assess conditional invariance.}
    \label{fig:demographic-erasure-diagnostics}
\end{figure*}

\begin{table*}[t]
\centering
\small
\setlength{\tabcolsep}{5pt}
\renewcommand{\arraystretch}{1.2}
\begin{tabularx}{\textwidth}{@{} >{\raggedright\arraybackslash}p{2.6cm} c c c c >{\raggedright\arraybackslash}X @{}}
\toprule
\textbf{Method} & \textbf{Configs.} & \textbf{Task AUROC} $\uparrow$ & \textbf{Probe AUROC} $\downarrow$ & \textbf{Pass} & \textbf{Main Finding / Limitation} \\
\midrule
\multicolumn{6}{@{}l}{\textit{\textbf{Baselines and Controls}}} \\
ERM & 1 & 0.840 & 0.961 & 0/1 & Strong demographic leakage. \\
ERM adapters & 3 & 0.833 & 0.554--0.803 & 0/3 & Compression reduced leakage but did not meet the criterion. \\
PCA + ERM & 2 & 0.838 & 0.813--0.886 & 0/2 & Substantial residual leakage. \\
Constant output & 1 & 0.500 & 0.502 & 1/1 & Trivial removal through complete task collapse. \\
\midrule
\multicolumn{6}{@{}l}{\textit{\textbf{Learned Representations}}} \\
MMD & 5 & 0.802--0.819 & 0.514--0.650 & 1/5 & The 2-D setting passed development and test auditing. \\
HSIC & 5 & 0.774--0.818 & 0.521--0.619 & 1/5 & The 2-D setting passed, with reduced task utility. \\
Adversarial ensemble & 3 & 0.812--0.828 & 0.521--0.771 & 1/3 & Strongest setting passed test but missed development criterion. \\
Variational bottleneck + MMD & 3 & 0.804--0.806 & 0.502--0.536 & 3/3 & Single-sample passes; mean and repeated-release audits failed. \\
LFR & 3 & 0.834--0.836 & 0.610--0.758 & 0/3 & Retained demographic information. \\
NF & 6 & 0.771--0.838 & 0.561--0.854 & 0/6 & Reduced leakage but did not meet strict criterion. \\
\midrule
\multicolumn{6}{@{}l}{\textit{\textbf{Post-Hoc Representation Transformations}}} \\
Gaussian moment transport & 2 & 0.832--0.839 & 0.650--0.853 & 0/2 & Moment matching left demographic information recoverable. \\
LEACE & 2 & 0.836--0.838 & 0.962--0.963 & 0/2 & Strong residual leakage in the tested implementation. \\
LEOPARD & 3 & 0.829--0.834 & 0.834--0.876 & 0/3 & Nonlinear probes recovered substantial information. \\
RFF + LEACE & 3 & 0.818--0.836 & 0.831--0.905 & 0/3 & Near-chance linear probes concealed nonlinear leakage. \\
Obliviator & 3 & 0.749--0.759 & 0.500--0.514 & 3/3 & Only one setting met AUROC-loss budget; AUPRC fell substantially. \\
\midrule
\multicolumn{6}{@{}l}{\textit{\textbf{Score-Only Mechanisms}}} \\
Sex-blind score channel & 3 & 0.825--0.829 & 0.510--0.512 & 3/3 & Also passed repeated-release auditing; backbone unchanged. \\
Sex-aware score channel & 3 & 0.836--0.837 & 0.513--0.517 & 3/3 & Repeated releases exposed demographic information. \\
\bottomrule
\end{tabularx}
\caption{\textbf{Exploratory demographic-erasure experiments on CheXpert Edema} at natural prevalence (one training seed; 51 configurations). Results do not reflect general limitations of the evaluated methods.}
\label{tab:chexpert-erasure-exploration}
\end{table*}

\subsection{Additional Details: Clarifications}

\paragraph{Regarding using sensitive attributes $A$.} In Section.~\ref{sec:problem_settings}. Acquisition variables such as scanner, site, or protocol are nuisance factors that can affect $Z$, which is unwanted, and are handled by domain-invariant methods. We use $A$ to denote demographic attributes in this work that apply to people; usage and interpretation may differ if we consider nuisance variables.

\end{document}